\DeclareMathOperator{\Var}{Var}
\theoremstyle{definition}
\newtheorem{theorem}{Theorem}[section]
\newtheorem{remark}{Remark}[section]
\newtheorem{lemma}{Lemma}[section]
\newcommand{\graphgpt}{GraphART}
\newcommand{\graphrnn}{GraphLSTM}
\newcommand{\ehd}{\text{EHD}}
\title{Let There Be Order: Rethinking Ordering in Autoregressive Graph Generation}
\author{%
  Jie Bu \\
  Department of Computer Science\\
  Virginia Tech\\
  Blacksburg, VA 24060 \\
  \texttt{jayroxis@vt.edu} \\
  \And
  Kazi Sajeed Mehrab \\
  Department of Computer Science\\
  Virginia Tech\\
  Blacksburg, VA 24060 \\
  \texttt{ksmehrab@vt.edu} \\
  \And
  Anuj Karpatne \\
  Department of Computer Science\\
  Virginia Tech\\
  Blacksburg, VA 24060 \\
  \texttt{karpatne@vt.edu} \\
}
\begin{document}

\maketitle

\begin{abstract}
Conditional graph generation tasks involve training a model to generate a graph given a set of input conditions. Many previous studies employ autoregressive models to incrementally generate graph components such as nodes and edges. However, as graphs typically lack a natural ordering among their components, converting a graph into a sequence of tokens is not straightforward. While prior works mostly rely on conventional heuristics or graph traversal methods like breadth-first search (BFS) or depth-first search (DFS) to convert graphs to sequences, the impact of ordering on graph generation has largely been unexplored. This paper contributes to this problem by: (1) highlighting the crucial role of ordering in autoregressive graph generation models, (2) proposing a novel theoretical framework that perceives ordering as a dimensionality reduction problem, thereby facilitating a deeper understanding of the relationship between orderings and generated graph accuracy, and (3) introducing "latent sort," a learning-based ordering scheme to perform dimensionality reduction of  graph tokens. Our experimental results showcase the effectiveness of latent sort across a wide range of graph generation tasks, encouraging future works to further explore and develop learning-based ordering schemes for autoregressive graph generation.
\end{abstract}

\section{Introduction}

We consider the problem of generating graphs given conditional inputs, where there exists precisely one target graph we are interested in generating for every conditional input. We call this problem as \textit{paired} graph generation,  since every input condition is paired with exactly one target graph. This setting is highly relevant in many real-world applications such as predicting the road network of a region given its satellite image \cite{belli2019image}, generating scene graphs with semantic relationships among objects given an image of the objects \cite{lu2021context, yang2022psg, li2022sgtr}, and inferring the underlying circuit graph of an electrical instrument given its sensor observations \cite{he2019circuit}. This is different from the \textit{unpaired} setting of graph generation problems studied by several previous works, where we are interested in generating distributions of graphs rather than a specific target graph.


The problem of graph generation has some similarity to natural language generation \cite{brown2020language, ouyang2022training, bubeck2023sparks}, since they both involve generating outputs with variable sizes. However, a key distinction is that there is no natural ordering of graph components (e.g., nodes and edges) as opposed to the sequential nature of words in a sentence. Graph generation is thus a particularly challenging problem due to the inherent diversity of graphs involving varying sizes and different topologies. Many previous approaches for graph generation \cite{li2018learning, you2018graphrnn, liao2019efficient, belli2019image, liu2019auto} have adopted autoregressive models such as recurrent neural networks \cite{hochreiter1997long, cho2014learning} and auto-regressive transformers \cite{vaswani2017attention, radford2018improving, radford2019language} to handle the variable sizes and topologies in graphs. These models often convert graph components into vector representations or \textit{tokens}, and incrementally generate tokens representing graph components in an ordered sequence. 

However, existing autoregressive models for graph generation face unique challenges in sorting multi-dimensional graph tokens into ordered sequences. As graph components often have no natural ordering \cite{hamilton2017inductive}, there can be $n!$ possible orderings for $n$ graph tokens. It has been shown by \cite{vinyals2015order, chen2021order, liao2019efficient} that finding an optimal ordering of unsorted elements in a set is key to the success of autoregressive models, since certain orderings may be favored by the models leading to better performance. Despite this, existing works on autoregressive graph generation \cite{you2018graphrnn, liao2019efficient, belli2019image, liu2019auto} make arbitrary choices for ordering graph components based on graph traversals such as breadth-first-search (BFS) or depth-first-search (DFS) without fully justifying their choices empirically or theoretically. 

In this work, we present a novel perspective to address this issue by reframing the problem of ordering a set of graph tokens as that of performing dimensionality reduction (DR). Specifically, by learning a generalizable mapping of every possible graph token to a 1-D representation, we theoretically and empirically show that a set of graph tokens can be conveniently ordered by sorting their 1-D values.
Our contributions can be summarized as follows:
\begin{itemize} \vspace{-0.03cm}
    \item 
    We develop novel theoretical frameworks to study the accuracy of DR techniques for sorting unordered tokens in autoregressive models, opening a new line of research in developing learning-based ordering schemes for paired graph generation. 
    \item We propose the \textit{latent sort} algorithm, a novel learning-based ordering scheme employing DR for autoregressive graph generation. We provide theoretical bounds of the errors of latent sort and highlight an  interesting connection between performing latent sort and finding approximate solutions to the shortest path problem. 
    \item We propose a strong backbone network for autoregressive graph generation named Graph Auto-Regressive Transformer (\graphgpt{}) based on a modified GPT-2  architecture \cite{radford2019language}. \graphgpt{} is purely autoregressive in nature and does not use any graph-specific architectural components such as graph neural networks (GNNs), in contrast to  prior works.
    \item We empirically show that different token ordering schemes result in nontrivial performance gaps in autoregressive models such as \graphgpt{}. We also show that Latent Sort is versatile and achieves competitive performance over a wide range of graph generation tasks. 
\end{itemize}

\section{Related Works}

A number of approaches have been developed for graph generation in a probabilistic setting where the goal is 
to generate a distribution of graphs that resemble a target distribution given conditional inputs
\cite{li2018learning, you2018graphrnn, bojchevski2018netgan, liu2018constrained, ma2018constrained, liao2019efficient, yang2019conditional, jo2022score, vignac2022digress}. These works fall in the ``unpaired'' graph generation category, since there is no unique ground-truth target graph that is paired with every input condition. Hence, the evaluation of performance of these works typically focuses on measuring the validity of the generated graphs in terms of statistical similarity with the target distribution, rather than measuring deviation from its corresponding ground-truth graph in a ``paired'' fashion, which is the focus of our work.

One line of work for paired graph generation involves using non-autoregressive approaches to generate graphs \cite{zaheer2017deep, zhang2019deep, kosiorek2020conditional, yang2022psg, li2022sgtr}. These approaches generate entire graphs rather than individual components, which are then matched to the target graphs using bipartite matching algorithms (e.g., Hungarian matching in DETR \cite{carion2020end} for object detection).
However, bipartite matching loss functions can slow down training due to their high computational costs \cite{jonker1988shortest} and can potentially suffer from slow convergence \cite{sun2021rethinking, zhang2022accelerating}. Moreover, these models cannot direclty handle output graphs with varying sizes and have to resort to padding, making them inefficient for real-world applications.

Another line of work involves autoregressive models to generate variable-length sequences of graph tokens. As a necessary preprocessing step, these methods require an approach to sequentialize target graphs into an ordered sequence of tokens, where finding the ideal scheme for token ordering can be non-trivial. Vinyals et al. \cite{vinyals2015order} were the first to reveal the importance of ordering when using sequence-to-sequence models to process sets. They proposed a training algorithm that searches for the optimal order, but scalability issues would arise with large sets, and inexact searches or sampling methods could negatively impact performance.
More recently, Chen et al. \cite{chen2021order} discuss the importance of ordering for unpaired autoregressive graph generation, where they derive the joint probability over the graph for sorting the nodes.
To sequentialize graphs, 
most of the existing works favor traditional graph traversal methods like BFS \cite{you2018graphrnn, liao2019efficient} or DFS \cite{jin2018junction, liu2019auto, liao2019efficient} . 
However, to the best of our knowledge, no prior work has theoretically and empirically compared the performance of autoregressive models for graph generation with varying token ordering schemes, which is one of the contributions of our work. 

\section{Rethinking Sorting As a Dimensionality Reduction Problem}
\label{sec:sorting_as_dr}

Given a set of $M$ unordered points (or vector tokens) $\mathcal{X} \subseteq \mathbb{R}^N$,  $|\mathcal{X}| = M$ in an $N$-dimensional space ($N > 1$), we are interested in finding the ``optimal'' ordering of points in $\mathcal{X}$, denoted by $Y^* \in \mathbb{R}^{M \times N}$, which when used as the target sequence to supervise an autoregressive model yields optimal performance in generating $\mathcal{X}$.
Formally, let us denote the ordered sequence $Y^*$ as the matrix $\left[\bm{x_1}^*, \bm{x_2}^*, ..., \bm{x_M}^* \right]^{\intercal}$, where every row $\bm{x_i}^*$ of $Y^*$ is a point in $\mathcal{X}$ and $i$ denotes its sorted index in the optimal ordering. For now, we assume that such an ordering exists for every set $\mathcal{X}$. Later, in Section \ref{sec:optimal_sort}, we will describe some of the ideal properties of $Y^*$. 

We are interested in finding the optimal ordering $Y^*$ by sorting high-dimensional vector tokens in $\mathcal{X}$.
This is challenging due to the absence of a well-defined comparison operation in the $N$-dimensional space of vector tokens, resulting in a factorial number of possible orderings to be evaluated that is computationally prohibitive. To tackle this issue, we consider sorting points in $\mathcal{X}$ as a dimensionality reduction (DR) problem, as described in the following.

\subsection{DR-based Sorting Algorithms}

    
\begin{wrapfigure}{h}{0.7\textwidth}
    \small
    \vspace{-4ex}
    \centering
    \includegraphics[width=0.7\textwidth]{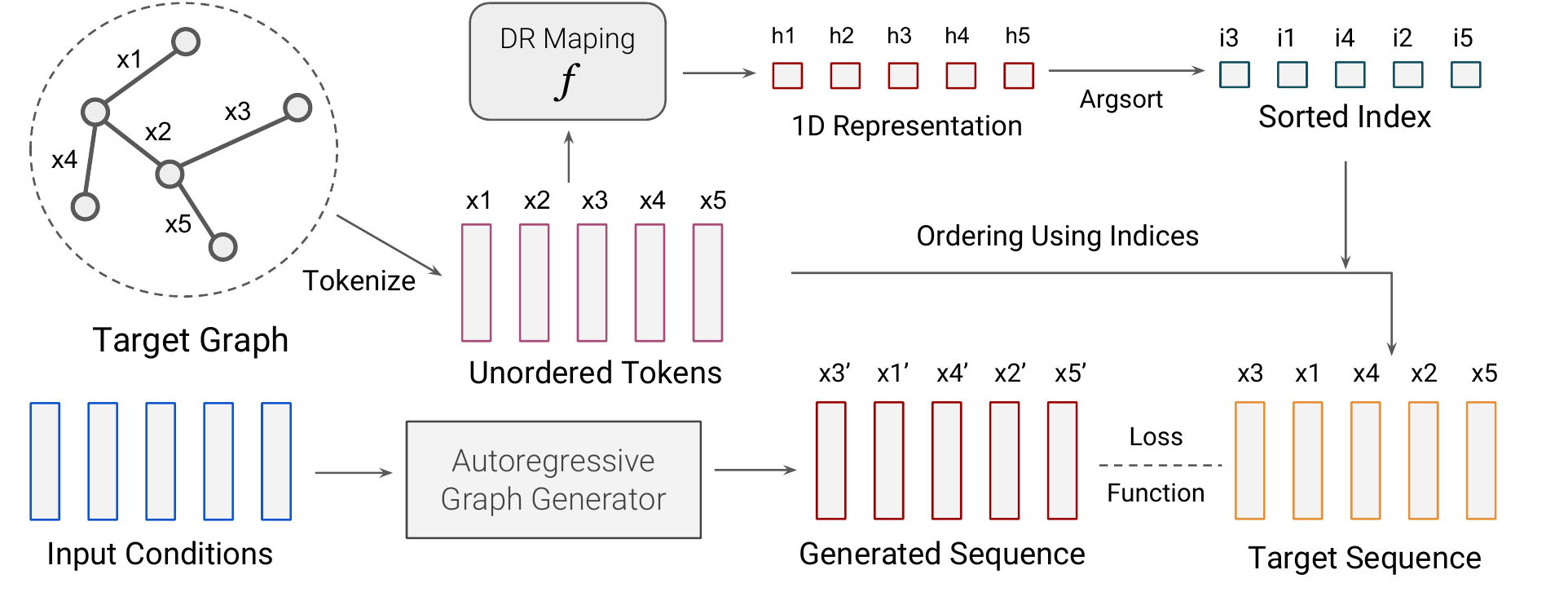}
    \caption{A general pipeline for sorting algorithms using dimensionality reduction (DR) mapping $f$. 
    }
    \label{fig:latent_sort}
    \vspace{-1ex}
    \end{wrapfigure}

Figure \ref{fig:latent_sort} provide an overview of the generic DR-based sorting pipeline for any DR mapping $f$. 
The key insight that we leverage here is that while sorting points is ill-defined in $N$-dimensional space, it is well-defined on a 1-D space (see Remark \ref{remark:sort_1d_well_defined} in Appendix). Hence, if we can learn a generalizable DR mapping from every possible token $\bm{x}$ to a corresponding 1D representation $h$, we can order tokens in any arbitrary set $\mathcal{X}$ by sorting their 1D representations. We can thus define DR-based sorting algorithms as follows.

\begin{restatable}[DR-based Sorting Algorithm]{definition}{defSorting}
\label{def:sorting_as_dr}
 Given a sorting algorithm $s: \mathcal{X} \rightarrow Y$, let $Y = \left[\bm{x_1}, \bm{x_2}, ..., \bm{x_M}\right]^{\intercal}$ be the resulting ordered sequence for the unordered set $\mathcal{X} \subseteq \mathbb{R}^N$. We can then represent the sorting algorithm $s$ by a DR mapping $f: \mathbb{R}^N \rightarrow \mathbb{R}$, such that the ordering in the original $N$-D space is given by sorting their 1-D values, $h_i = f(\bm{x_i}) \in \mathbb{R}$, and $\forall i \leq j \leq M, f(\bm{x_i}) \leq f(\bm{x_j})$.
\end{restatable}

Note that the DR mapping $f$ maps every token $ \bm{x_i}$ to a universal position $h_i$ in the latent space, regardless of the combination of tokens in $\mathcal{X}$ that it appears with. Ordering $\mathcal{X}$ thus reduces to finding a sorted path traversal in the 1D latent space. 
Thus, we formulate the problem of finding the optimal sorting $s^*: \mathcal{X} \rightarrow Y^*$ as finding an optimal DR mapping $f^*: \mathbb{R}^N \rightarrow \mathbb{R}$, $h_i^* = f^*(\bm{x_i}^*)$. 
While sorting itself is challenging to perform in a differentiable way, finding differentiable approaches for DR is relatively easier. This opens up new possibilities for developing learning-based methods to order points in high-dimensional spaces. For simplicity, we assume the 1-D representations of all the points in $\mathcal{X}$, denoted by $\mathcal{H}$, are normalized to $[0, 1]$.

\textbf{Latent Sort Algorithm}: We propose latent sort as our DR-based sorting algorithm, where the DR mapping $f$ is represented by an auto-encoder model consisting of an MLP encoder $f_e$ and an MLP decoder $f_d$. We first train $f_e$ and $f_d$ to reconstruct all tokens in the dataset. We then freeze $f_e$  and plug it in the pipeline shown in Figure \ref{fig:latent_sort} as the DR mapping $f$ for sorting. The property of latent sort will be analyzed in following sections.

\subsection{Analyzing The Errors of DR-based Sorting Algorithms}

To quantify the errors in a sorting algorithm $s$,
we introduce a probability matrix $P \in \mathbb{R}^{M \times M}$ where each entry $p_{ij}$ of $P$ represents the probability of $\bm{x_i}$ being $\bm{x_j}^*$. Thus, we have $\mathbb{E}[Y] = PY^*$, and $P$ can be viewed as a soft permutation matrix that permutes $Y^*$ into $Y$. We use the Frobenius norm of the difference between $\mathbb{E}[Y]$ and $Y^*$ as the expected error in a sorting algorithm, 
\begin{equation}\vspace{-0.05cm}
\mathcal{E}\Big(\mathbb{E}[Y], Y^*\Big) = \big\Vert \mathbb{E}[Y] - Y^* \big\Vert_F^2 = \big\Vert PY^* - Y^* \big\Vert_F^2,
\label{eq:sorting_error_def}
\end{equation}
where $\Vert \cdot \Vert_F$ denotes the Frobenius norm. 
We can see that two factors contribute to the error term $\mathcal{E}$:  $P$ and $Y^*$. It is obvious how $P$ affects the error, as $P$ measures the deviation from the optimal ordering $Y^*$, which reflects the property of an ordering method. We show that minimizing $\mathcal{E}\big(\mathbb{E}[Y], Y^*\big)$ is equivalent to minimizing $\Vert P - I_M \Vert_F^2$ (see Lemma \ref{lemma:optimization_equivalence} in the Appendix). On the other hand, it is less straight-forward to see  how the choice of $Y^*$ affects the error, which we will discuss in Section \ref{sec:optimal_sort}. Here, we give two sources of errors that can lead to suboptimal $P$.

\begin{wrapfigure}{h}{0.5\textwidth}
    \centering
    \small
    \vspace{-2ex}
    \includegraphics[width=0.5\textwidth]{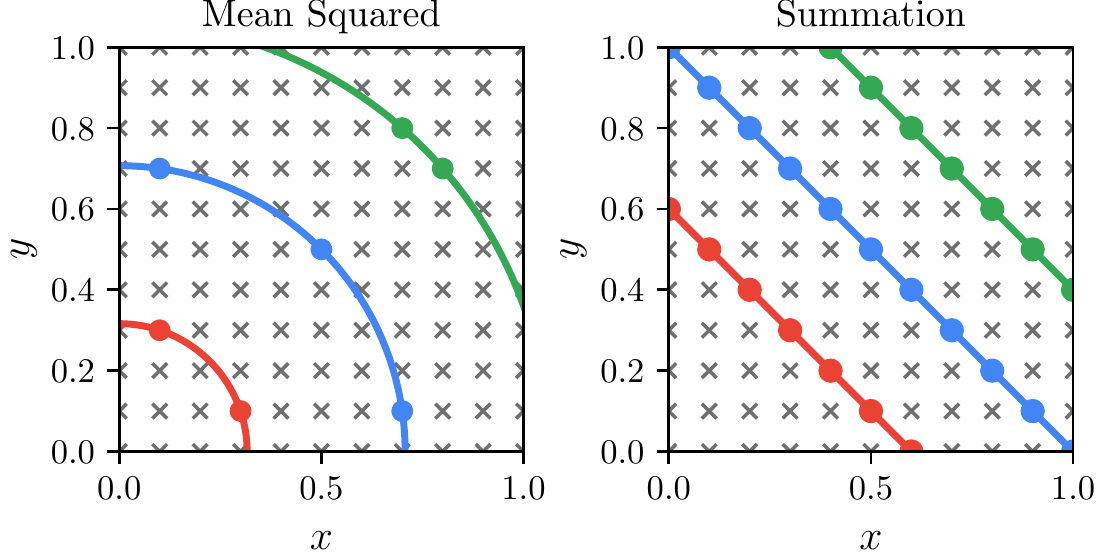}
    \caption{Illustration of the ordering ambiguity on 2D points within $[0, 1]$. Colored contour lines represent points with same 1D latent value, i.e., they belong to the same ordering ambiguity set.}
    \label{fig:sorting_ambiguity}
    \vspace{-1ex}
\end{wrapfigure}

\textbf{Errors from Ordering Ambiguity}: The first source of errors is the ordering ambiguity. An ideal mapping $f^*$ is one that does not result in any ambiguity in sorting, i.e.,  $f^*(\bm{x_i}) = f^*(\bm{x_j})$ iff $i = j$. However, such an ideal mapping only exists for certain conditions (see Remark \ref{remark:bijective_dimensionality_reduction} in Appendix), otherwise $f$ is surjective (non-bijective) even if it has no reconstruction errors. Thus, it is possible that a DR mapping $f$ assigns the same 1-D representation to different points in $\mathcal{X}$, which can cause ambiguity in sorting these points. To describe this ambiguity, we introduce the concept of \textit{ordering ambiguity set}. 

\begin{restatable}[Ordering Ambiguity Set]{definition}{defSortAmbiguity}
\label{def:sort_ambiguity}
Let $\mathcal{A}_i$ be the ordering ambiguity set for a point $\bm{x_i} \in \mathcal{X}$, where $\mathcal{A}_i = \left\{\bm{x_j} | \forall \bm{x_j} \in \mathcal{X}, f(\bm{x_i}) = f(\bm{x_j}) \right\}$.
\end{restatable}

Ordering ambiguity occurs when there exists more than one element in the ordering ambiguity set, i.e., $|\mathcal{A}_i| > 1$. Different DR mappings , can have varying ordering ambiguity patterns. Figure \ref{fig:sorting_ambiguity} shows ordering ambiguity patterns in 2D space for two DR mappings: (i) Mean Squared Sort: $h_i = (x_{i1}^2 + x_{i2}^2) / 2$, (ii) Summation Sort: $h_i = x_{i1} + x_{i2}$. 
 
To quantify the error introduced by ordering ambiguity, we consider the expectation of the ambiguity set $\mathcal{A}_i$ for each point $\bm{x_i} \in \mathcal{X}$. It is reasonable to assume that each row $i$ of $P$ is a uniform distribution over the corresponding ambiguity set $\mathcal{A}_i$ (i.e., $\forall \bm{x_j}^* \in \mathcal{A}_i$, $p_{ij} = {1}/{|\mathcal{A}_i|}$ and $\forall \bm{x_j}^* \not\in \mathcal{A}_i$,  $p_{ij} = 0$. Putting this in Equation (\ref{eq:sorting_error_def}), we get (see Section \ref{sec:ap_sorting_ambiguity} in Appendix for derivation):
\begin{align}\vspace{-0.08cm}
\mathcal{E}\Big(\mathbb{E}[Y], Y^*\Big) &= \sum_{i=1}^M \Bigg\Vert \left(\frac{1}{|\mathcal{A}_i|} \sum_{\bm{x_j} \in \mathcal{A}_i} \bm{x_j} \right) - \bm{x_i}\ \Bigg\Vert^2
\label{eq:error_sorting_ambiguity}
\end{align}

\textbf{Errors From Imperfect Reconstruction In Latent Sort}: Since latent sort trains an autoencoder using reconstruction loss for DR mapping, it is beneficial to discuss the potential errors from imperfect reconstructions (See Section \ref{sec:latent_sort_theory} and \ref{sec:ap_latent_sort_error} in the Appendix for details). Imperfect reconstruction leads to a discrepancy between the reconstructed token $\hat{x}$ and the original token $x$. Assuming the reconstruction follows normal distribution, the reconstruction error will also result in a normal distributed error with bounded mean and variance in the latent space (See Theorem \ref{th:error_latent_normal_distribution} in Appendix). This error can affect the ordering results when two points swap their positions in the sequence due to their 1-D latent values crossing each other. The probability of swap can be quantified by examining the overlap between their 1-D latent distributions. Since the 1-D latent values follow Gaussian distribution, and the overlaps of normal distributions are very small if the mean values of the distributions are far from each other, we only consider the swapping between neighboring points since the non-neighboring points will have larger mean value differences and thus are less likely to be swapped (See Section \ref{sec:ap_minization_of_errors} in Appendix for details).

\subsection{The Shortest Path Property of Desirable Ordering Schemes}
\label{sec:optimal_sort}

In previous theories, we have discussed the error of sorting algorithms using the probability matrix $P$. In this section, we  discuss the desired properties of $Y^*$ that minimize the errors induced by given $P$ matrices. 

From Section \ref{sec:ap_optimal_sort} in the Appendix, we have the simplified equation for error $\mathcal{E}$ as follows:
\begin{align}\vspace{-0.05cm}
\mathcal{E}\Big(\mathbb{E}[Y], Y^*\Big) = \sum_{i=2}^{M-1} \left\| p_{i(i-1)} (\bm{x_{i-1}}^* - \bm{x_{i}}^*) + p_{i(i+1)} (\bm{x_{i+1}}^* - \bm{x_{i}}^*) \right\|_F^2
\label{eq:error_using_neighbors} \vspace{-0.05cm}
\end{align}
Recall that $f$ maps each point in $\mathcal{X}$ to its 1D latent representation independently of other points. Therefore, it is unaware of the global information of the entire sequence. That being said, we expect (\ref{eq:error_using_neighbors}) to be minimized for any three points $\{ \bm{x_{i-1}}^*, \bm{x_{i}}^*, \bm{x_{i+1}}^* \} \subset \mathcal{X}$. To make $\mathcal{E}\big(\mathbb{E}[Y], Y^*\big)$ minimized for all possible $\mathcal{X} \subseteq \mathbb{R}^{N}$, we need to minimize the upper bound of (\ref{eq:error_using_neighbors}), giving by:
\begin{align} \vspace{-0.06cm}
&\ \ \ \ \sum_{i=2}^{M-1} \left\| p_{i(i-1)} (\bm{x_{i-1}^*} - \bm{x_{i}}^*) + p_{i(i+1)} (\bm{x_{i+1}}^* - \bm{x_{i}}^*) \right\|_F^2 \\
&\leq \sum_{i=2}^{M-1} \left\| p_{i(i-1)} (\bm{x_{i-1}}^* - \bm{x_{i}}^*) \right\|_F^2 + \sum_{i=2}^{M-1} \left\| p_{i(i+1)} (\bm{x_{i+1}}^* - \bm{x_{i}}^*) \right\|_F^2, \label{eq:error_using_neighbors_2} \vspace{-0.06cm}
\end{align}
assuming both $p_{i(i-1)}$ and $p_{i(i+1)}$ are nonzero for each $i$. In order to minimize errors from imperfect $P$, the ideal sorting $Y^*$ should minimize $(\bm{x_{i-1}}^* - \bm{x_{i}}^*)$ and $(\bm{x_{i+1}}^* - \bm{x_{i}}^*)$, i.e., minimize the distances between neighboring pairs. This property connects to the shortest path problem or traveling salesman problem (TSP), with the target sorting being the TSP solution on the input point set $\mathcal{X}$. However, TSP is NP-hard and solving it is computationally expensive. Using TSP solutions also induces difficulties for the autoregressive models to capture the ordering rule represented by the TSP solving algorithms. Thus, an approximation to the TSP solution should provide a balanced option between simplicity for learning and error tolerance.

Theorem \ref{th:bounded_data_distance} and \ref{th:bounded_latent_distance} in the Appendix state the correlation between the distance between two points and their 1-D latent distance. Since neighboring points in the sorted sequence have the closest latent values, the distance between neighboring points in the sorted sequence are expected to be small when the decoder is properly regularized. However, our experiments suggest that reconstruction loss and regularization techniques alone may not sufficiently approximate shortest path solutions. To further enhance the shortest path property in the latent sort algorithm, we introduce the \textit{latent gradient penalty} (LGP) loss given by:
\begin{align}
\text{LGP}(\mathcal{X}, f_e) = \min_{f_e} \sum_{i = 2}^{M - 1} \sum_{j = 2}^{M - 1} {g_{ij}^2}, & & \text{where}\ g_{ij} = \begin{cases}
\frac{\|\bm{x_i} - \bm{x_j}\|}{|h_i - h_j| + \beta} - \alpha & \text{if} \ |i - j| = 1, \\
0 & \text{otherwise}
\end{cases}
\end{align}
where $f_e$ is the latent sort encoder, and $\alpha$ and $\beta$ are positive constants. The LGP aims to make the 1-D latent value distance $|h_i - h_j|$ between two neighboring points proportional to their distance $\|\bm{x_i} - \bm{x_j}\|$ in the original space, with proportionality constant constrained by $\alpha$ (we chose $\alpha = 1$). Without extra justification, the latent sort algorithm in the paper are by default trained with LGP.

\section{Experiment Setups}
We briefly summarize the experiment setups in this section (see Appendix  \ref{sec:experiment_details} for complete details of the processing pipelines for full reproducibility). All codes are publicly on Github \footnote{\url{https://github.com/jayroxis/ordering-in-graph-generation}}.

\subsection{Application Tasks and Datasets}

\begin{figure}[ht]
    \centering
    \begin{subfigure}[b]{0.68\textwidth}
        \centering
        \includegraphics[width=\textwidth]{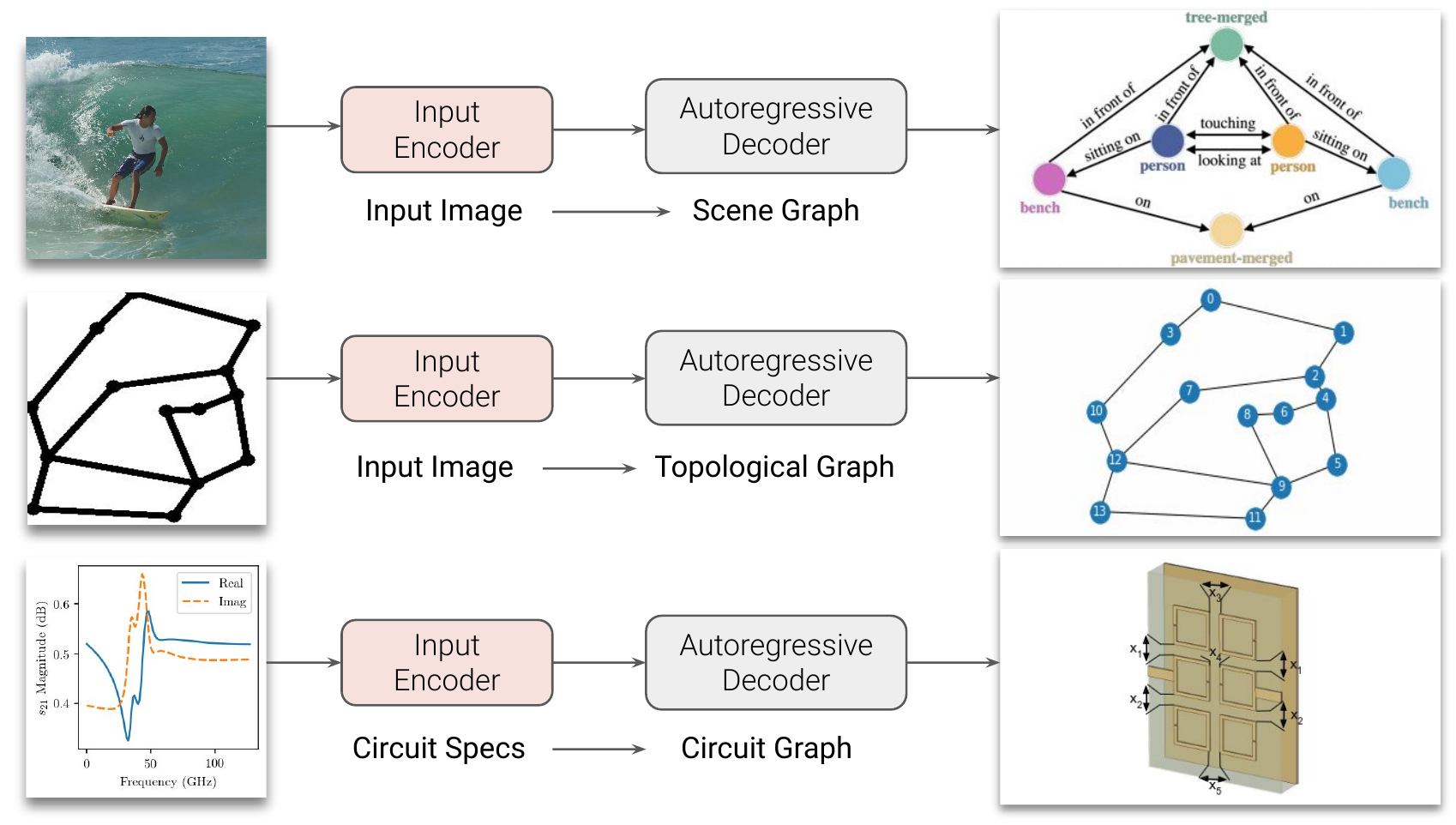}
        \caption{Paired Graph Generation Applications and Pipelines.}
        \label{fig:pipelines}
    \end{subfigure}
    \begin{subfigure}[b]{0.29\textwidth}
        \centering
        \includegraphics[width=\textwidth]{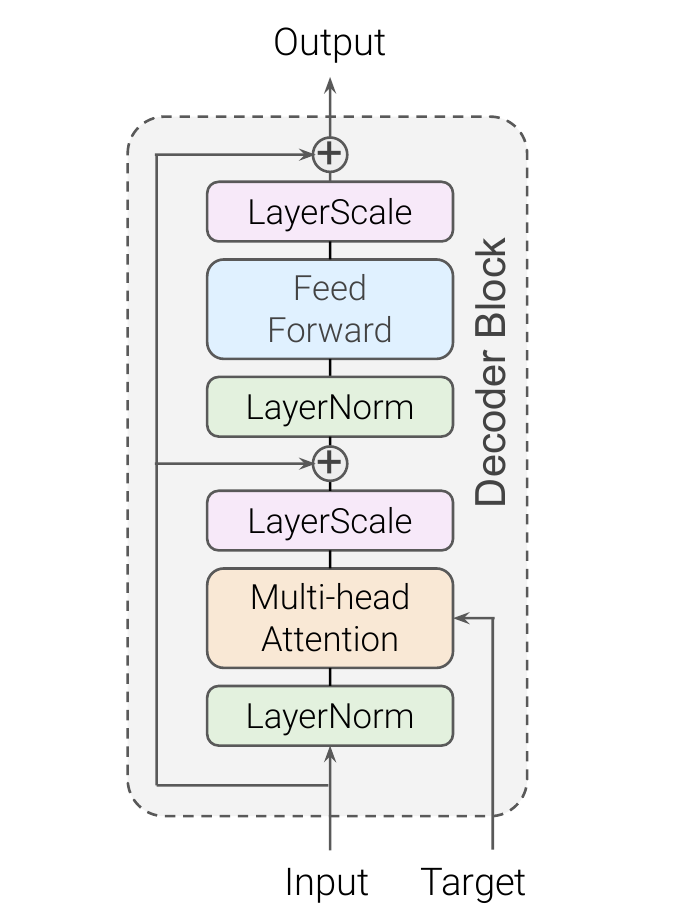}
        \caption{ \graphgpt{} Decoder Block}
        \label{fig:decoder_block}
    \end{subfigure}
    \caption{ Left: schematic views of the three paired graph generation tasks  (from top to bottom): scene graph generation \cite{yang2022psg}, topological graph extraction \cite{belli2019image}, and circuit graph prediction \cite{he2019circuit}. Right: The decoder block of the proposed \graphgpt{} model using multi-head causal attention architecture.}
    \label{fig:applications}
\end{figure} \vspace{-0.08cm}

We consider the following three application tasks for paired graph generation (see Figure \ref{fig:pipelines}).

\textbf{Scene Graph Generation}: The goal of this task is to take an image as input and generate a scene graph as output. We use the OpenPSG dataset \cite{yang2022psg}, which consists of semantic-level triplets including two objects (nodes) and predicates (edges) between them. 

\textbf{Topological Graph Extraction}: The goal here is to predict the spatial location of nodes in a graph and the undirected edges between them, given an image representation of the graph. We use the Toulouse Road Network (TRN) dataset \cite{belli2019image} and our synthetic Planar Graph dataset, which can randomly generate image-graph pairs that are more challenging than the ones in the TRN dataset. We average the results over 10 random runs for every method on the Planar Graph dataset.

\textbf{Circuit Graph Prediction}: For this task, we consider the Terahertz channelizer dataset \cite{he2019circuit}, which comprises 347k circuit graphs. The goal is to predict the ground-truth circuit graph comprises of 3 to 6 resonators, given desirable electromagnetic (EM) properties as inputs. Each resonator is represented by a 9-dimensional vector. Since the scale of some dimensions in the resonator vectors is much larger than others, which may dominate the prediction errors, we normalize the data using min-max normalization per dimension. We report the errors on the normalized data.

\subsection{Models and Pipelines}
\label{sec:models_and_pipelines}

Additional details for the baseline models can be found in Section \ref{sec:experiment_details} in the Supplementary.

\textbf{Backbone Models}: We propose two backbone models for paired graph generation: \graphgpt{} and \graphrnn{}. These models rely solely on their autoregressive nature to sequentially predict graph tokens, without involving graph-specific architecture components such as graph neural networks (GNNs). Both models consist of an encoder to encode the input conditions (e.g., images) and an autoregressive decoder for graph generation. The decoder of \graphgpt{} is based on our modified version of the GPT-2 architecture, as shown in Figure \ref{fig:decoder_block}. It incorporates pre-normalization \cite{xiong2020layer} and LayerScale \cite{touvron2021going} designs. In \graphrnn{}, we simply replace the autoregressive transformer of \graphgpt{} with a long short-term memory (LSTM) model \cite{hochreiter1997long}. These models can be coupled with any sorting algorithm for ordering graph tokens during training.

\textbf{Baseline Sorting Algorithms:} (1) \textit{Mean Squared Sort}: $n$-dimensional tokens are sorted based on the mean squared value of their $n$ dimensions. This heuristic sorts the graph tokens based on their overall magnitudes, from larger to smaller $L^2$ norms. 
(2) \textit{Lexicographical Sort}: We start by sorting tokens based on their first dimension, then the second dimension if there are ties, and so on until all dimensions have been considered. Note that this heuristic assumes a meaningful ordering of dimensions in the data, which may not be true.
(3) \textit{SVD Low-Rank Sort}: We perform a Singular Value Decomposition (SVD) of the tokens in a graph and sort the tokens by projecting them onto the direction of maximum variance, identified using the largest principal component. This serves as a linear baseline for dimensionality reduction, as opposed to the non-linear Latent Sort.
(4) \textit{BFS or DFS Sort}: We use the breadth-first search (BFS) or depth-first search (DFS) traversal on edges, implemented by NetworkX \cite{aric2008networkx}.

\textbf{Graph Tokenization Pipeline}: To address the potential disadvantages of transformers attending different token distributions for each category of graph components (e.g., nodes and edges), as advocated in \cite{touvron2021going}, we implemented a novel edge-based tokenization approach. This approach is similar to, but distinct from, the method used in Graph Transformers \cite{ying2021transformers, kim2022pure}. Specifically, whenever possible, we concatenated the features of each edge with graph-level features and the features of the two connecting nodes to create edge-based tokens. For example, in the topological graph extraction task where only node coordinates are available as features, we constructed edge tokens by combining the coordinates of the two connecting nodes for each edge. In the scene graph generation task, we followed the convention of using one-hot encoded triplets as edge tokens. In the circuit graph prediction task, we utilized a 9-dimensional vector as a token to represent each node (resonator), as employed in \cite{he2019circuit}. 

\textbf{Loss Functions}:
We trained all autoregressive models using a teacher forcing setup. For directed graphs, we calculated the loss between each element in the ground-truth sequence $Y_{gt}$ and its corresponding element in the predicted sequence $Y_{pred}$ using an appropriate choice of loss function $\mathcal{L}(Y_{pred}, Y_{gt})$.
For circuit extraction, we employed the elastic loss function, which is the sum of L1 and MSE loss, as $\mathcal{L}$. In scene graph generation, we used the binary cross-entropy (BCE) loss instead of cross-entropy (CE) loss, following previous works \cite{beyer2020we, wightman2021resnet}.
For the topological graph extraction task involving undirected graphs, we utilized the elastic loss as $\mathcal{L}$, but we considered its undirected variant $\mathcal{L}_{u}$. This variant accounts for swapping the node features in every edge token. The undirected graph loss is defined as follows:$
    \mathcal{L}_{u}(Y_{pred}, Y_{gt}) = \mathcal{L}(Y_{pred}, Y_{gt}) + \mathcal{L}(Y_{pred}, \overline{Y}_{gt}) 
$, 
where $\overline{Y}_{gt}$ is the sequence where the two node features in every edge token are swapped. 

\textbf{Evaluation Metrics}:
During testing, we employed the Edge Mover distance (EMD), Edge Hausdorff distance (EHD), and StreetMover Distance (SMD) from \cite{belli2019image} as evaluation metrics to assess the similarity between predicted graphs and ground-truth graphs.
The EMD calculates the average of the minimum distances between each point in the sequence $Y_{pred}$ and its nearest neighbor in the set $Y_{gt}$. The EHD measures the maximum distance between the closest points of the two sequences. On the other hand, the SMD is based on point cloud distances. Further details on these metrics can be found in Supplementary \ref{sec:experiment_details}.
For scene graph generation, we consider precision, recall, and F-1 scores as the predicted sequences consist of multi-class labels. Additionally, we report the differences in sizes between the predicted graphs and the target graphs as another metric in each application.







\section{Experimental  Results}

\subsection{Comparing Performance on Paired Graph Generation Tasks}

\begin{wraptable}{h}{0.57\textwidth}
    \centering
    \small
    \vspace{-1ex}
\caption{Toulouse Road Network Generation.}
\label{tab:road_graph_results}
\centering
\small
\begin{tabular}{ccc}
\toprule
\textbf{Model} & \textbf{Sorting / Matching} & \textbf{SMD} \cite{belli2019image} \\
\midrule
\graphgpt       & Latent Sort                      & \textbf{0.0075}                     \\
\graphgpt       & Mean Squared Sort                 & \textbf{0.0081}                     \\
\graphgpt       & Lexicographical Sort             & 0.0422                              \\
\graphgpt       & SVD Low-Rank Sort                & 0.0751                              \\
\graphgpt       & BFS Sort                         & 0.0893                              \\
\graphgpt       & DFS Sort                         & 0.0761                              \\
\midrule
GGT \cite{belli2019image} & BFS Sort   & 0.0158                                         \\
GraphRNN \cite{belli2019image, you2018graphrnn} & BFS Sort   & 0.0245                   \\ 
\bottomrule
\end{tabular}
\vspace{-0.05cm}
\end{wraptable}

Tables \ref{tab:road_graph_results} and \ref{tab:planar_graph_results} compare the performance of the Latent Sort algorithm with baseline sorting algorithms and models on the road network generation and planar graph generation tasks, respectively. The top-2 models are highlighted in bold in all tables in this paper. We observe that \graphgpt{} coupled with Latent Sort consistently demonstrates superior performance for topological graph extraction compared to baseline methods such as GGT \cite{belli2019image}, GraphRNN \cite{belli2019image, you2018graphrnn}, and GraphTR. This is noteworthy as GGT and GraphRNN utilize graph-specific architectures to predict adjacency matrices in addition to graph generation, unlike our approach. These results showcase the effectiveness of using purely autoregressive models for graph generation when combined with an appropriate graph token ordering scheme like Latent Sort.

Among the various sorting algorithms used with \graphgpt{}, Mean Squared Sort stands out for its impressive performance on both datasets, despite being a simple heuristic. This could potentially be attributed to its simplicity in sorting based on token magnitudes, which the autoregressive models can effectively capture. It highlights the importance of ordering schemes being "easy to learn" which is commonly neglected. On the other hand, BFS and DFS sort consistently exhibit inferior performance on both datasets for paired graph generation, despite being the de facto convention used in existing methods for probabilistic graph generation. We hypothesize that although BFS and DFS can generate viable distributions of graphs (e.g., molecules) in a probabilistic setting, they perform poorly when an exact match with a target graph is required. This is because they do not consider the values of graph tokens, but only their relative positions in the graph, which can be arbitrarily defined. Moreover, the 

\begin{table}
\centering
\small
\vspace{-3ex}
\caption{Planar Graph Generation.}
\label{tab:planar_graph_results}
\centering
\small
\begin{tabular}{cccccc}
\toprule
\textbf{Model}    & \textbf{Sorting / Matching} & \textbf{EMD}    & \textbf{EHD}    & \textbf{SMD}    & \textbf{Size. Diff.} \\
\midrule
\graphgpt          & Latent Sort                       & \textbf{0.0038}          & \textbf{0.0166}          & \textbf{0.0002}          & \textbf{0.0021}              \\
\graphgpt          & Mean Squared Sort                  & \textbf{0.0031}          & \textbf{0.0151}          & \textbf{0.0002}          & -0.0360              \\
\graphgpt          & Lexicographical Sort              & 0.0141          & 0.0564          & 0.0013          & -0.2434              \\
\graphgpt          & SVD Low-Rank Sort                 & 0.0381          & 0.0951          & 0.0068          & 4.8280               \\ 
\graphgpt          & BFS Sort                          & 0.0217          & 0.0579          & 0.0017          & 1.7296              \\
\graphgpt          & DFS Sort                          & 0.0213          & 0.0572          & 0.0017          & 1.6399               \\ \midrule
\graphrnn{}          & Latent Sort                     & 0.0100          & 0.0293          & 0.0006          & 0.1639               \\
\graphrnn{}          & Mean Squared Sort                & 0.0063          & 0.0211          & 0.0004          & \textbf{0.0060}               \\
\graphrnn{}          & Lexicographical Sort            & 0.0248          & 0.0609          & 0.0030          & 3.9318               \\
\graphrnn{}          & SVD Low-Rank Sort               & 0.0480          & 0.0940          & 0.0069          & 4.5328               \\ 
\graphrnn{}          & BFS Sort                        & 0.0446          & 0.0888          & 0.0059          & 3.6330              \\
\graphrnn{}          & DFS Sort                        & 0.0391          & 0.0738          & 0.0045          & 1.7268               \\\midrule
GraphTR           & Hungarian Matcher                 & 0.0069          & 0.0348          & 0.0003          & -0.0130              \\
\bottomrule
\end{tabular}
\vspace{-2ex}
\end{table}

autoregressive model needs to understand the global structure of the graph to determine the sequence ordering, adding significant complexity in learning the ordering. Additionally, we observe that \graphrnn{} generally does not perform as well as \graphgpt{}. This could be due to the advantage of transformers, which are capable of attending to longer context that help it gain better understanding on global information in graphs compared to LSTMs.

\begin{table}[ht]
\vspace{-1ex}
\caption{Semantic Scene Graph Generation on OpenPSG \cite{yang2022psg} Dataset .}
\label{tab:scene_graph_results}
\centering
\small
\begin{tabular}{cccccc}
\toprule
\textbf{Model}    & {\textbf{Sorting / Matching}} & \textbf{Additional Labels} & \textbf{Precision} & \textbf{Recall} & \textbf{F1-Score} \\
\midrule
\graphgpt      & Latent Sort           & None    & \textbf{0.2537}    & 0.1724           & \textbf{0.2053}   \\
\graphgpt      & Lexicographical Sort  & None    & \textbf{0.2425}    & \textbf{0.1867}  & \textbf{0.2110}   \\
\graphgpt      & SVD Low-Rank Sort     & None    & 0.2496             & 0.1696           & 0.2020            \\ \midrule
\graphrnn{}    & Latent Sort         & None    & 0.0115             & 0.0094           & 0.0104            \\
\graphrnn{}    & Lexicographical Sort  & None    & 0.0110             & 0.0093           & 0.0101            \\ 
\graphrnn{}    & SVD Low-Rank Sort     & None    & 0.0099             & 0.0079           & 0.0088            \\ \midrule
PSGTR \cite{yang2022psg}      & Hungarian Matcher                                     & Panoptic Segmentation         & 0.1810             & \textbf{0.2116} & 0.1920            \\
PSGFormer \cite{yang2022psg}  & Query Matching Block                                  & Panoptic Segmentation         & 0.0438             & 0.0520          & 0.0467     \\
\bottomrule
\end{tabular}
\end{table}
\begin{table}
    \centering
    \small
    \vspace{-2ex}
\caption{Results on Circuit Graph Prediction \cite{he2019circuit}. }
\label{tab:circuit_graph_results}
\centering
\small
\begin{tabular}{ccccc}
\toprule
\textbf{Model} & \textbf{Token Sorting / Matching} & \textbf{EMD} & \textbf{EHD} & \textbf{Size. Diff.} \\
\midrule
\graphgpt       & Latent Sort                       & \textbf{0.0509}       & 0.0856       & 0.6320               \\
\graphgpt       & Mean Squared Sort                  & 0.0558       & 0.0860       & \textbf{0.0794}               \\
\graphgpt       & Lexicographical Sort              & 0.0623       & 0.1080       & 1.4757               \\
\graphgpt       & SVD Low-Rank Sort                 & 0.0590       & 0.0863       & 1.7485               \\
\midrule
\graphrnn{}       & Latent Sort                       & \textbf{0.0496}       & \textbf{0.0757}       & 0.4249               \\
\graphrnn{}       & Mean Squared Sort                  & 0.0576       & 0.0836       & -0.1305              \\
\graphrnn{}       & Lexicographical Sort              & 0.0528       & \textbf{0.0803}       & \textbf{-0.1205}              \\
\graphrnn{}       & SVD Low-Rank Sort                 & 0.0516       & 0.0777       & 0.6112              \\ \midrule
Circuit-GNN \cite{he2019circuit}       & N/A                 & 0.0738       & 0.0943       & (Set to GT)              \\ 
\bottomrule
\end{tabular}
\end{table}

Tables \ref{tab:scene_graph_results} and \ref{tab:circuit_graph_results} present the results for the scene graph generation and circuit graph prediction tasks, respectively. Both tasks involve directed graphs. It is important to note that the mean squared sort cannot be applied to scene graph generation due to the one-hot encoding of graph tokens, which leads to identical mean squared values for all tokens. Additionally, the BFS and DFS sorting algorithms could hardly be applied to both datasets as they mostly consist of disconnected small graphs.
In the scene graph generation task, we observe that \graphgpt{} exhibits higher precision compared to the state-of-the-art (SOTA) baselines PSGTR and PSGFormer for all sorting algorithm choices, where PSGTR \cite{yang2022psg} and PSGFormer \cite{yang2022psg} use additional panoptic segmentation labels for training while we only rely on the scene graph labels. However, it has lower recall than PSGTR. It is worth mentioning that PSGTR has an advantage over our method as it utilizes additional supervision from panoptic segmentation during training and knows the ground-truth sizes of target graphs as user-specified hyperparameters during testing. Despite this, \graphrnn{} shows better F-1 scores than PSGTR for multiple sorting algorithms. Moreover, we observe that lexicographical sort performs the best on this dataset, followed closely by Latent Sort. Using lexicographical sort as a domain-specific heuristic for this problem aligns well with the categorical nature of graph tokens. We include additional results in Section \ref{sec:additional_results} in the Supplementary, including the discussion of label noises in the dataset that lead to size differences between \graphgpt{} generated scene graph vs the ground-truth labels. 
For the circuit graph prediction task, both \graphgpt{} and \graphrnn{} models outperform Circuit-GNN. Among the ordering schemes, Latent Sort exhibits a slight advantage over others.

In summary, while specific heuristics may excel in certain applications (e.g., mean squared sort in topological graph generation and lexicographical sort in scene graph generation), Latent Sort demonstrates versatility and consistently delivers competitive performance across all four datasets.

\subsection{Analyzing Sorting Ambiguity and Shortest Path Property of Latent Sort}
\label{sec:shortest_path_property}

\begin{wrapfigure}{h}{0.65\textwidth}
    \small
    \vspace{-3ex}
    \centering
    \includegraphics[width=0.64\textwidth]{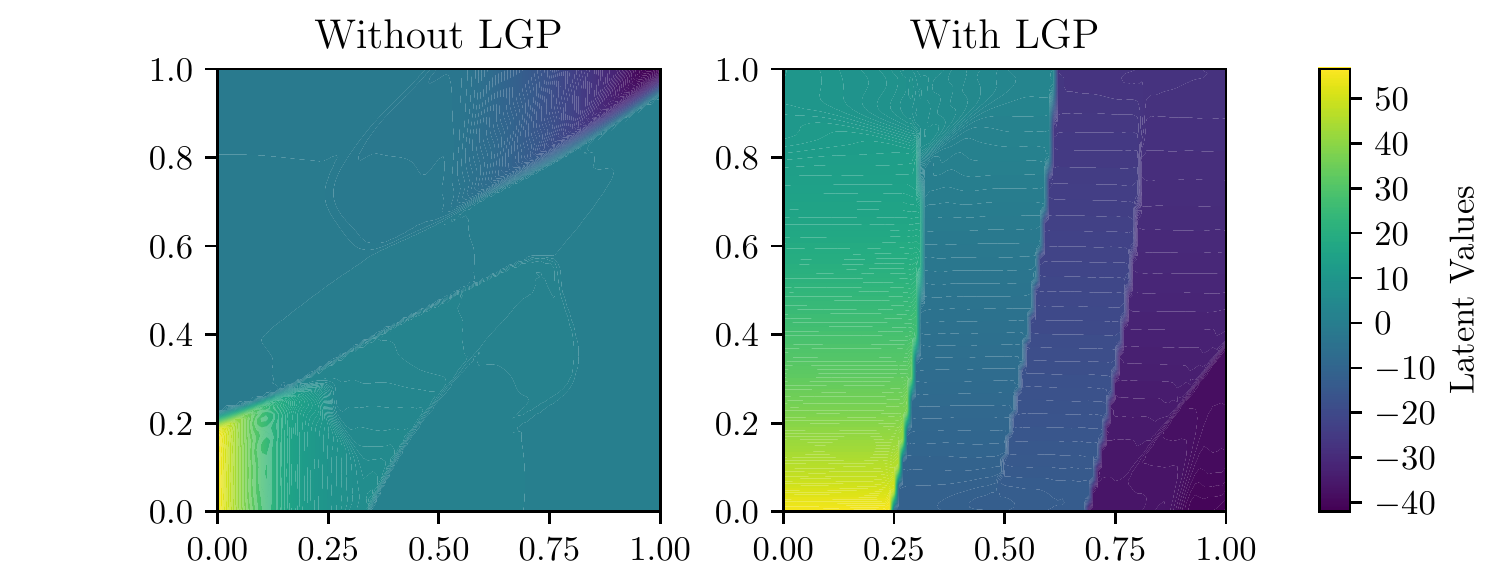}
    \caption{1-D representation of 2-D points learned by latent sort with (right) and without (left) using latent gradient penalty (LGP). White lines are contours showing regions with same latent values.}
    \label{fig:latent_sort_contour}
\vspace{-0.05cm}
\end{wrapfigure}

To analyze the importance of using latent gradient penalty (LGP) in the training of 1-D representations by latent sort, Figure \ref{fig:latent_sort_contour} shows the contour lines in the 1-D space learned by latent sort for a toy 2-D dataset, with and without LGP. The latent space learned without LGP exhibits large regions with identical values, which, according to the theory of ordering ambiguity, can lead to significant errors. In contrast, the latent space learned with LGP demonstrates finer partitioning, resulting in smaller regions sharing the same values. This leads to reduced ambiguity in ordering and, consequently, lower errors due to ordering ambiguity.


\begin{figure}[ht]
    \centering
    \includegraphics[width=1.0\textwidth]{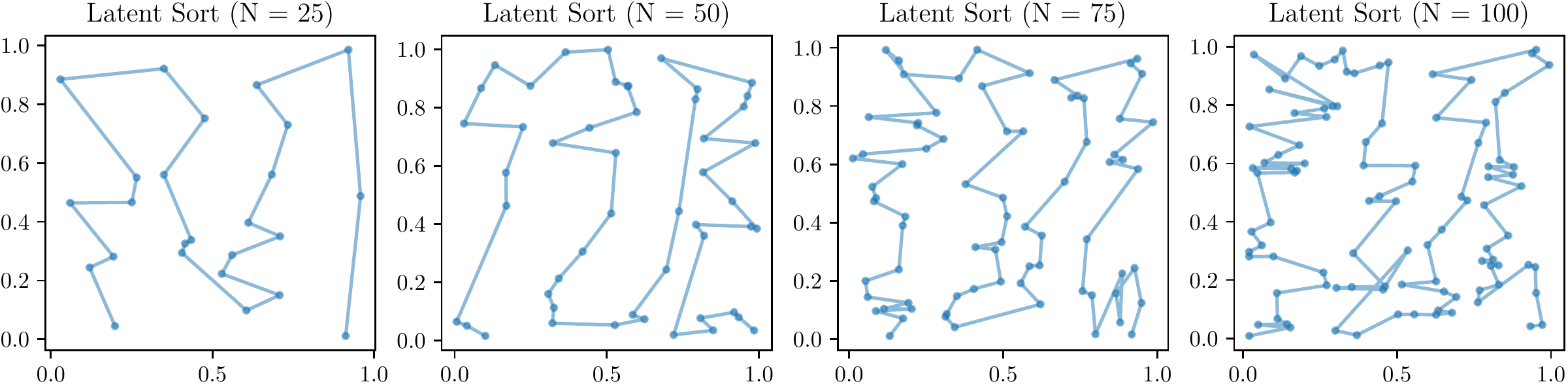}
    \caption{Approximating shortest path solutions using latent sort (with LGP).}
    \label{fig:latent_sort_tsp}
\end{figure}

Moreover, to empirically support the theoretical claim that latent sort finds an approximate solution to the shortest path problem, Figure \ref{fig:latent_sort_tsp} showcases the paths identified by latent sort over different collections of points in 2-D space. 
These paths are determined by sorting the points based on their 1-D values. 
The examples demonstrate that latent sort reasonably approximates the shortest path solution, even for a large number of points. 
For additional quantitative results, please refer to Appendix \ref{sec:additional_results}. These findings highlight an intriguing connection between Latent Sort and shortest path problems, motivating a new use-case for Latent Sort as a fast, low-cost method for approximating shortest path solutions through GPU-based parallelization of neural networks. 
This is particularly beneficial for applications with strict runtime efficiency requirements.

\section{Conclusions}

We presented a novel autoregressive framework for paired graph generation by reframing the task of sorting unordered tokens as a dimensionality reduction problem from a theoretical standpoint. We showed the efficacy of our proposed \textit{latent sort} algorithm on various graph generation tasks. 
However, we also acknowledge certain limitations of our work. While latent sort performs competitively across a wide range of datasets, it does not consistently outperform application-specific heuristics. This suggests the need for future works to explore more sophisticated learning-based ordering schemes. Additionally, our experiments are currently limited to small graphs, emphasizing the importance of investigating the scalability of our approach to larger graphs in future research.

\bibliography{Jie}
\bibliographystyle{unsrt}

\newpage

\appendix

\section{Additional Experiment Results}
\label{sec:additional_results}

In this section, we present additional experiment results from our study, offering further insights into the aspects and factors contributing to our proposed framework.

\subsection{Approximating TSP Solutions Using Latent Sort}

To investigate the performance of latent sort algorithms in approximating the Traveling Salesman Problem (TSP), we present additional results using a synthetic 2-D dataset in $\mathbb{R}^2$. All points in the dataset are generated using a uniform distribution, and the objective is to find the shortest path that traverses all the given points. Table \ref{tab:tsp_results} provides a summary of the experiment outcomes.

\begin{table}[ht]
\caption{Approximating TSP solutions using latent sort algorithm. We report the percentage of all possible paths traversing all points (generated using brute-force enumeration) that are longer than the solution predicted by the latent sort algorithm. The results are reported in ``mean $\pm$ standard deviation'' from 10 random runs of generating $N$ random points and running latent  sort.}
\label{tab:tsp_results}
\centering
\small
\begin{tabular}{ccccc}
\toprule
 & N = 5 & N = 6 & N = 7 & N = 8 \\ \midrule
LS w. LGP & $93.17\% \pm 11.77\%$ & $92.00\% \pm 8.69\%$ & $97.01\% \pm 6.10\%$ & $99.59\% \pm 0.34\%$ \\
LS w.o. LGP & $82.17\% \pm 13.12\%$ & $90.44\% \pm 11.44\%$ & $94.67\% \pm 6.64\%$ & $96.61\% \pm 5.14\%$ \\ \bottomrule
\end{tabular}
\end{table}

\begin{figure}[ht]
    \centering
    \includegraphics[width=0.49\textwidth]{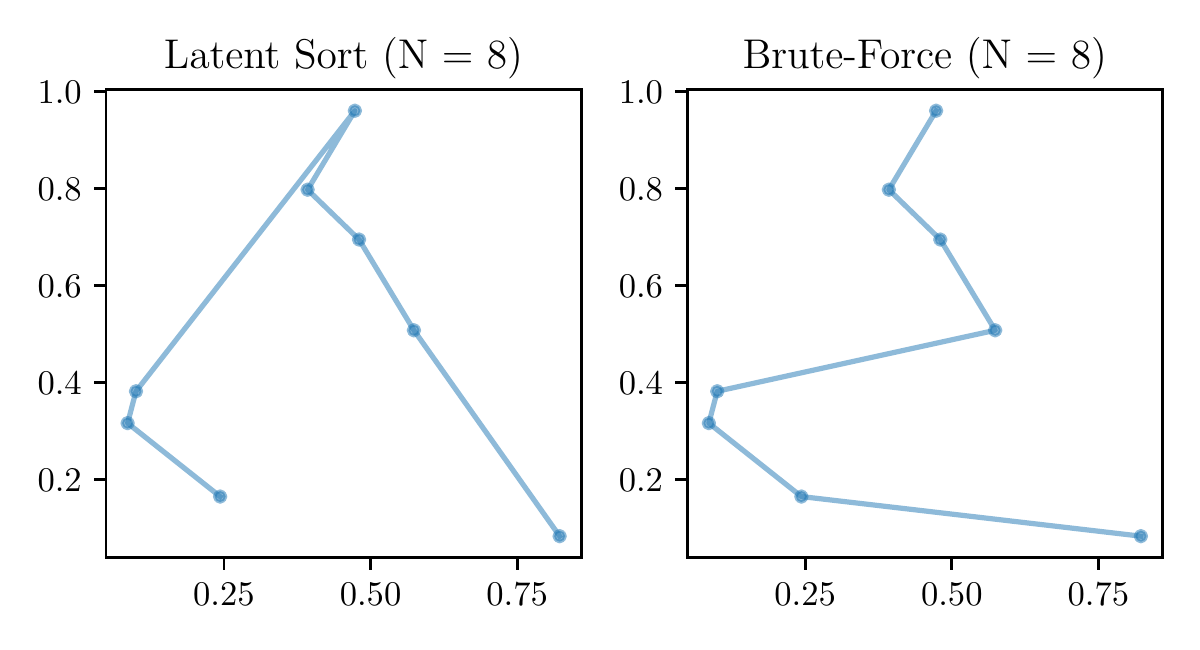}
    \includegraphics[width=0.49\textwidth]{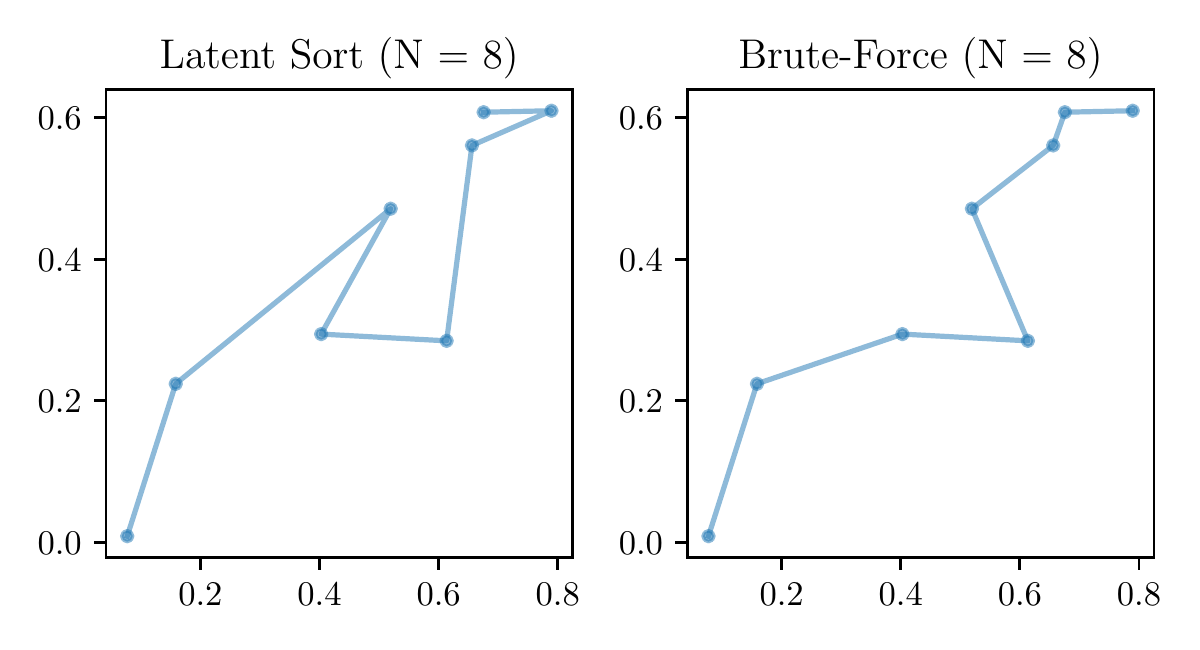}
    \includegraphics[width=0.49\textwidth]{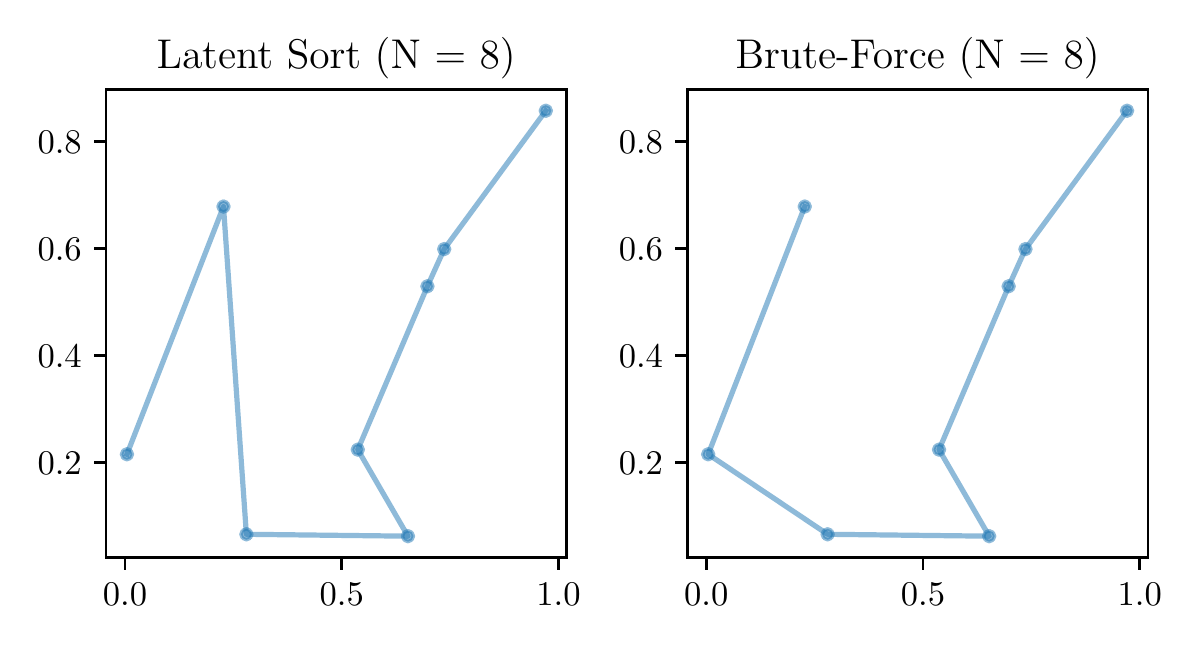}
    \includegraphics[width=0.49\textwidth]{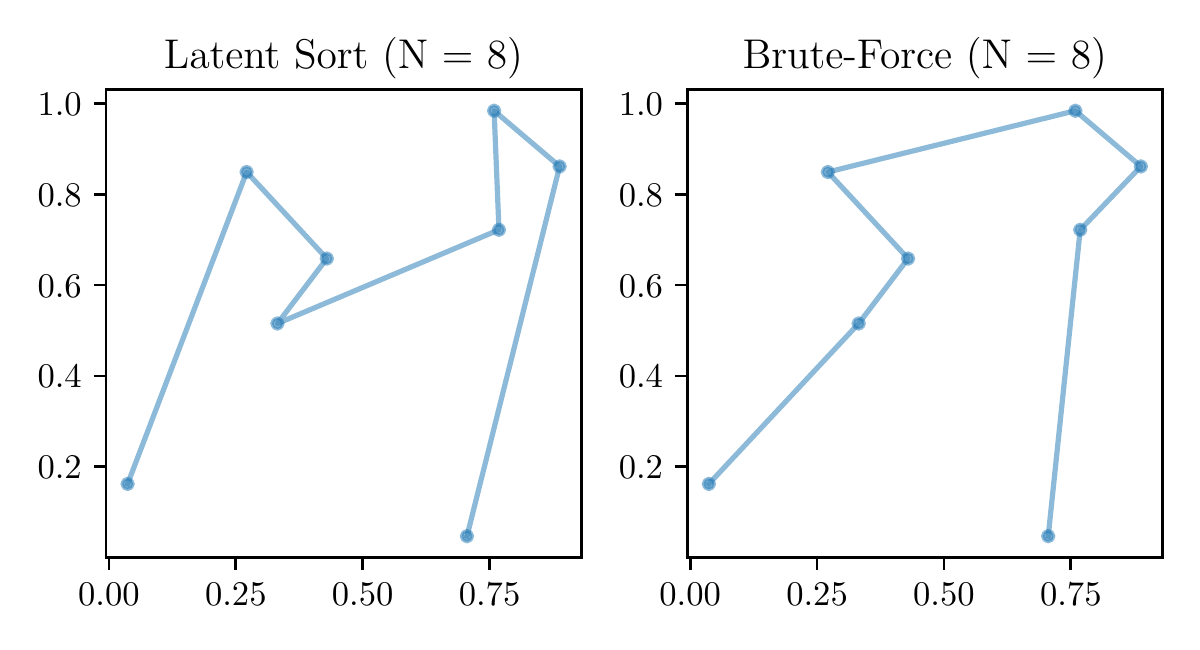}
    \caption{Examples solutions of of TSP problems ($N=8$) with randomly initialized points. It includes approximated solutions using the latent sort algorithm and the exact solutions using brute force search. }
    \label{fig:tsp_approximation}
\end{figure}

The results consistently demonstrate that employing latent sort with the latent gradient penalty (LGP) achieves higher approximation rates compared to using latent sort without LGP. This finding supports the conclusions discussed in Section \ref{sec:shortest_path_property} regarding the effectiveness of LGP in the latent sort algorithm for approximating TSP solutions. Note that the methods in general appear to perform better for a higher number of points, as there are more possible paths in total, leading to a higher percentage of paths being longer than the one predicted by latent sort.

\subsection{Semantic Scene Graph Generation on OpenPSG}

Here we provide some examples of the results (see Figure \ref{fig:additional_scene_graph}) for the semantic scene graph generation task on the OpenPSG dataset. 

\begin{figure}
\centering
\begin{minipage}[b]{\textwidth}
\centering
(a) \includegraphics[width=\textwidth]{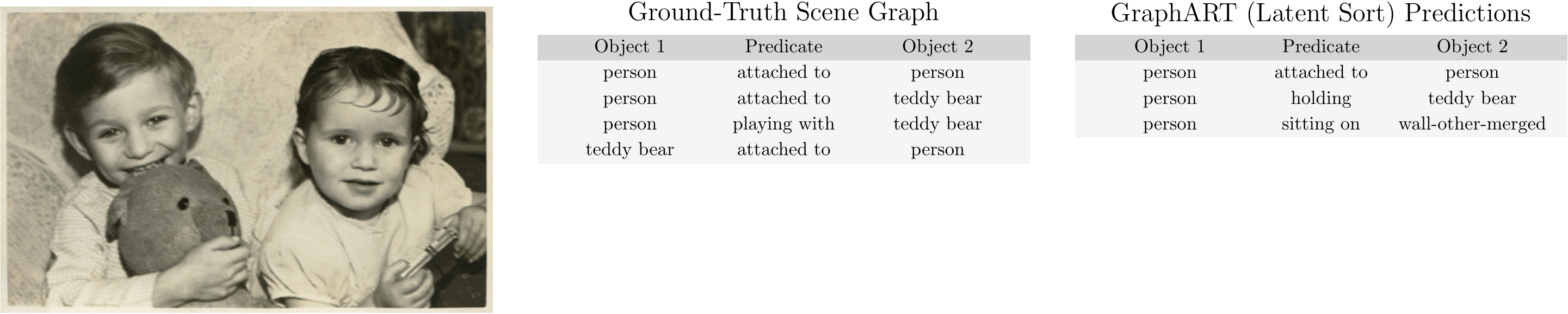}
\end{minipage}
\begin{minipage}[b]{\textwidth}
\centering
(b)\
\includegraphics[width=\textwidth]{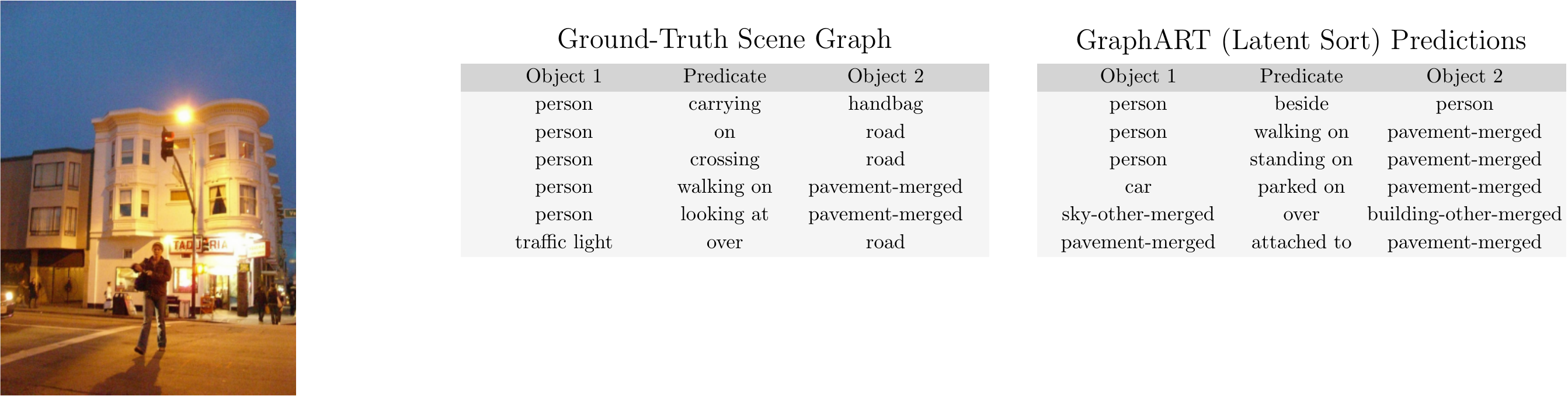}
\end{minipage}
\begin{minipage}[b]{\textwidth}
\centering
(c)\
\includegraphics[width=\textwidth]{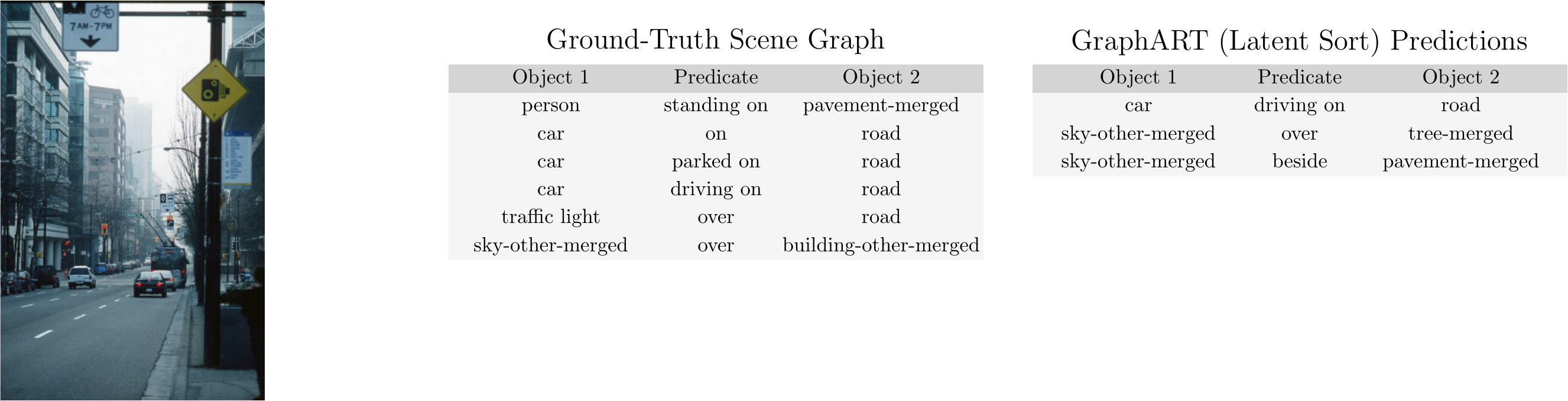}
\end{minipage}
\begin{minipage}[b]{\textwidth}
\centering
(d)\
\includegraphics[width=\textwidth]{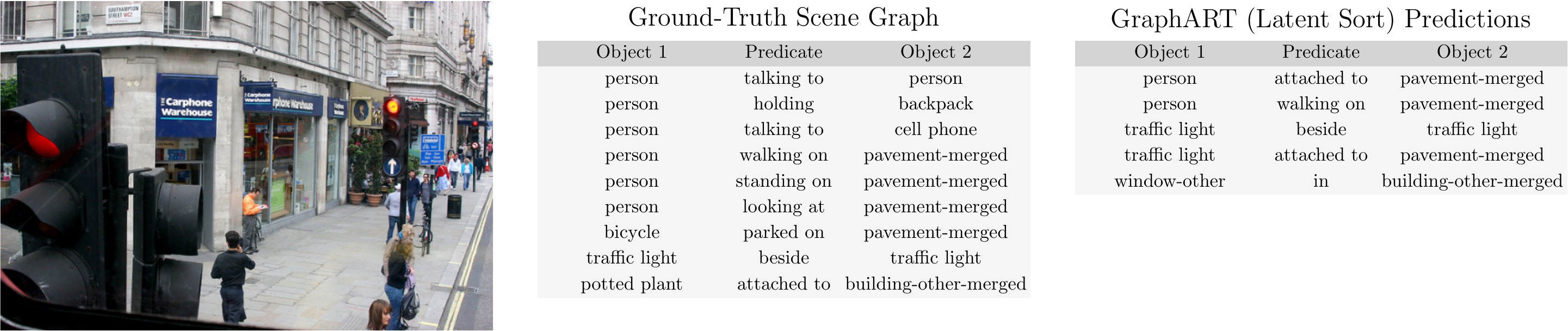}
\end{minipage}
\begin{minipage}[b]{\textwidth}
\centering
(e)\
\includegraphics[width=\textwidth]{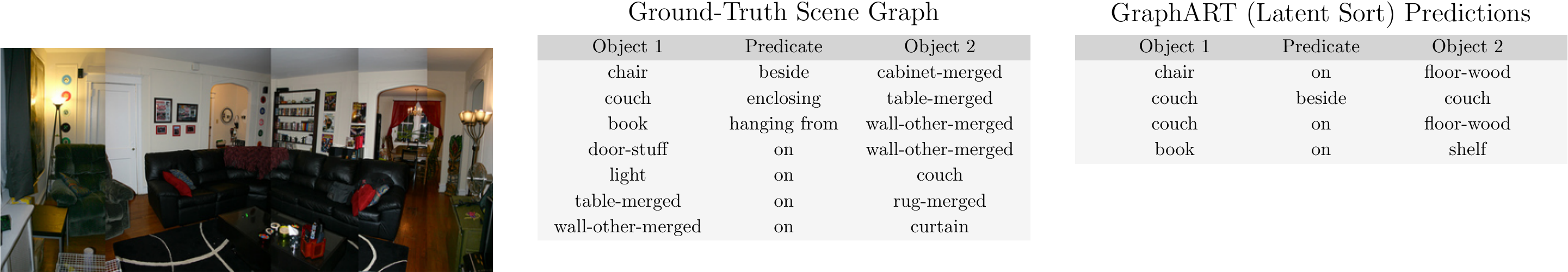}
\end{minipage}
\begin{minipage}[b]{\textwidth}
\centering
(f)\
\includegraphics[width=\textwidth]{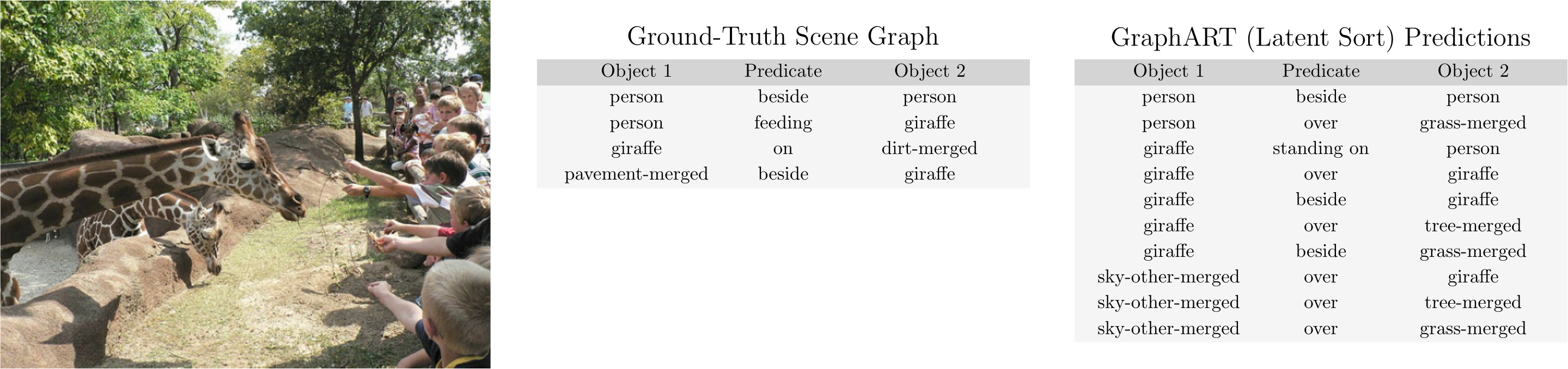}
\end{minipage}
\caption{Randomly sampled results for semantic scene graph generation on the OpenPSG dataset. Each image consists of the input image (left), ground-truth labels (middle), and predictions generated by \graphgpt{} using the latent sort algorithm (right). The samples show the label noises in the dataset.}
\label{fig:additional_scene_graph}
\end{figure}

We can see that \graphgpt{} generally tends to predict slightly smaller scene graphs compared to the ground-truth labels. This model behavior can be attributed to the presence of label noise in the OpenPSG dataset, although the dataset is already known to have less noise compared to other scene graph datasets \cite{yang2022psg}. Due to this issue in the training data, \graphgpt{} becomes cautious in predicting triplets with low confidence. Instead, it shows a preference for the most common objects and predicates, as well as the prevalent patterns present in the dataset. Consequently, the model tends to predict the stop token earlier than expected, resulting in predicted graph sequences that are likely to include only the most common objects and predicates. As a result, the prediction sequences conclude before reaching the size of the ground-truth labels (see Figure \ref{fig:additional_scene_graph}). 

We believe that this issue does not seem to be specific to \graphgpt{} alone, but rather a common problem encountered by autoregressive models when dealing with label noise in the dataset. Interestingly, we observed that the model's predictions are occasionally more reasonable and logically consistent than the ground-truth labels. For example, in Figure \ref{fig:additional_scene_graph}(a), the model predicts \texttt{person - holding - teddy bear}, which is more accurate than \texttt{person - attached to - teddy bear} in the ground-truth labels. In Figure \ref{fig:additional_scene_graph}(d), the model predicts \texttt{window-other - in - building-other-merged}, which is missing from the ground-truth labels, and the labels have a lot of duplicated relations for \texttt{person}. In Figure \ref{fig:additional_scene_graph}(e), the model predicts \texttt{book - on - shelf}, which is more accurate than \texttt{book - hanging from - wall-other-merged} in the ground-truth labels.
We believe that for future research, it is necessary to develop scene graph datasets with cleaner labels in order to enable fair benchmarking and evaluate the performance of different models more accurately.


\section{Additional Information For Experiment Setups}
\label{sec:experiment_details}

In this section, we will provide additional details regarding the experiment setups, as well as provide details of procedures for conducting experiments involving the models and tasks discussed in the paper for full reproducibility of our results. Note that, the amount of details provided here are limited comparing to the \textbf{released code} \footnote{\url{https://anonymous.4open.science/r/ordering-in-graph-gen/}}.

\subsection{Additional Information For Datasets}
\textbf{Scene Graph Generation}: We use the OpenPSG dataset \cite{yang2022psg} which contains 49k annotated images, and we follow the same train-test split as \cite{yang2022psg} , which has around 45k for training and 4k for testing. Our problem setup for scene graph generation is slightly different from most of the existing works on scene graph generation  \cite{lu2021context, li2022sgtr, yang2022psg}.  In particular, we consider semantic-level scene graphs where every node represents a generic object and is not associated to a particular localized (or segmented) object. For example, a ``\texttt{person1 - beside - person2}'' triplet will be converted into ``\texttt{person - beside - person}'' in our problem setup. 
We chose this problem setup because our main focus is to investigate the effect of different ordering methods on graph generation rather than striving for state-of-the-art (SOTA) performances. As such, we avoid complex segmentation settings that could potentially divert our focus. However, this doesn't imply an easier task. In fact, our model must infer the number of objects implicitly without resorting to object-level granularity. This contrasts with methods like PSGTR and PSGFormer used in conventional scene graph generation tasks, where the notion of objects is explicitly provided for training via segmentation masks. 

We adopt the same data preprocessing pipelines used for training PSGTR and PSGFormer, except that we do not use the panoptic segmentation labels for training. We also have a simpler cropping function instead of the sophisticated multi-size cropping used in PSGTR and PSGFormer, and our cropping generates smaller images that result in faster training. More specifically, we pad the input images to $384 \times 384$, where the shorter side will be padded by 0 values to make the image squared (equal aspect ratio). In contrast, the PSGTR and PSGFormer uses $640 \times 384$ or $1333 \times 640$. The difference in input size may be the factor that \graphgpt{} performs worse in identifying small objects. Then, we parse the object-level scene graphs into semantic-level scene graphs involving 133 object classes and 56 predicate classes. While state-of-the-art (SOTA) methods for this problem, such as PSGTR \cite{yang2022psg} and PSGFormer \cite{yang2022psg}, use additional panoptic segmentation labels for training, we do not use this additional supervision in our framework. This is done to demonstrate the capability of autoregressive models to directly generate scene graphs from images.

Note that the central goal of this paper is not to find a superior vision model that achieves state-of-the-art performances in scene graph generation. Thus, we did not fully optimize the pipeline and there are still potentials in \graphgpt{} to reach better performances for this particular task (scene graph generation). We welcome future works to explore the improvement that can be done using \graphgpt{}.

\textbf{Topological Graph Extraction}:  The task is based on the Toulouse Road Network dataset from \cite{belli2019image}, which consists of 99k image-graph pairs. The dataset comprises $64 \times 64$ grayscale images paired with corresponding topological graphs. We follow the same train-test split as described in \cite{belli2019image}, where approximately 80k image-graph pairs are used for training, and 19k pairs are used for testing. The dataset is derived from OpenStreetMap and involves the processing and labeling of raw road network data from the city of Toulouse, France, which was obtained from GeoFabrik and timestamped in June 2017. Each image represents a small cropped patch of the city map, and the coordinates are normalized to [-1, 1]. Upon inspecting sample images from the dataset, we observed that the graph structures in most patches are generally simple, reflecting the inherent simplicity of real-world road networks.

To assess the models' performance under more challenging conditions, we decided to create a synthetic Planar Graph dataset from scratch. This dataset is generated by utilizing Delaunay triangulation and the NetworkX package to generate random planar graphs. The node coordinates are sampled from a uniform distribution $\mathcal{U}(0, 1)^2$. Initially, the graphs contain 15 nodes. Preprocessing involves collapsing nodes that are too close to each other, with a threshold of 0.1. Similarly, edges are collapsed if they form an angle less than 30 degrees. After collapsing nodes and edges, self-loops are removed to obtain the target graph. Subsequently, we utilize the OpenCV package to render the target graph into $256 \times 256$ binary images, with edges represented by lines of random widths. To introduce some variability, random artifacts are added to the images. The graph structures in this dataset are generally more challenging when compared to the Road Network dataset. 
During training, minibatches of images and ground-truth graphs are generated on the fly, resulting in non-repeating dataset (of 40k samples every epoch) and zero generalization gap in the trained model. During testing, we test the performance of the results using 10 random seeds on 1000 generated samples.

\textbf{Circuit Graph Prediction}: We use the Terahertz channelizer dataset\footnote{Please make sure to carefully review and comply with the license permissions and restrictions (if there are any) before using the dataset \cite{he2019circuit}: \url{https://github.com/hehaodele/circuit-gnn}} consisting of 347k circuit graphs. This dataset was introduced in the work by He et al. \cite{he2019circuit}, which presented a novel approach to represent circuits using graphs, where resonators are represented as nodes and electromagnetic couplings as edges. The authors mapped the geometric configuration of a circuit to the graph structure, using node attributes for resonator properties, while edge attributes captured the relative positions, gap lengths, and shifts between resonators. The presence of edges in the graph is determined by a distance threshold based on prior research. We recommend readers to go through the original paper for further details on the dataset, which includes visual examples of the circuits and their graph representations.

In our paper, we consider the goal of solving the ``inverse'' problem, where we are expected to predict the ground-truth circuit graph that consists of 3 to 6 resonators (or nodes), given desirable electromagnetic (EM) properties (e.g., transfer function) as inputs. Each resonator (node) is represented by a 9-dimensional vector. We follow the problem statement of the forward and inverse problems as the original work \cite{he2019circuit}.
Since the scale of some dimensions in the resonator vectors is much larger than others, which may dominate the prediction errors, we normalize the data using min-max normalization per dimension. We report the errors on the normalized data.

\subsection{Model Configurations And Baseline Methods}

\textbf{Baselines in Scene Graph Generation}: For the scene graph generation task (Table \ref{tab:scene_graph_results}), we consider PSGTR \cite{yang2022psg} and PSGFormer \cite{yang2022psg} as state-of-the-art (SOTA) baselines. We ran the evaluation scripts and trained checkpoints as provided by Yang et al \footnote{OpenPSG Github Repo: \url{https://github.com/Jingkang50/OpenPSG}} to obtain the predictions on the test set, where each node in the predicted scene graph is grounded by its pixel-accurate segmentation mask in the image. Then, we convert the segmentation masks based scene graph into semantic scene graph. Since the output graph size of these methods depends on a confidence thresholding hyper-parameter, it is challenging to tune it to match the ground-truth graph sizes. Moreover, these methods are tuned to be biased towards achieving a good recall score, generally producing much larger graphs than the ground-truth (GT) labels. Therefore, we downsampled the output triplets to match the GT numbers and report their average performance over 10 random runs. Note that the PSGTR and PSGFormer utilize panoptic segmentation labels as additional supervision during training, which we do not use in our framework. Additionally, we do not use the GT size of the scene graph as an input parameter in our approach, in contrast to the implementation of PSGTR and PSGFormer.

\textbf{Baselines in Topological Graph Extraction}: For the road network generation task (Table \ref{tab:road_graph_results}), we use the GGT \cite{belli2019image} and GraphRNN \cite{you2018graphrnn} algorithms as baselines. These models are supervised on adjacency matrices using BCELoss, in contrast to our purely autoregressive framework that does not generate graph-specific components. For the planar graph generation task (Table \ref{tab:planar_graph_results}), we designed a non-autoregressive architecture called GraphTR, inspired by the DETR architecture \cite{carion2020end}. GraphTR utilizes bipartite matching loss and does not rely on autoregressive generation.

\textbf{Baselines in Topological Graph Prediction}: Regarding the circuit prediction task (Table \ref{tab:circuit_graph_results}), the original work \cite{he2019circuit} does not directly solve the inverse problem from circuit specifications to circuit graphs. Therefore, we implemented a model adapted from the original design using a graph attention network (GAT) \cite{velivckovic2017graph} and fed it with the ground-truth size of output graphs. We model each circuit as a Graph Attention Network that is initialized to a projection of the input signals. We also initialize the edge attributes to the electromagnetic coupling, which can be seen as a distance measure between the nodes. Our task is to predict the nine ``raw'' node features for each node of the circuit graphs. We focus on only predicting the node features as the original work \cite{he2019circuit} derives both node attributes and edge attributes from these node features. Note that the CircuitGNN in the original work does not have the capability to predict the target graph size (the number of resonators in the circuit) for a given input, therefore we train different models for different graph sizes, specifically one model for one size, as handled by the original work \cite{he2019circuit}. 

\textbf{\graphgpt{} And \graphrnn{}}:
We have different model configurations for each task included in the paper, and \graphrnn{} use the same configurations as \graphgpt{} for all the tasks except the decoders are LSTM models instead of autoregressive transformers. Unless specified, the dropout or drop-path probabilities are set to 0.

\begin{figure}[ht]
    \centering
    \includegraphics[width=\textwidth]{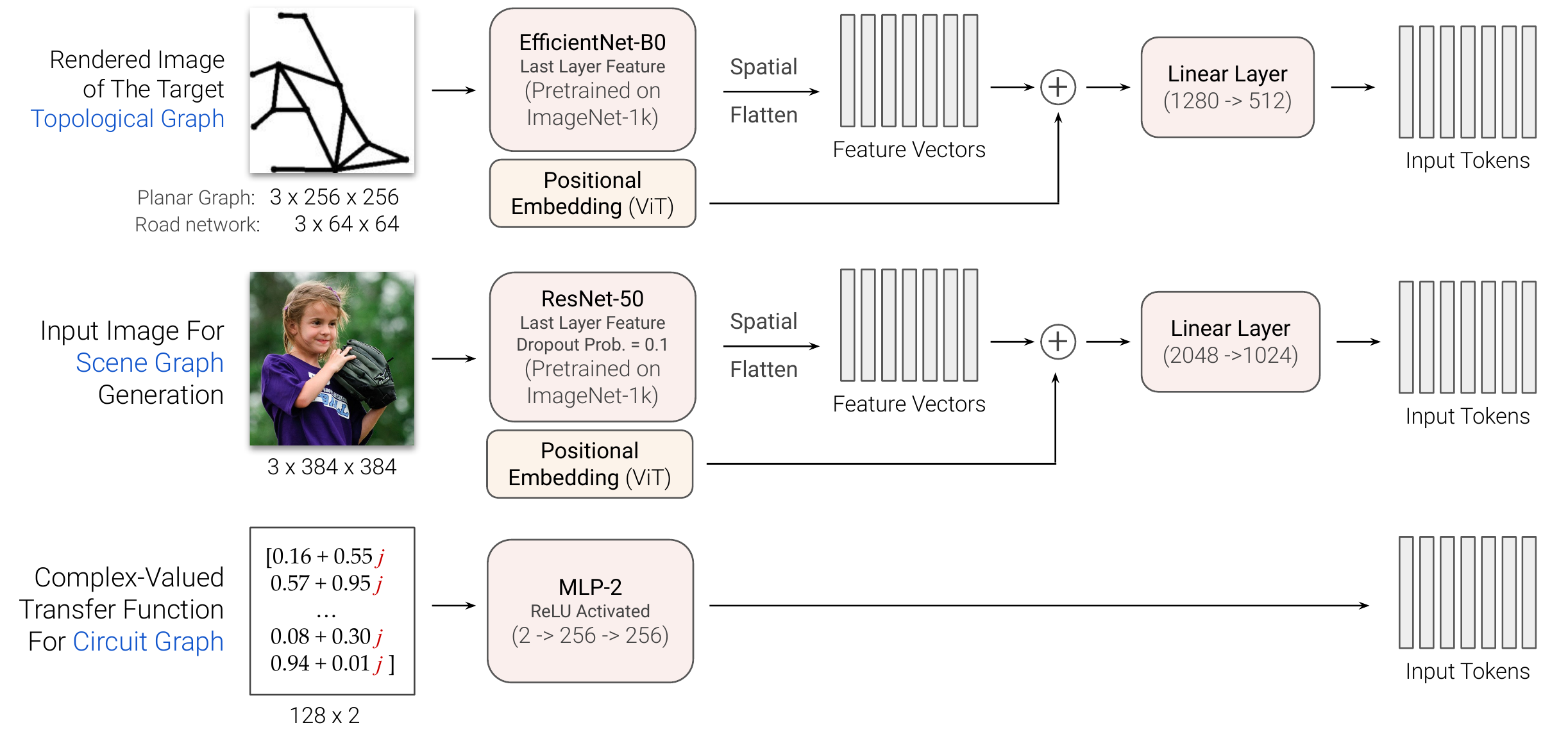}
    \caption{The encoder architecture of the \graphgpt{} and \graphrnn{} for different tasks.}
    \label{fig:ap_encoder_details}
\end{figure}
\begin{figure}[ht]
    \centering
    \includegraphics[width=\textwidth]{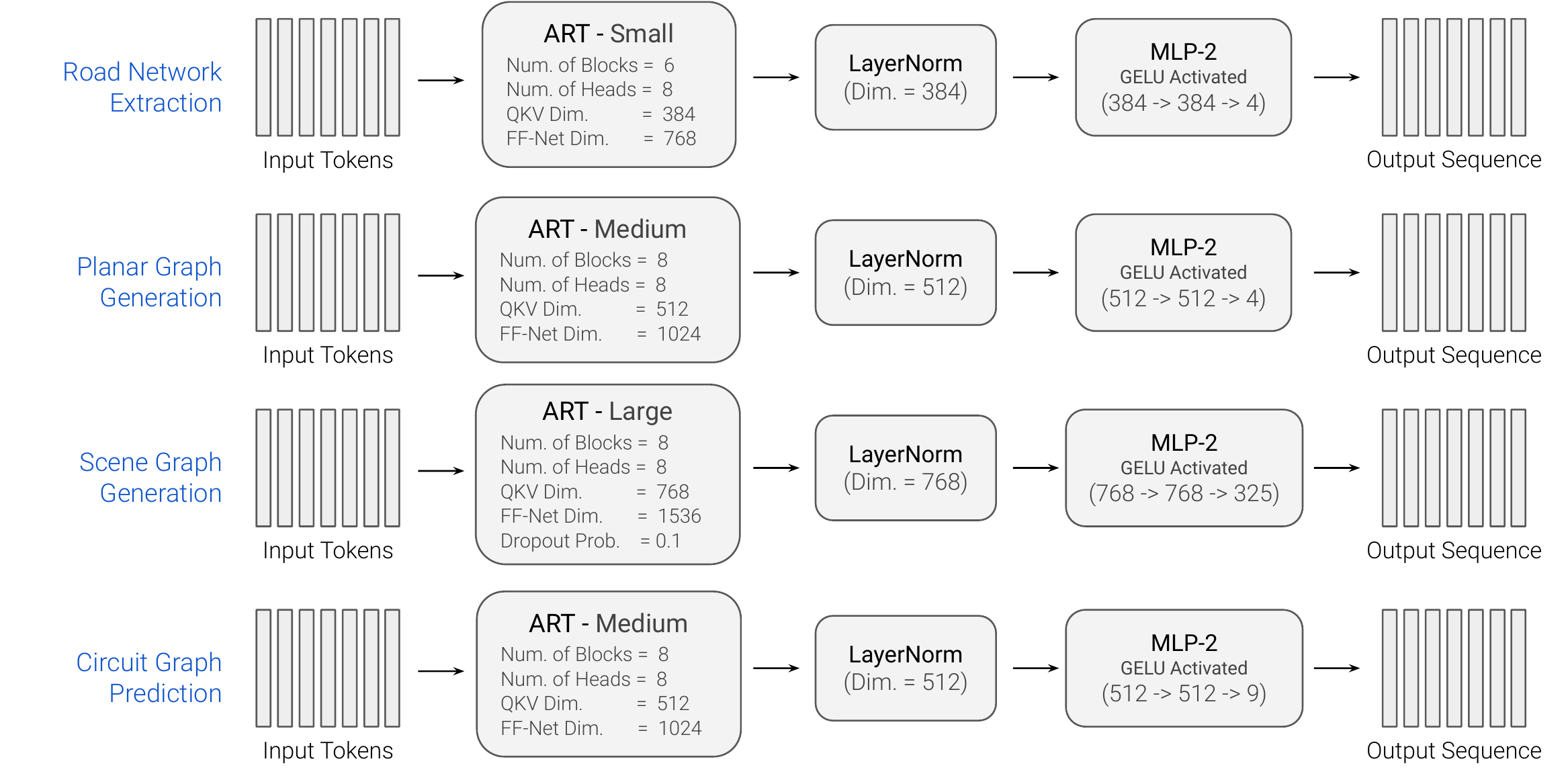}
    \caption{The decoder architecture of the \graphgpt{} for different tasks. The autoregressive transformer (ART) is based on GPT-2 architecture \cite{radford2019language} with our modified decoder block architecture (see Figure \ref{fig:decoder_block}).}
    \label{fig:ap_graphart_decoder}
\end{figure}
\begin{figure}[ht]
    \centering
    \includegraphics[width=\textwidth]{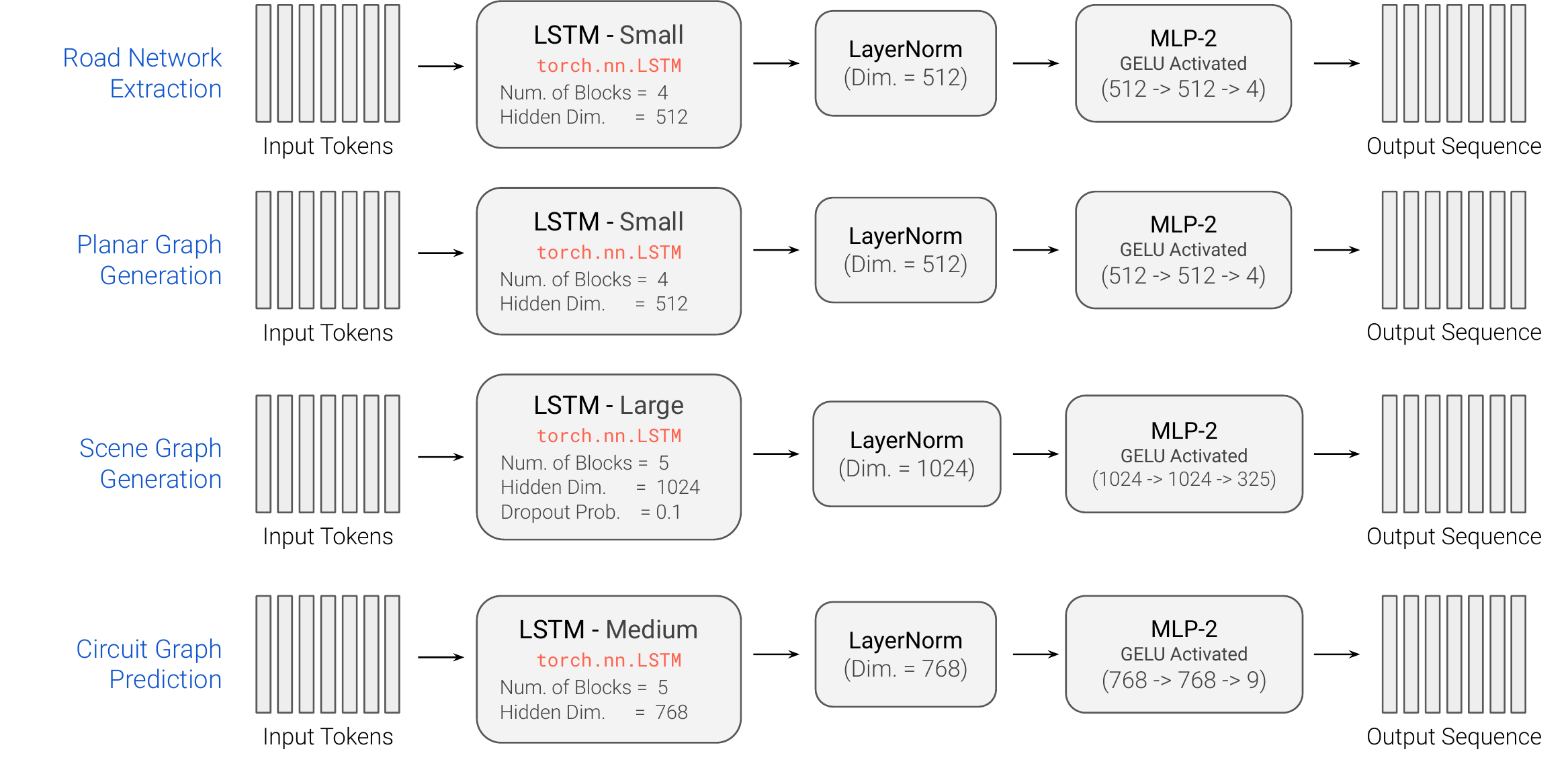}
    \caption{The decoder architecture of the \graphrnn{} for different tasks.}
    \label{fig:ap_graphrnn_decoder}
\end{figure}

The model configurations include the following details:
\begin{itemize}
    \item \textbf{Visual Encoders}: Figure \ref{fig:ap_encoder_details} presents the common encoder architecture for both \graphgpt{} and \graphrnn{}. These models utilize either an Efficient-B0 \cite{tan2020efficientnet} or a ResNet-50 \cite{he2015deep} pretrained on the ImageNet-1k dataset \cite{russakovsky2015imagenet} as the visual backbone network. The input images are encoded into feature vectors after flattening the spatial dimensions of the output feature map. The positional embeddings used in the original Vision Transformer (ViT) \cite{dosovitskiy2021image} are adopted.
    \item \textbf{Autoregressive Decoders}: Figure \ref{fig:ap_graphart_decoder} and \ref{fig:ap_graphrnn_decoder} illustrate the detailed architectures for \graphgpt{} and \graphrnn{}, respectively. The \graphgpt{} model employs a GPT-2-based architecture with stacked transformer decoder blocks, as depicted in Figure \ref{fig:decoder_block}. During training, teacher-forcing is used by feeding the ground-truth sequence to the model. During inference, the model performs iterative next token prediction until a stop criterion is met. For road network extraction, planar graph generation, and circuit graph prediction, stop tokens are represented by vectors with all $-1$ values. If the mean absolute distance of a generated token to the vector of all $-1$ values is less than 0.8, it is considered a stop token, leading to the termination of iterative sequence generation.
\end{itemize}

We try to control the number of parameters to be similar among different models for the same task.
Please refer to the figures for a visual representation of the mentioned architectures.

\subsection{Additional Information For Evaluation Metrics}
To evaluate the performance of graph generation results in relation to the ground truth, it is appropriate to employ metrics that quantify the dissimilarity between two sets. For the topological graph generation tasks, we employ the StreetMover distance as was done in   \cite{belli2019image}. In addition, drawing inspiration from the Earth Mover's distance and Hausdorff distance, we introduce the Edge Mover Distance (EMD) and Edge Hausdorff distance (EHD) metrics specifically tailored for the "edge-based tokenization" scheme used in our models (refer to tokenization in Section \ref{sec:models_and_pipelines}). 

\textbf{StreetMover distance (SMD)}: The SMD is proposed by \cite{belli2019image} as an evaluation metric for evaluating generative models in the context of road networks. It addresses the limitations of existing metrics by jointly capturing the accuracy of the reconstructed graph while being invariant to changes in graph representation, transformations, and size. Unlike pixel-based metrics, SMD considers the global alignment and magnitude of errors in the reconstructed graphs, making it more suitable for evaluating road network generation. It overcomes the drawbacks of other metrics, such as Average Path Length Similarity (APLS), which require post-processing steps and are designed for different purposes. SMD is easily interpretable and computationally efficient, leveraging Sinkhorn iterations for its computation.

\textbf{Edge Mover Distance (EMD) And Edge Hausdorff Distance (EHD)}:

Given two finite length sequences of N-dimensional points $X$ and $Y$ (where $X$ and $Y$ are matrices of N columns), we denote $\mathcal{X}$ and $\mathcal{Y}$ as the corresponding unordered sets of $X$ and $Y$. In other words, $\mathcal{X}$ and $\mathcal{Y}$ are sets that contain the rows of the corresponding matrices, where each row represents an N-dimensional point. Additionally, $\mathcal{X}$ and $\mathcal{Y}$ are sets that contain all the rows, and every element within these sets is a row from the matrices.

The EMD between two sequences $X$ and $Y$ (represented by matrices) can be defined using its unordered sets $\mathcal{X}$ and $\mathcal{Y}$:
\begin{equation}
\text{EMD}(\mathcal{X}, \mathcal{Y}) = \frac{1}{|\mathcal{X}|} \sum_{\bm{x} \in \mathcal{X}} \min_{\bm{y} \in \mathcal{Y}} \rho(\bm{x}, \bm{y}),
\end{equation}
and the EHD can be defined as:
\begin{equation}
    \ehd(\mathcal{X}, \mathcal{Y}) = \max \left[ \sup_{\bm{x} \in \mathcal{X}} \inf_{\bm{y} \in \mathcal{Y}} \rho(\bm{x}, \bm{y}), \sup_{\bm{y} \in \mathcal{Y}} \inf_{\bm{x} \in \mathcal{X}} \rho(\bm{x}, \bm{y}) \right],
\end{equation}
where $|\mathcal{X}|$ is the cardinality of set $\mathcal{X}$ representing the number of points in $\mathcal{X}$. The function $\rho(\bm{x}, \bm{y})$ represents the distance between points $\bm{x}$ and $\bm{y}$. This distance can be calculated using various metrics, such as the Euclidean distance or any other suitable distance metric for the given problem domain.
The first term $\sup_{\bm{x} \in \mathcal{X}} \inf_{\bm{y} \in \mathcal{Y}} \rho(\bm{x}, \bm{y})$ computes the maximum distance from any point in $\mathcal{X}$ to its nearest neighbor in $\mathcal{Y}$, while the second term $\sup_{\bm{y} \in \mathcal{Y}} \inf_{\bm{x} \in \mathcal{X}} \rho(\bm{x}, \bm{y})$ computes the maximum distance from any point in $\mathcal{Y}$ to its nearest neighbor in $\mathcal{X}$. The EMD calculates the average of the minimum distances between each point in set $\mathcal{X}$ and its nearest neighbor in set $\mathcal{Y}$, whereas the EHD measures the maximum distance between the closest points of two sets $\mathcal{X}$ and $\mathcal{Y}$. The EHD is the larger of these two values, representing the maximum discrepancy between the two sets. 

\textbf{Classification Metrics For Sequences And Sets of Different Sizes}:

In traditional classification settings, precision, recall, and F1 score are typically computed on a per-instance basis. Each individual item (or instance) in the prediction set is compared to its corresponding item in the true set, and the counts of true positives, false positives, and false negatives are aggregated over all instances, often using "macro" averaging. This approach works well when the prediction set and true set have an equal number of instances. However, in situations where the prediction set and true set have a different number of instances, the traditional per-instance calculation becomes inapplicable. It is crucial to establish new definitions for these metrics that can appropriately handle the discrepancy in set sizes.

We use the same definition of $\mathcal{X}$ and $\mathcal{Y}$ as unordered sets and  $X$ and $Y$ as predicted and ground-truth sequences. To calculate TP (True Positives), we compute the cardinality of the intersection of $\mathcal{X}$ and $\mathcal{Y}$, which represents the count of matches or true positives. Specifically, we express TP, FP, and FN as follows:
\begin{align*}
TP & = |\mathcal{X} \cap \mathcal{Y}| \\
FP & = |\mathcal{X}| - TP \\
FN & = |\mathcal{Y}| - TP
\end{align*}
where $|\cdot|$ represents the cardinality of a set. Using this new definition of TP, FP and FN, we define Precision, Recall, and F1 Score between two sets of different sizes (cardinality) as follows:

\textbf{Precision} is the proportion of correct predictions (TP) to the total number of predictions in the predicted sets. It reflects the percentage of predictions that are correct. Mathematically, Precision is defined as:
\begin{equation}
Precision = \frac{TP}{TP + FP} = \frac{|\mathcal{X} \cap \mathcal{Y}|}{|\mathcal{X}|}
\end{equation}
    \textbf{Recall} is the proportion of correct predictions (TP) to the total number of elements in the targe set. It reflects the percentage of ground-truth elements that are included in the predictions. Mathematically, Recall is defined as:
\begin{equation}
Recall = \frac{TP}{TP + FN} = \frac{|\mathcal{X} \cap \mathcal{Y}|}{|\mathcal{Y}|}
\end{equation}
Similar to the traditional definition, the \textbf{F1 Score} for sets is the harmonic mean of Precision and Recall. The F1 Score gives equal weight to both Precision and Recall and provides a balanced measure of model performance. High Precision and high Recall will yield a high F1 Score. The F1 Score is defined as:
\begin{equation}
F1 = \frac{2 \times Precision \times Recall}{Precision + Recall} = \frac{2 \times \frac{|\mathcal{X} \cap \mathcal{Y}|}{|\mathcal{X}|} \times \frac{|\mathcal{X} \cap \mathcal{Y}|}{|\mathcal{Y}|}}{\frac{|\mathcal{X} \cap \mathcal{Y}|}{|\mathcal{X}|} + \frac{|\mathcal{X} \cap \mathcal{Y}|}{|\mathcal{Y}|}}
\end{equation}

Using the above definitions, there are precision, recall and F1 scores for each pair of predicted and ground-truth graphs. We report the averaged values over the entire test set.

\subsection{Training Setups For Graph Generation Models}
The configurations for the graph generation models are described as follows:
\begin{itemize}
    \item \textbf{Planar Graph (Table \ref{tab:planar_graph_results}) \& Road Network (Table \ref{tab:road_graph_results})}: We train the models using the Adam (planar graph) or AdamW (road network) optimizer with the following settings: learning rate = $1.0 \times 10^{-4}$, weight decay = 0 (planar graph) or $1.0 \times 10^{-4}$ (road network), and an effective batch size = 300 (150 on 2 GPUs). The warm-up period is set to $10\%$ of the total epochs, and we employ "cosine" annealing. Sequences are padded to the maximum length of sequences plus one within a batch, using a padding value of -1. The stop token is also represented by the value -1. Training 1000 epochs on the planar graph dataset typically takes approximately 16 to 20 hours on two NVIDIA A100 GPUs with 128 CPU cores, using 32-bit precision\footnote{We observed degradation in model accuracy when using 16-bit precision in training.}, and it takes around 6 to 8 hours to train for 200 epochs on the road network dataset using a single NVIDIA A100 GPU with 16 CPU cores.
    \item \textbf{Scene Graph (Table \ref{tab:scene_graph_results})}: We train the models for 20 epochs using the AdamW optimizer with the following settings: learning rate = $1.0 \times 10^{-4}$, weight decay = $5.0 \times 10^{-4}$, and an effective batch size of 100. The warm-up period is set to $10\%$ of the total epochs, and we employ "cosine" annealing. Sequences are padded within a batch to the maximum length of sequences plus one, with the addition of stop tokens. For each object and their predicate that are one-hot encoded, an additional dimension is added for the "stop token" class, resulting in a total of three additional dimensions. Training typically takes approximately 2 hours on a single NVIDIA A100 GPU with 128 CPU cores, using 32-bit precision.
    \item \textbf{Circuit Graph (Table \ref{tab:circuit_graph_results})}: We train the models using the AdamW optimizer with the following settings: learning rate = $1.0 \times 10^{-4}$, weight decay = $1.0 \times 10^{-4}$, and an effective batch size = 300 (150 on 2 GPUs). The warm-up period is set to $10\%$ of the total epochs, and we employ "cosine" annealing. Sequences are padded to the maximum length of sequences plus one within a batch, using a padding value of -1. The stop token is also represented by the value -1. Training 200 epochs typically takes approximately 20 to 24 hours on two NVIDIA A100 GPUs with 64 CPU cores, using 32-bit precision.
\end{itemize}

\subsection{Training Latent Sort Encoders}
We trained 10 latent sort encoders for each experiment and selected the one that yielded the best validation performance for the paper. We observed that the difference in graph generation performance due to different random initializations of the latent sort encoders is less significant than the impact of the hyperparameters used in training the encoders. Based on our observations, here are some key takeaways to guide the design of latent sort encoders:
\begin{itemize}
    \item We found that different random initializations of the latent sort encoders can lead to less than a $5\%$ difference in graph generation performance (measured in terms of EMD values using \graphgpt{}).
    \item Larger latent sort encoders do not necessarily result in better graph generation performances, even if they have lower reconstruction errors. In our experiments, we utilized multilayer perceptrons as encoders and decoders, each consisting of three hidden layers with 512 neurons and tanh activation..
    \item The most important hyperparameters we identified include: the coefficient of LGP in the loss function, the strength of $L_2$ regularization, and the number of training epochs. The trade-off parameter controlling the strength of LGP regularization is the most important hyperparameter, and we used a range of 0.01 to 0.1 in our experiments. We found that $L_2$ regularization was unnecessary (thus set to 0) when using LGP. Generally, longer training epochs do not lead to a degradation in graph generation performance.
    \item Learning rate scheduling plays a crucial role in the performance of latent sort encoders. In our experiments, we employ Adam optimizers and cosine learning rate annealing. The model undergoes a warm-up phase at the beginning of training, lasting $10\%$ of the total training epochs. During this phase, the learning rate is exponentially increased from an initial value of $1.0 \times 10^{-5}$. Subsequently, the learning rate gradually decreases until it reaches a final value of $1.0 \times 10^{-8}$.
    \item We observe that using the undirected graph loss as reconstruction does not produce significant differences when compare to the L1 reconstruction loss for the topological graph generation tasks. For ciruit graph generation, we use L1 loss for reconstruction. For the scene graph generation task, we use binary cross entropy (BCE) loss as used in training the graph generation models.
\end{itemize}

\section{Supplementary Theoretical Results And Discussions}
\label{sec:ap_additional_theory}

This section supplements the theoretical results presented in Section \ref{sec:sorting_as_dr} by providing a more detailed and comprehensive explanation of the overall framework. The structure of this section is as follows:
\begin{enumerate}[label=\textbf{C.\arabic*}]
    \item includes the definition and motivations of sorting algorithms from the perspective of dimensionality reduction (DR), along with other important notations. It also includes a discussion of how sequence orderings affect model behavior.
    \item  introduces the definition of ordering ambiguity for sorting algorithms.
    \item  discusses the expected prediction errors caused by ordering ambiguity.
    \item  introduces the idea of the latent sort algorithm. It also includes two theorems to reveal some of its properties.
    \item  discusses prediction errors from imperfection reconstruction in the latent sort algorithm. First, it connects the reconstruction error (assuming normal distribution) with the error distribution of the 1-D latent values. Theorem \ref{th:error_latent_normal_distribution} shows how to estimate the 1-D latent distribution (in terms of mean and variance) from the distribution of reconstruction errors. Then, it discusses the ordering errors that may be caused by imperfection reconstruction.
    \item  provides a way to simplify the ordering errors from imperfection reconstruction.
    \item  discusses the desirable properties of an optimal ordering to minimize the errors from imperfection reconstruction using the simplified expression of errors from C.6.
\end{enumerate}

\subsection{Sorting As A Dimensionality Reduction Problem}

\textbf{The Ordering Problem in Set Generation}: Since we tokenize each graph into a set of $N$-dimensional tokens ($N > 1$), we denote a graph as a set $\mathcal{X}$ consisting of $M$ unique points (we also refer to these as vector tokens) within an $N$-dimensional space. These points are unordered, which means they don't follow a specific sequence or pattern. 

Our goal is to discover the best possible (or optimal) ordering for these points, which when used as the target sequence to supervise an autoregressive model yields optimal performance in generating $\mathcal{X}$. We denote the sequence obtained by applying the optimal ordering as $Y^*$. To visualize this, imagine $Y^*$ as a matrix where every row represents a point in $\mathcal{X}$. Each point is assigned an index which signifies its position in this optimal order.

Finding $Y^*$ is very challenging since we do not know what orderings the autoregressive models ``prefer'' that yield optimal results. In fact, different model architectures or different applications can have different optimal orderings. 
For example, in an RNN, each output at time $t$ depends on the previous outputs and the current input, forming a sequence of dependencies (i.e., a chain-like structure). This means information has to flow sequentially from one step to the next. In contrast, the transformer model's attention mechanism allows it to have direct dependencies between all pairs of positions in the sequence. Therefore, information can flow directly between any two positions in a sequence, which allows transformers to capture longer dependencies more efficiently.
As a result, the optimal ordering of input elements for an RNN may show temporal or spatial continuity, given its inherent sequential processing nature, while a Transformer may not strictly require such continuity due to its capability to process all elements concurrently and directly. Moreover, the type of application also plays a vital role in determining the optimal ordering. Therefore, our goal isn't simply to find an ordering that fits all models, but rather to discover an ordering that is best suited to the model's architecture and the specific use-case it is being applied to. The optimal orderings could also potentially be influenced by factors like the data distribution, the complexity of the learning task, and even the particular optimization algorithms used during the training process.

\textbf{How About End-to-End Learning Based Ordering?} 

One might naturally consider employing an end-to-end learning approach, in which the process of identifying an optimal ordering is integrated directly into the training of the model. This approach could, for example, involve a Transformer model, known for its permutation invariance at the encoder stage when no positional embeddings are used. However, at the decoder stage, it could be designed to attend to a learnable positional embedding, essentially ``learning'' the optimal sequence ordering as part of the training process. This can be an appealing idea since the model is potentially capable of dynamically adapting to the best ordering for the given data and task.

However, there's a significant challenge when employing such an end-to-end training scheme. Neural networks, including both the ordering model and the autoregressive model, tend to have a critical learning phase \cite{achille2019critical} in the early epochs, where they learn essential features and structures of the data. In an end-to-end setup, the ordering model is consistently changing the sequence ordering to try to best suit the autoregressive model, while the autoregressive model is simultaneously trying to adapt to the new ordering provided by the ordering model. This dual adaptation can create issues with convergence, since we cannot guarantee the two models will ultimately "agree" with each other. This can result in a chaotic learning process where the two models are continuously trying to adapt to each other's changes without reaching a stable state. Indeed, in our experiments, we have observed that such end-to-end learning systems often fail to converge, which motivated us to discontinue them in our work.

\textbf{Why We Opt For Dimensionality Reduction (DR)?}

A natural question to ask is, how is sorting related to Dimension Reduction (DR)? We argue that, although sorting in a differentiable manner is challenging, finding differentiable methods for DR is more straightforward. This offers new opportunities for creating learning-based methods to order points in high-dimensional spaces.

In a simpler scenario, let's say we are working with 1-D representations of all points in $\mathcal{X}$, which we denote as $\mathcal{H}$, and have normalized them to a range between 0 and 1. This allows us to define the problem of finding the optimal ordering as discovering the optimal DR mapping.

Consider a sorting algorithm, $s$, which takes $\mathcal{X}$ and transforms it into $Y$, the ordered sequence of the points. We can represent this sorting algorithm with a DR mapping, $f$, which takes points in the original high-dimensional space $\bm{x}_i$ and reduces it to their 1-D value, $h_i$. 

Note that this DR mapping, $f$, assigns each token $\bm{x_i}$ a specific position, $h_i$, in the latent space, regardless of the combinations of tokens it appears with in $\mathcal{X}$. Therefore, the task of ordering any arbitrary combination of tokens in $\mathcal{X}$ essentially boils down to finding a sorted path traversal in their 1-D latent space representations.

\textbf{Why Local DR Instead of Non-Local DR?}

We broadly classify DR methods that can be used for sorting into two categories: local and non-local. Local DR methods compute the DR mapping based solely on the value of an individual token, independent of other tokens in the set. In contrast, non-local DR methods involve the interaction between different tokens in the set. Non-local methods such as Convolutional Neural Networks (CNNs) or autoregressive Transformer models inherently assume some form of structure or relationships among tokens when performing DR.

For the problem at hand, we do not have any prior knowledge about the best ordering of tokens. Therefore, assuming any form of relationship among tokens, as non-local DR methods do, might not be appropriate. For example, in the case of 1D-CNNs, the 1-D representation depends on the arrangement of neighboring tokens. Since we do not know the correct ordering, deciding what tokens should be considered neighbors can be ambiguous and arbitrary. This ambiguity will then be reflected in the 1-D representations, leading to potential inconsistencies and inaccuracies.
Similarly, using autoregressive Transformers for DR involves an even greater level of complexity since every token can potentially influence every other token. However, unlike standard Transformers, the autoregressive nature of these models limits the flow of information to a directional sequence, which complicates the global interaction of tokens when the correct ordering is unknown. This suggests an implicit assumption of a global sequence structure among all tokens in the set. Yet, without knowing the correct ordering of tokens, determining this structure becomes a chicken-and-egg problem.

There is one notable exception to the aforementioned issue with non-local DR methods: the self-attention mechanism of Transformers without using positional encoding. It treats the token set as a whole, without assuming any specific order or structure, thanks to its full-attention property. This full-attention mechanism, which allows the model to attend to all tokens in the set, is the key to the Transformer's permutation invariance.
However, the full-attention mechanism has one significant drawback when used in the context of autoregressive models. The full-attention mechanism is inherently bidirectional, meaning that it allows for information flow from both "left" and "right" tokens, effectively modeling $P(\bm{x_i}| \bm{x_1}, \bm{x_2}, ..., \bm{x_M})$. This is contrary to the autoregressive models that we use for graph generation, which only attend to "left" side tokens and learn $P(\bm{x_i}| \bm{x_1}, \bm{x_2}, ..., \bm{x_{i-1}})$ \cite{vinyals2015order}.
The bidirectional nature of the full-attention mechanism means that it has access to future information that is unavailable to the autoregressive models. Consequently, the autoregressive models cannot learn the ordering rule represented by the full-attention Transformer, as they are inherently unable to incorporate ``right" side information. This discrepancy between the two types of models poses a significant challenge in aligning the orderings learned by them, and thus limits the utility of using a full-attention Transformer for ordering in the context of training autoregressive models.

In contrast to all the non-local DR methods mentioned above, local DR methods like MLPs do not require any assumptions about relationships among tokens. The 1-D representations are computed based solely on the properties of individual tokens, which are the only certain entities in our context. Therefore, in our scenario where the correct ordering of tokens is unknown, local DR proves to be a more reliable and less assumptive approach for obtaining 1-D representations of tokens.

\textbf{Restated Definition of the Ordering Problem}:
Given a set of $M$ unordered points (or vector tokens) $\mathcal{X} \subseteq \mathbb{R}^N$,  $|\mathcal{X}| = M$ in an $N$-dimensional space ($N > 1$), we are interested in finding the ``optimal'' ordering of points in $\mathcal{X}$, denoted by $Y^* \in \mathbb{R}^{M \times N}$, which when used as the target sequence to supervise an autoregressive model yields optimal performance in generating $\mathcal{X}$.
Formally, let us denote the ordered sequence $Y^*$ as the matrix $\left[\bm{x_1}^*, \bm{x_2}^*, ..., \bm{x_M}^* \right]^{\intercal}$, where every row $\bm{x_i}^*$ of $Y^*$ is a point in $\mathcal{X}$ and $i$ denotes its sorted index in the optimal ordering. For now, we assume that such an ordering exists for every set $\mathcal{X}$, and we will discuss the desired property of it in Section \ref{sec:ap_optimal_sort}. For simplicity, we assume the 1-D representations of all the points in $\mathcal{X}$, denoted by $\mathcal{H}$, are normalized to $[0, 1]$. 

\defSorting*

We can thus formulate the problem of finding the optimal sorting $s^*: \mathcal{X} \rightarrow Y^*$ as finding an optimal DR mapping $f^*: \mathbb{R}^N \rightarrow \mathbb{R}$, $h_i^* = f^*(\bm{x_i}^*)$, and $\forall i \leq j \leq M, f^*(\bm{x_i}^*) \leq f^*(\bm{x_j}^*)$.

\textbf{Modeling the Errors of a Sorting Algorithm:} Assuming we know the target sequence $Y^*$ and we only want to measure the alignment between a sequence $Y$ and $Y^*$, we introduce a probability matrix $P \in \mathbb{R}^{M \times M}$, where each entry $p_{ij}$ within $P$ is designed to represent the probability of $\bm{x_i}$ ($i$-th element in $Y$) being $\bm{x_j}^*$ ($j$-th element in $Y^*$). 
In this way, the expected value of the ordering $Y$, denoted by $\mathbb{E}[Y]$, can be computed as $PY^*$. The matrix $P$  serves as a ``soft" permutation matrix that transforms $Y^*$ into $Y$. This error representation is generic for analyzing different types of errors that may emerge in a sorting algorithm.

For simplicity, we use Frobenius norm along with the $P$ matrix representation: 
\begin{equation}
\mathcal{E}\Big(\mathbb{E}[Y], Y^*\Big) = \big\Vert \mathbb{E}[Y] - Y^* \big\Vert_F^2 = \big\Vert PY^* - Y^* \big\Vert_F^2,
\end{equation}
where $\Vert \cdot \Vert_F$ denotes the Frobenius norm. 
We show that the minimization over $\mathcal{E}\big(\mathbb{E}[Y], Y^*\big)$ can be solved by an equivalent problem of minimizing $\Vert P - I_M \Vert_F^2$ (see Lemma \ref{lemma:optimization_equivalence}). 

\subsection{Definition of The Ordering Ambiguity}

We refer to the problem where multiple points in $\mathcal{X}$ are assigned the same 1-D representation value as \textit{ordering ambiguity}. It is problematic since the ordering of tokens with the same 1-D latent value becomes undefined if no further tie-breaking schemes are defined.

\textbf{Why Ordering Ambiguity Is Almost Inevitable to Local DR?}
As we discussed earlier, local DR methods are more suitable for sorting tokens used for training the autoregressive models. 
Local DR methods are characterized by the computation of dimensionality reduction solely based on the value of an individual token, without considering the context or interactions among other tokens within the set. Since high-to-low-dimensional mappings are mostly surjective (each element in the target space has a pre-image in the domain) and not bijective (one-to-one correspondence), it is very likely that multiple N-D points share the same 1-D representation value (see Remark \ref{remark:bijective_dimensionality_reduction}). This property is a fundamental source of ordering ambiguity, making it an almost inevitable occurrence in local DR methods. Non-local DR methods, on the other hand, can dynamically adapt to alterations in other data points, thereby circumventing collisions in the low-dimensional space. This adaptability can potentially prevent the occurrence of ordering ambiguity. 

The ordering ambiguity of a sorting algorithm can be characterized by an ordering ambiguity set, which is defined as follows:  
\defSortAmbiguity*

\subsection{The Errors From Ordering Ambiguity}
\label{sec:ap_sorting_ambiguity}

In the presence of ordering ambiguity, the resulting sorted sequence $Y$ may not be an accurate representation of the target sorted sequence $Y^*$. We can use the above $P$ matrix to quantify the error introduced by ordering ambiguity. 

Assuming that $Y = \mathbb{E}[Y] = Y^*$ in the absence of ordering ambiguity, we assume the autoregressive model will converge to the expectation of the ambiguity set $\mathcal{A}_i$ for each point $\bm{x_i}$ in $\mathcal{A}_i$, i.e.,
\begin{equation}
    \bm{x_i} = \mathbb{E}_{\bm{x_j} \in \mathcal{A}_i}\left[ \bm{x_j} \right],
\end{equation}
When ordering ambiguity is present, we assume the probabilities are spread  over the members of the corresponding ambiguity set $\mathcal{A}_i$ uniformly, i.e.,
\begin{equation}
\label{eq:p_value_latent_ambiguity}
p_{ij} = \begin{cases}
\frac{1}{|\mathcal{A}_i|} & \text{if} \ \bm{x_j}^* \in \mathcal{A}_i \\
0 & \text{otherwise}
\end{cases}
\end{equation}
Therefore we have:
\begin{equation}
\mathbb{E}[Y] - Y^* = PY^* - Y^* = \begin{bmatrix}
\left(\frac{1}{|\mathcal{A}_1|} \sum_{\bm{x_j}^* \in \mathcal{A}_1} \bm{x_j}^* \right)^\intercal - {\bm{x_1}^*}^\intercal \\
\left(\frac{1}{|\mathcal{A}_2|} \sum_{\bm{x_j}^* \in \mathcal{A}_2} \bm{x_j}^* \right)^\intercal - {\bm{x_2}^*}^\intercal \\
\vdots \\
\left(\frac{1}{|\mathcal{A}_M|} \sum_{\bm{x_j}^* \in \mathcal{A}_M} \bm{x_j}^* \right)^\intercal - {\bm{x_M}^*}^\intercal
\end{bmatrix}
\end{equation}

Now we can calculate the error $\mathcal{E}\big(\mathbb{E}[Y], Y^*\big)$ by taking the squared Frobenius norm of the difference between $PY^*$ and $Y^*$:
\begin{align}
\mathcal{E}\Big(\mathbb{E}[Y], Y^*\Big) &= \Vert PY^* - Y^* \Vert_F^2 \\
&= \sum_{i=1}^M \Bigg\Vert \left(\frac{1}{|\mathcal{A}_i|} \sum_{\bm{x_j}^* \in \mathcal{A}_i} \bm{x_j}^* \right) - \bm{x_i}^*\ \Bigg\Vert^2 \\
&= \sum_{i=1}^M \Bigg\Vert \left(\frac{1}{|\mathcal{A}_i|} \sum_{\bm{x_j} \in \mathcal{A}_i} \bm{x_j} \right) - \bm{x_i}\ \Bigg\Vert^2 \label{eq:ap_error_sorting_ambiguity}
\end{align}

Here, $\Vert \cdot \Vert_F$ denotes the Frobenius norm, which is the square root of the sum of the squared elements of a matrix or vector. Note that we leverage the summation in the formula to make the error term independent from the definition of the target sequence $Y^*$.

\textbf{The Impact of Ordering Ambiguity on Autoregressive Models:} The ordering of tokens within an ordering ambiguity set is determined using a uniform distribution in the above analysis, which might seem confusing, given that the DR methods we employ are deterministic and, theoretically, should always yield a consistent order rather than a ``random" one. However, the randomness can be understood from the perspective of the training data input into the autoregressive models.

In training autoregressive models, our objective is to enable the model to learn generalizable rules automatically from the training data, including those determining the ordering of tokens. Suppose the ordering varies dramatically among different sequences in the training data with similar patterns. In that case, the model may struggle to identify the factors causing these discrepancies in ordering. Instead, it may resort to learning an average of possible orderings (see Figure \ref{fig:sorting_ambiguity_intro}), a phenomenon that is commonly observed when the models have difficulty fitting the data.

The issue of ordering ambiguity compounds this problem. Without a consistent and reliable ordering rule for tokens within the ambiguity set, different samples in the dataset might exhibit varied ordering. Such inconsistency makes it challenging for autoregressive models to effectively learn the underlying ordering rule. As such, the presence of ordering ambiguity in the training data can significantly hamper the performance of autoregressive models in terms of sequence prediction.

\begin{figure}[ht]
    \centering
    \includegraphics[width=\textwidth]{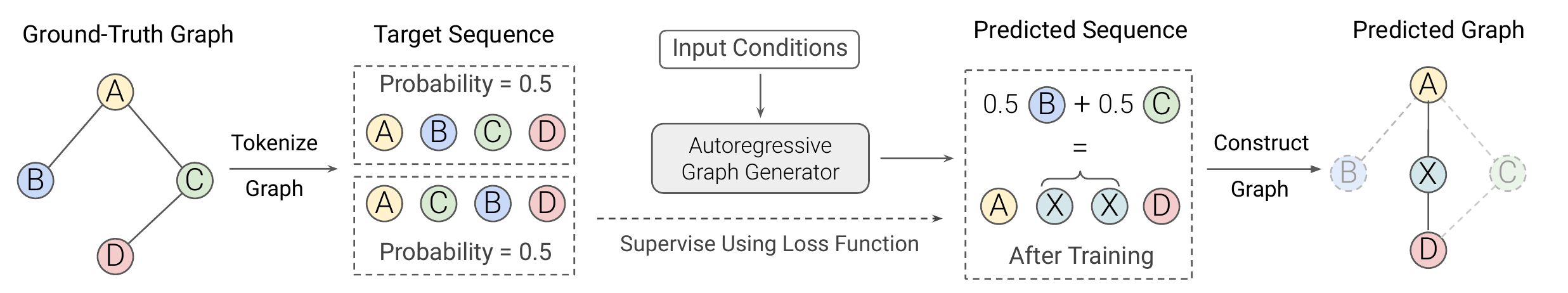}
    \caption{Illustration of Ordering Ambiguity Problem In Autoregressive Graph Generation.}
    \label{fig:sorting_ambiguity_intro}
\end{figure}

\subsection{The Latent Sort Algorithm}
\label{sec:latent_sort_theory}

The main idea of the latent sort algorithm is to use an auto-encoder to learn a dimensionality reduction mapping, which enjoys high degree of flexibility when paired with different loss functions to achieve different goals. 
In our design, the auto-encoder consists of two multi-layer perceptrons (MLPs) as the encoder and decoder, respectively, denoted as $f_e$ and $f_d$. We can use the encoder for sorting $\bm{x_i}$ and the ordering is given by the encodede 1-D latent representation $h_i = f_e(\bm{x_i})$. 

\textbf{The Ordering Ambiguity of Latent Sort:}
Since the auto-encoder uses deterministic neural networks (e.g., MLPs), the encoder and decoder are both surjective (see Remark \ref{remark:surjective_neural_network}). The auto-encoder uses reconstruction loss during training. If the auto-encoder has perfection reconstruction, its encoder and decoder are both bijective (see Remark \ref{remark:bijective_autoencoder}) and there would be no ordering ambiguity. However, we know it cannot be true for most cases (see Remark \ref{remark:bijective_dimensionality_reduction}). Therefore, when there exists no bijective mapping between $\mathcal{X}$ and $\mathcal{H}$, ordering ambiguity in inevitable in latent sort. When multiple points in $\mathcal{X}$ are mapped to the same point in $\mathcal{H}$, we can use Equation (\ref{eq:error_sorting_ambiguity}) to estimate the error due to ordering ambiguity. 

\textbf{Some Theoretical Properties of The Latent Sort Autoencoders:}

For latent sort, since we adopt auto-encoder for dimensionality reduction, there are more theoretical properties due to the fact that the neural networks are often Lipschitz continuous.

Assume that the auto-encoder is initialized using a random distribution with a small variance, and the model is trained with proper regularization, such that after training, the encoder and decoder are Lipschitz continuous with constants $K_e$ and $K_d$ respectively, and $K_e K_d \geq 1$. Assume for all $\bm{x_i} \in \mathcal{X}$ and its reconstructed $\hat{\bm{x_i}} = f_d(f_e(\bm{x_i}))$, $\| \bm{x_i} - \hat{\bm{x_i}} \|$ has a constant upper bound $B$. Assuming $\forall \bm{x_i} \in \mathcal{X}$, there always exist at least one point $h_i^* \in R$ such that $f_d(h_i^*) = \bm{x_i}$, we have the following theorems on the relationship between the latent representation error and the reconstruction error:

\vspace{2ex}

\begin{theorem}[Bounded Original Distance]
\label{th:bounded_data_distance}
Let $\bm{x_i}$ and $\bm{x_j}$ be two distinct points in $\mathcal{X} \subseteq \mathbb{R}^N$, and let $h_i = f_e(\bm{x_i})$ and $h_j = f_e(\bm{x_j})$ be their corresponding latent representations, and $\hat{\bm{x_i}} = f_d(h_i)$ and $\hat{\bm{x_j}} = f_d(h_j)$ be their corresponding reconstructions. If $| h_i - h_j | \leq \epsilon$, where constant $\epsilon > 0$, then the distance $\| \bm{x_i} - \bm{x_j} \|$ has an upper bound $2B + {K_d} \epsilon$.
\end{theorem}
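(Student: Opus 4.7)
The plan is a direct application of the triangle inequality together with the two hypotheses of the theorem: the uniform reconstruction bound $\|\bm{x} - f_d(f_e(\bm{x}))\| \leq B$ and the Lipschitz continuity of the decoder with constant $K_d$. The idea is to ``route'' the distance $\|\bm{x_i} - \bm{x_j}\|$ through the reconstructions $\hat{\bm{x_i}}$ and $\hat{\bm{x_j}}$, since the only direct link we have between the original space and the latent space is via $f_d$.

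First, I would insert the reconstructions into the distance and apply the triangle inequality:
\begin{equation*}
\|\bm{x_i} - \bm{x_j}\| \;\leq\; \|\bm{x_i} - \hat{\bm{x_i}}\| + \|\hat{\bm{x_i}} - \hat{\bm{x_j}}\| + \|\hat{\bm{x_j}} - \bm{x_j}\|.
\end{equation*}
Next, I would bound each of the three terms. The first and third are bounded by $B$ by the reconstruction hypothesis. For the middle term, I would write $\hat{\bm{x_i}} - \hat{\bm{x_j}} = f_d(h_i) - f_d(h_j)$ and apply the $K_d$-Lipschitz property of the decoder to conclude $\|\hat{\bm{x_i}} - \hat{\bm{x_j}}\| \leq K_d |h_i - h_j| \leq K_d \epsilon$. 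Summing gives $\|\bm{x_i} - \bm{x_j}\| \leq 2B + K_d \epsilon$.

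There is essentially no hard step here: the reconstruction bound $B$ absorbs the autoencoder's approximation error, and the decoder's Lipschitz constant translates latent closeness to ambient closeness. The only subtlety worth flagging is that the hypothesis ``there exists $h_i^* \in \mathbb{R}$ with $f_d(h_i^*) = \bm{x_i}$'' is not actually needed for this statement; the proof only uses the encoder-then-decoder composition on $\bm{x_i}$ and $\bm{x_j}$. I would mention this briefly so the reader understands that the existence assumption is being reserved for the companion result (Theorem \ref{th:bounded_latent_distance}) on bounding latent distance from original distance, where the ability to pull $\bm{x_i}$ back to an exact pre-image under $f_d$ does become essential.
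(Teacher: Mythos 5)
Your proof is correct, but it takes a genuinely different (and in fact more economical) route than the paper. The paper works through the exact latent pre-images $h_i^*, h_j^*$ with $f_d(h_i^*) = \bm{x_i}$, $f_d(h_j^*) = \bm{x_j}$: it writes $\| \bm{x_i} - \bm{x_j} \| = \| f_d(h_i^*) - f_d(h_j^*) \| \leq K_d |h_i^* - h_j^*|$, splits $|h_i^* - h_j^*| \leq |h_i^* - h_i| + |h_i - h_j| + |h_j - h_j^*|$, and then converts the outer terms back using $|h_i^* - h_i| \leq \tfrac{1}{K_d}\| f_d(h_i^*) - f_d(h_i) \| \leq \tfrac{B}{K_d}$. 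That conversion step is the weak point of the paper's argument: the assumed $K_d$-Lipschitz continuity of $f_d$ gives $\| f_d(h_i^*) - f_d(h_i) \| \leq K_d |h_i^* - h_i|$, i.e., the reverse inequality, so bounding $|h_i^* - h_i|$ by the reconstruction error implicitly requires a lower (co-Lipschitz) bound on the decoder that was never assumed. Your decomposition $\|\bm{x_i} - \bm{x_j}\| \leq \|\bm{x_i} - \hat{\bm{x_i}}\| + \|\hat{\bm{x_i}} - \hat{\bm{x_j}}\| + \|\hat{\bm{x_j}} - \bm{x_j}\|$ applies the triangle inequality directly in the ambient space and only ever uses the Lipschitz bound in its stated direction, so it reaches the same constant $2B + K_d\epsilon$ under strictly weaker hypotheses. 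Your observation that the pre-image existence assumption is not needed here is also accurate: the paper's proof requires it to define $h_i^*, h_j^*$, whereas yours does not, and it is indeed only genuinely needed for the companion bound in Theorem \ref{th:bounded_latent_distance}. (The paper's additional check that the bound is "consistent" via $K_e K_d \geq 1$ is not needed in your argument either.)
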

\begin{proof}
From the assumptions, we know that the decoder $f_d$ is $K_d$-Lipschitz continuous. Let $f_d(h_i^*) = \bm{x_i}$ and $f_d(h_j^*) = \bm{x_j}$, where $h_i^*$ and $h_j^*$ are the two 1-D representation values that could result in perfect reconstruction. Since the reconstruction error has a constant upper bound $B$ for all points in $\mathcal{X}$, we get
\begin{align*}
\| \bm{x_i} - \hat{\bm{x_i}} \| &=  \| \bm{x_i} - f_d(h_i) \| \\
&=  \| f_d(h_i^*) - f_d(h_i) \| \\
&\leq K_d | h_i^* - h_i | \leq B\\
\| \bm{x_j} - \hat{\bm{x_j}} \| &=  \| \bm{x_j} - f_d(h_j) \| \\
&=  \| f_d(h_j^*) - f_d(h_j) \| \\
&\leq K_d | h_j^* - h_j | \leq B
\end{align*}
Using $| h_i - h_j | \leq \epsilon$ and triangle inequality we have: 
\begin{align*}
\| \bm{x_i} - \bm{x_j} \| &=  \| f_d(h_i^*) - f_d(h_j^*) \| \\
&\leq K_d | h_i^* - h_j^* | \\
&= K_d ( | h_i^* - h_i  +  h_i - h_j  +  h_j - h_j^* | ) \\
&\leq K_d ( | h_i^* - h_i | + | h_i - h_j | + | h_j - h_j^* | ) \\
&\leq K_d \left( \frac{1}{K_d} \big\| f_d(h_i^*) - f_d(h_i) \big\| + \epsilon + \frac{1}{K_d} \big\| f_d(h_j^*) - f_d(h_j) \big\| \right) \\
&= \| f_d(h_i^*) - f_d(h_i) \| + {K_d} \epsilon + \| f_d(h_j^*) - f_d(h_j) \| \\
&\leq 2B + {K_d} \epsilon, 
\end{align*}
which gives the upper bound of $\| \bm{x_i} - \bm{x_j} \|$.

In addition, we need to examine whether the upper bound exists. Since the encoder is $K_e$-Lipschitz continuous everywhere on $\mathcal{X}$ and we have assumed $K_e K_d \geq 1$, thus
\begin{align}
| h_i - h_j | &\leq {K_e} \| \bm{x_i} - \bm{x_j} \|  \\
 &\leq {K_e} (2B + {K_d} \epsilon )  \\
 &= 2 {K_e} B + {K_e} {K_d} \epsilon 
\label{eq:th1_inequality_condition}
\end{align}
Since $| h_i - h_j | \leq \epsilon$ and $2 K_e B \geq 0$, inequality (\ref{eq:th1_inequality_condition}) is always true for any $\bm{x_i}$ and $\bm{x_j}$ in $\mathcal{X}$. Therefore, the upper bound exists as stated in the theorem.
\end{proof}

Theorem \ref{th:bounded_data_distance} establishes a relationship between the distance in the original space and the distance in the latent representation space learned by an auto-encoder. It states that if two data points have similar latent representations, their distance in the original space is also small, with an upper bound proportional to the Lipschitz constants of the encoder and decoder. 

\vspace{1ex}

\begin{theorem}[Bounded Latent Distance]
\label{th:bounded_latent_distance}
Let $\bm{x_i}$ and $\bm{x_j}$ be two distinct points in $\mathcal{X} \subseteq \mathbb{R}^N$, and let $h_i = f_e(\bm{x_i})$ and $h_j = f_e(\bm{x_j})$ be their corresponding latent representations, and $\hat{\bm{x_i}} = f_d(h_i)$ and $\hat{\bm{x_j}} = f_d(h_j)$ be their corresponding reconstructions. If the distance $\| \bm{x_i} - \bm{x_j} \|$ is greater than or equal to a constant $D$, and $D \geq 2B$, then the difference in their latent representations $| h_i - h_j |$ is larger than a constant lower bound $( D - 2B ) / {K_d} $.
\end{theorem}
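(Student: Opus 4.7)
The plan is to combine the Lipschitz continuity of the decoder $f_d$ with the assumed uniform reconstruction bound $B$, applied through a triangle inequality on $\bm{x_i}$, $\hat{\bm{x_i}}$, $\hat{\bm{x_j}}$, $\bm{x_j}$. This is in some sense the dual direction to Theorem \ref{th:bounded_data_distance}: there, closeness in latent space was used to upper-bound distance in the original space, whereas here we want to use distance in the original space to lower-bound distance in latent space.

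First, I would invoke the reconstruction bound to write $\|\bm{x_i} - \hat{\bm{x_i}}\| \leq B$ and $\|\bm{x_j} - \hat{\bm{x_j}}\| \leq B$, and then expand $\|\bm{x_i} - \bm{x_j}\|$ via the four-point triangle inequality as
\[
\|\bm{x_i} - \bm{x_j}\| \leq \|\bm{x_i} - \hat{\bm{x_i}}\| + \|\hat{\bm{x_i}} - \hat{\bm{x_j}}\| + \|\hat{\bm{x_j}} - \bm{x_j}\|.
\]
Next, I would apply the $K_d$-Lipschitz property of $f_d$ to the middle term, getting $\|\hat{\bm{x_i}} - \hat{\bm{x_j}}\| = \|f_d(h_i) - f_d(h_j)\| \leq K_d |h_i - h_j|$. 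Combining with the reconstruction bounds yields
\[
\|\bm{x_i} - \bm{x_j}\| \leq 2B + K_d |h_i - h_j|.
\]

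Finally, I would use the hypothesis $\|\bm{x_i} - \bm{x_j}\| \geq D$ to rearrange into $K_d |h_i - h_j| \geq D - 2B$, i.e.\ $|h_i - h_j| \geq (D - 2B)/K_d$. The assumption $D \geq 2B$ is what makes the resulting lower bound nonnegative and therefore nontrivial; without it, the inequality would still hold but be vacuous.

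There is really no hard step here: the entire argument is a single triangle inequality plus one Lipschitz application, mirroring the forward direction in Theorem \ref{th:bounded_data_distance}. The only subtlety worth flagging is that this theorem does \emph{not} need the auxiliary perfect-reconstruction points $h_i^*, h_j^*$ that Theorem \ref{th:bounded_data_distance} used, since we are going from original-space distance to latent-space distance and can route through $\hat{\bm{x_i}}, \hat{\bm{x_j}}$ directly; also, we do not need $f_e$ to be Lipschitz for this direction, only the decoder. I would make both of these observations explicit in the final write-up so the asymmetry between the two theorems is clear to the reader.
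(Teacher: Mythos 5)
Your proof is correct, and it takes a genuinely different route from the paper's. The paper argues in the latent space: it introduces the ideal pre-images $h_i^*, h_j^*$ with $f_d(h_i^*) = \bm{x_i}$, $f_d(h_j^*) = \bm{x_j}$ (whose existence is a separate standing assumption), converts the hypotheses into latent-space bounds of the form $|h_i^* - h_j^*| \geq D/K_d$ and $|h_i^* - h_i| \leq B/K_d$, and then applies a reverse triangle inequality to $|h_i - h_j|$, finishing with a consistency check that uses the encoder's Lipschitz constant and $K_e K_d \geq 1$. You instead stay entirely in the original space: the four-point triangle inequality through $\hat{\bm{x_i}}, \hat{\bm{x_j}}$ plus one application of the decoder's Lipschitz property gives $D \leq \|\bm{x_i} - \bm{x_j}\| \leq 2B + K_d\,|h_i - h_j|$, and rearranging finishes the proof. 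Your version buys something real: it needs neither the existence of exact pre-images $h_i^*, h_j^*$ nor any property of $f_e$, and it avoids the paper's intermediate step $K_d |h_i^* - h_i| \leq B$, which does not actually follow from the reconstruction bound $\|\bm{x_i} - \hat{\bm{x_i}}\| \leq B$ (the Lipschitz inequality only bounds the reconstruction error \emph{above} by $K_d |h_i^* - h_i|$, so smallness of the former does not control the latter). In that sense your argument is not just more elementary but also tighter in its use of hypotheses; your closing remarks about the asymmetry with Theorem \ref{th:bounded_data_distance} are accurate and worth keeping.
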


\begin{proof}
Let $f_d(h_i^*) = \bm{x_i}$ and  $f_d(h_j^*) = \bm{x_j}$. Since $f_d$ is Lipschitz continuous with constant $K_d$, we know that
\begin{align*}
\| \bm{x_i} - \hat{\bm{x_i}} \| &\leq K_d | h_i^* - h_i | \leq B \\
\| \bm{x_j} - \hat{\bm{x_j}} \| &\leq K_d | h_j^* - h_j | \leq B \\
D  \leq  \| \bm{x_i} - \bm{x_j} \| &\leq K_d | h_i^* - h_j^* |
\end{align*}

From the assumptions, we know that the distance $\| \bm{x_i} - \bm{x_j} \| \geq D$ and $D \geq 2B$. Using reverse triangle inequality, we get
\begin{align}
| h_i - h_j | &= | (h_i^* - h_j^*) - (h_i^* - h_i) + (h_j^* - h_j) | \\
&\geq \Big| | h_i^* - h_j^* | - | (h_i^* - h_i) - (h_j^* - h_j) | \Big|\\
&\geq \Big| | h_i^* - h_j^* | - | h_i^* - h_i | - | h_j^* - h_j | \Big|\\
&\geq \frac{1}{K_d} \Big| D - B - B \Big| \\
&\geq \frac{1}{K_d} ( D - 2B )
\end{align}
Thus we obtain the lower bound for $| h_i - h_j |$.

In addition, we need to examine whether the lower bound exists. Since $f_e$ is $K_e$-Lipschitz continuous and $K_e K_d \geq 1$, we have
\begin{align}
| h_i - h_j | &\leq {K_e} \| \bm{x_i} - \bm{x_j} \| \\
\frac{1}{K_d} ( D - 2B ) &\leq {K_e} \| \bm{x_i} - \bm{x_j} \| \\
\frac{D - 2B}{ K_e K_d} &\leq D \leq \| \bm{x_i} - \bm{x_j} \| \label{eq:th2_inequality_condition}
\end{align}
Inequality (\ref{eq:th2_inequality_condition}) is always true therefore the lower bound exists as stated in the theorem.
\end{proof}
Theorem \ref{th:bounded_data_distance} provides a complementary perspective to Theorem \ref{th:bounded_latent_distance}. While the first theorem bounds the distance in the original space based on the distance in the latent representation space, the second theorem bounds the difference in the latent representations based on the distance in the original space. This is useful to estimate the error from latent ambiguity in latent sort, which we will discuss in the later sections.

Interestingly, it has a connection to ordering ambiguity. If the ordering ambiguity problem is serious and the ambiguity sets are large, then the upper bound $B$ for the reconstruction error is expected to be large as well. In such cases,  $D - 2B$ in the original space can shrink to close to zero, which inversely supports the existence of ordering ambiguity.

\subsection{The Errors From Imperfect Reconstruction}
\label{sec:ap_latent_sort_error}

Previously, when discussing the errors from ambiguity, we assumed that $Y = Y^*$ in the absence of ordering ambiguity, even though it is not realistic since the auto-encoder used in latent sort is not bijective in most cases. To quantify the errors from imperfect reconstruction, we assume no ordering ambiguity in this section.

We model the imperfect reconstruction by assigning a non-zero variance to the reconstructed $\hat{\bm{x_i}}$, and we assume the mean value equals to the target $\bm{x_i}$. We assume $\hat{\bm{x_i}}$ follows Gaussian distribution (see Remark \ref{remark:reconstruction_gaussian}), and larger variances indicate worse reconstruction, and vice versa. 

Figure \ref{fig:theorem_3} shows a schematic plot of an autoencoder that has imperfect reconstruction. The ideal autoencoder that has perfect reconstruction are shown as the dashed lines in the plot, indicating the bijectivity of the autoencoder. However, when the reconstruction is not perfect, we assume that the $\hat{h_i}$ is a sample from the distribution that are centered around $h_i$.

\begin{figure}[ht]
    \centering
    \includegraphics[width=0.75\textwidth]{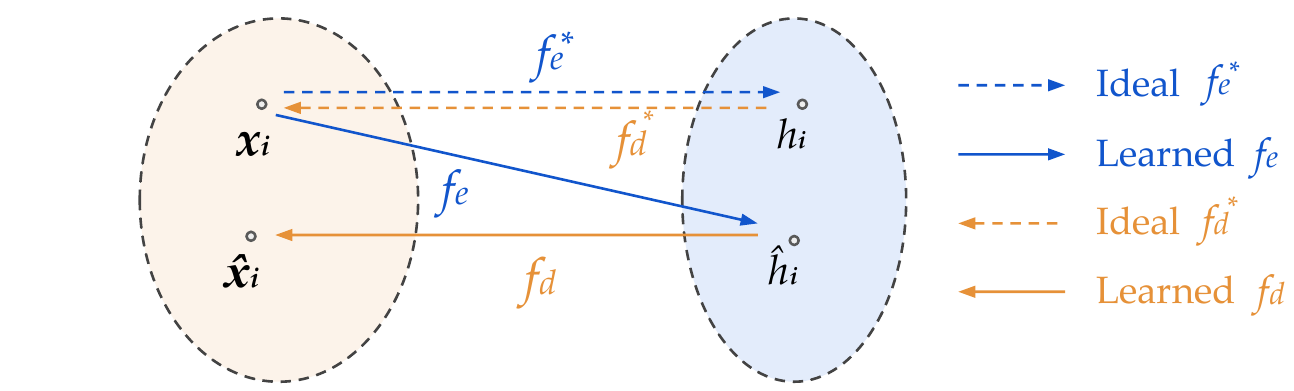}
    \caption{A schematic representation of the autoencoder under imperfect reconstruction.}
    \label{fig:theorem_3}
\end{figure}

Now, we want to understand, when observing the reconstruction error in the original N-D space, can we estimate the distribution of the 1-D latent distribution? This motivates the following theorem.

\vspace{2ex}

\begin{theorem}[Latent Distribution Under Imperfect Reconstruction]
Let $\bm{\sigma_{\hat{\bm{x_i}}}^2} \in \mathbb{R}^N$ be the component-wise variance vector of the reconstructed point $\hat{\bm{x_i}}$, and $\bm{x_i} = f_d(h_i)$, $\hat{\bm{x_i}} = f_d(\hat{h_i})$. Assume that the encoder $f_e$ and decoder $f_d$ are Lipschitz continuous on $\mathcal{X}$ with constants $K_e$ and $K_d$ respectively. If $\hat{h_i}$ also follows a normal distribution, the following bounds hold for the mean $\mu_{\hat{h_i}}$ and variance $\sigma_{\hat{h_i}}^2$ of $\hat{h_i}$:
\begin{align}
\frac{\big\Vert \bm{\sigma_{\hat{\bm{x_i}}}} \big\Vert}{K_d} \leq \big| \mu_{\hat{h_i}} - h_i \big| \leq K_e \big\Vert \bm{\sigma_{\hat{\bm{x_i}}}} \big\Vert \\
\frac{1}{K_d^2}\  \big\Vert \bm{\sigma_{\hat{\bm{x_i}}}} \big\Vert^2 \leq \sigma_{\hat{h_i}}^2 \leq 4 K_e^2\  \big\Vert \bm{\sigma_{\hat{\bm{x_i}}}} \big\Vert^2
\end{align}
\label{th:error_latent_normal_distribution}
\end{theorem}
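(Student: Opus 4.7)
The plan is to invoke the Lipschitz continuity of $f_e$ and $f_d$ to transfer second-moment inequalities between the latent and reconstruction spaces, and then use the normality assumption on $\hat{h_i}$ to split these moments into a mean-shift part and a variance part via the identity $\mathbb{E}|\hat{h_i} - h_i|^2 = (\mu_{\hat{h_i}} - h_i)^2 + \sigma_{\hat{h_i}}^2$. Throughout I would treat $\hat{h_i} = f_e(\hat{\bm{x_i}})$ and $h_i = f_e(\bm{x_i})$, so that the mean and variance of $\hat{h_i}$ can be pushed through the encoder and pulled back through the decoder as Lipschitz images of the Gaussian $\hat{\bm{x_i}}$.

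First I would establish the upper bounds. Applying the encoder Lipschitz bound pointwise gives $|\hat{h_i} - h_i| \leq K_e \|\hat{\bm{x_i}} - \bm{x_i}\|$; squaring and taking expectations yields $(\mu_{\hat{h_i}} - h_i)^2 + \sigma_{\hat{h_i}}^2 \leq K_e^2\, \mathbb{E}\|\hat{\bm{x_i}} - \bm{x_i}\|^2 = K_e^2\, \|\bm{\sigma}_{\hat{\bm{x_i}}}\|^2$, since the reconstruction is centered at $\bm{x_i}$. This immediately gives $|\mu_{\hat{h_i}} - h_i| \leq K_e \|\bm{\sigma}_{\hat{\bm{x_i}}}\|$. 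For the factor-$4$ bound on the variance, I would use the triangle inequality $|\hat{h_i} - \mu_{\hat{h_i}}| \leq |\hat{h_i} - h_i| + |h_i - \mu_{\hat{h_i}}|$, combine it with $(a+b)^2 \leq 2(a^2+b^2)$, and take expectations, obtaining $\sigma_{\hat{h_i}}^2 \leq 2\,\mathbb{E}|\hat{h_i}-h_i|^2 + 2(\mu_{\hat{h_i}}-h_i)^2 \leq 4 K_e^2\, \|\bm{\sigma}_{\hat{\bm{x_i}}}\|^2$.

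Next, for the variance lower bound, I would apply the Gaussian Poincar\'e inequality to the $K_d$-Lipschitz map $f_d:\mathbb{R}\to\mathbb{R}^N$ acting on the normal random variable $\hat{h_i}$; summing component-wise variances yields $\|\bm{\sigma}_{\hat{\bm{x_i}}}\|^2 = \mathbb{E}\|\hat{\bm{x_i}} - \mathbb{E}[\hat{\bm{x_i}}]\|^2 \leq K_d^2\, \sigma_{\hat{h_i}}^2$, which rearranges to $\sigma_{\hat{h_i}}^2 \geq \|\bm{\sigma}_{\hat{\bm{x_i}}}\|^2/K_d^2$. A complementary pointwise application of the decoder Lipschitz inequality, $\|\hat{\bm{x_i}} - \bm{x_i}\|^2 \leq K_d^2 |\hat{h_i}-h_i|^2$, integrated against the Gaussian law of $\hat{h_i}$, would then produce the joint inequality $(\mu_{\hat{h_i}}-h_i)^2 + \sigma_{\hat{h_i}}^2 \geq \|\bm{\sigma}_{\hat{\bm{x_i}}}\|^2/K_d^2$.

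The main obstacle will be extracting the claimed separate lower bound $|\mu_{\hat{h_i}} - h_i| \geq \|\bm{\sigma}_{\hat{\bm{x_i}}}\|/K_d$, since the joint bound above combined with the Poincar\'e variance bound only yields the trivial statement $(\mu_{\hat{h_i}} - h_i)^2 \geq 0$. To close this gap I expect to need an additional argument exploiting the bias relation $\mathbb{E}[f_d(\hat{h_i})] = \bm{x_i} = f_d(h_i)$ together with the strict nonlinearity of $f_d$: a Jensen-type comparison of $f_d(\mu_{\hat{h_i}})$ against $\mathbb{E}[f_d(\hat{h_i})]$ quantifies the mean shift forced by the spread of $\hat{h_i}$, and a subsequent inversion through the decoder Lipschitz constant should deliver the scaling by $1/K_d$. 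If this route proves intractable without extra regularity, a cleaner fallback is to reinterpret the lower bound as acting on the pooled quantity $\sqrt{(\mu_{\hat{h_i}}-h_i)^2 + \sigma_{\hat{h_i}}^2}$, which the pointwise decoder Lipschitz argument already delivers exactly in the form claimed.
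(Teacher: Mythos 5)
Your handling of the two upper bounds and the variance lower bound is correct, and in fact it is \emph{more} careful than the paper's own proof. For the mean upper bound the paper jumps from $\mathbb{E}|\hat{h_i}-h_i|^2 \le K_e^2\|\bm{\sigma_{\hat{\bm{x_i}}}}\|^2$ directly to $|\mu_{\hat{h_i}}-h_i|\le K_e\|\bm{\sigma_{\hat{\bm{x_i}}}}\|$ by taking square roots, silently treating the second moment as if it were $(\mu_{\hat{h_i}}-h_i)^2$; your explicit split $\mathbb{E}|\hat{h_i}-h_i|^2=(\mu_{\hat{h_i}}-h_i)^2+\sigma_{\hat{h_i}}^2$ is what actually justifies discarding the variance term. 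Your $(a+b)^2\le 2(a^2+b^2)$ route to the factor $4$ matches the paper's expansion of $(|a|+|b|)^2$ plus Jensen, so that is a wash. For the variance lower bound your Poincar\'e argument is a genuine improvement: the paper tries Cauchy--Schwarz, but since $\mathbb{E}[(\hat{h_i}-h_i)(\hat{h_i}-\mu_{\hat{h_i}})]=\sigma_{\hat{h_i}}^2$ the chain it writes only recovers the trivial $\mathbb{E}[(\hat{h_i}-h_i)^2]\ge\sigma_{\hat{h_i}}^2$ and never actually brings in $K_d$. Your observation that a $K_d$-Lipschitz decoder applied to a scalar random variable has output variance bounded by $K_d^2\sigma_{\hat{h_i}}^2$ (by Gaussian Poincar\'e, or even more elementarily by $\Var(f_d(\hat{h_i}))=\tfrac12\mathbb{E}\|f_d(\hat{h_i})-f_d(\hat{h_i}')\|^2\le\tfrac12 K_d^2\,\mathbb{E}|\hat{h_i}-\hat{h_i}'|^2$ with an independent copy $\hat{h_i}'$, requiring no normality) does close that inequality cleanly.

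You are also right that the mean lower bound $|\mu_{\hat{h_i}}-h_i|\ge\|\bm{\sigma_{\hat{\bm{x_i}}}}\|/K_d$ is the irreducible obstruction, and you should be aware that the paper's own proof does not close this gap either: it derives only the pooled second-moment bound $\mathbb{E}|\hat{h_i}-h_i|^2\ge\|\bm{\sigma_{\hat{\bm{x_i}}}}\|^2/K_d^2$ and then performs the same illegitimate square-root step. In fact the stated inequality cannot follow from the hypotheses as given. Take $f_d$ affine and injective: the centering assumption $\mathbb{E}[\hat{\bm{x_i}}]=\bm{x_i}=f_d(h_i)$ together with $\mathbb{E}[f_d(\hat{h_i})]=f_d(\mu_{\hat{h_i}})$ forces $\mu_{\hat{h_i}}=h_i$, so the left side is zero while $\|\bm{\sigma_{\hat{\bm{x_i}}}}\|$ is strictly positive whenever reconstruction is imperfect. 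So the Jensen-comparison route you float would need strictly more than Lipschitzness to succeed, and your ``fallback'' reading --- that the $1/K_d$ lower bound is really a bound on $\sqrt{(\mu_{\hat{h_i}}-h_i)^2+\sigma_{\hat{h_i}}^2}$, equivalently on the root second moment $\sqrt{\mathbb{E}|\hat{h_i}-h_i|^2}$, rather than on the mean deviation alone --- is the statement the hypotheses actually support.
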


\begin{proof}
Consider the following chain of inequalities based on the Lipschitz continuity of $f_e$ and $f_d$:
\begin{align}
\|\bm{x_i} - \hat{\bm{x_i}}\| &= \|f_d(h_i) - f_d(\hat{h_i})\| \\
&\leq K_d \big|h_i - \hat{h_i}\big|.
\end{align}

Now, we can also write the Lipschitz continuity for the encoder $f_e$:
\begin{align}
\big|h_i - \hat{h_i}\big| &= |f_e(\bm{x_i}) - f_e(\hat{\bm{x_i}})| \\
&\leq K_e \|\bm{x_i} - \hat{\bm{x_i}}\|.
\end{align}

Combining the two inequalities, we get:
\begin{align}
    \frac{\|\bm{x_i} - \hat{\bm{x_i}}\|}{K_d}  \leq \big|h_i - \hat{h_i}\big| &\leq K_e \|\bm{x_i} - \hat{\bm{x_i}}\| \\
    \text{s.t.\ } \ K_e K_d &\geq 1
\end{align}

Taking the squared expectation on both sides:
\begin{equation}
\frac{\mathbb{E}\Big[ 
 \| \bm{x_i} - \hat{\bm{x_i}} \|^2 \Big]}{K_d^2} \leq \mathbb{E} \Big[ \big|h_i - \hat{h_i} \big|^2 \Big] \leq K_e^2 \cdot \mathbb{E}\Big[ 
 \| \bm{x_i} - \hat{\bm{x_i}} \|^2 \Big]
\end{equation}

Considering that $\mathbb{E}[\hat{\bm{x_i}}] = \bm{x_i}$ we have:
\begin{align}
\bm{\sigma_{\hat{\bm{x_i}}}^2} &= \Var(\hat{\bm{x_i}}) \\
&= \mathbb{E} \Big[ (\hat{\bm{x_i}} - \mathbb{E}\big[\hat{\bm{x_i}}\big])^2 \Big] \\
&= \mathbb{E} \big[ (\hat{\bm{x_i}} - \bm{x_i})^2 \big]
\end{align}

And we have
\begin{align}
    \mathbb{E}\Big[ 
 \| \bm{x_i} - \hat{\bm{x_i}} \|^2 \Big] &= \mathbb{E} \Big[ (\hat{\bm{x_i}} - \bm{x_i})^\intercal (\hat{\bm{x_i}} - \bm{x_i}) \Big] \\
&= \Big\Vert\ \mathbb{E} \big[ (\hat{\bm{x_i}} - \bm{x_i})^2 \big]\  \Big\Vert \\
&= \sum_{j=1}^{N} \sigma_{\hat{\bm{x_i}},j}^2 = \big\Vert \bm{\sigma_{\hat{\bm{x_i}}}} \big\Vert^2
\end{align}
where $\sigma_{\hat{\bm{x_i}},j}^2$ is the $j$-th component of the variance vector $\bm{\sigma_{\hat{\bm{x_i}}}^2}$.
Therefore we have the upper bound and lower bound for the mean value of $\hat{h_i}$:
\begin{align}
\frac{\big\Vert \bm{\sigma_{\hat{\bm{x_i}}}} \big\Vert^2}{K_d^2} &\leq \mathbb{E} \Big[ \big|h_i - \hat{h_i} \big|^2 \Big] \leq K_e^2 \cdot \big\Vert \bm{\sigma_{\hat{\bm{x_i}}}} \big\Vert^2 \\
\frac{\big\Vert \bm{\sigma_{\hat{\bm{x_i}}}} \big\Vert}{K_d} &\leq \big| \mu_{\hat{h_i}} - h_i \big| \leq K_e \big\Vert \bm{\sigma_{\hat{\bm{x_i}}}} \big\Vert \label{eq:latent_mean_bounds}
\end{align}

Now, consider the variance of $\hat{h_i}$. From the Lipschitz continuity of the encoder and decoder, we can use triangle inequality:
\begin{align}
\sigma_{\hat{h_i}}^2 = \Var(\hat{h_i}) &= \mathbb{E} \Big[ (\hat{h_i} - \mathbb{E}\big[\hat{h_i}\big])^2 \Big] = \mathbb{E} \big[ (\hat{h_i} - \mu_{\hat{h_i}})^2 \big]\\
&= \mathbb{E} \Big[ \big| \hat{h_i} - h_i + h_i - \mu_{\hat{h_i}} \big|^2 \Big]\\
&\leq \mathbb{E} \Big[ \big| f_e(\hat{\bm{x_i}}) - f_e(\bm{x_i}) \big|^2 \Big] + \big| \mu_{\hat{h_i}} - h_i \big|^2 \\
&\ \ \ \ + 2\mathbb{E} \Big[ \big| f_e(\hat{\bm{x_i}}) - f_e(\bm{x_i}) \big| \Big] \cdot \big| \mu_{\hat{h_i}} - h_i \big| \\
&\leq K_e^2\ \mathbb{E}\Big[ | \bm{x_i} - \hat{\bm{x_i}} |^2 \Big] + K_e^2 \big\Vert \bm{\sigma_{\hat{\bm{x_i}}}} \big\Vert^2 \\
&\ \ \ \ + 2 K_e^2\ \big\Vert \bm{\sigma_{\hat{\bm{x_i}}}} \big\Vert \cdot \mathbb{E}\Big[ | \bm{x_i} - \hat{\bm{x_i}} | \Big] \\
&= 4 K_e^2\ \big\Vert \bm{\sigma_{\hat{\bm{x_i}}}} \big\Vert^2
\end{align}

Conversely, using the Lipschitz continuity of the decoder, we can derive the lower bound of the variance:
\begin{align}
\sigma_{\hat{h_i}}^2 = \Var(\hat{h_i}) &= \mathbb{E} \Big[ (\hat{h_i} - \mathbb{E}\big[\hat{h_i}\big])^2 \Big] = \mathbb{E} \big[ (\hat{h_i} - \mu_{\hat{h_i}})^2 \big]\\
&= \mathbb{E} \Big[ \big| (\hat{h_i} - h_i) - (\mu_{\hat{h_i}} - h_i ) \big|^2 \Big]
\end{align}

We can apply the Cauchy-Schwarz inequality to above, which gives:
\begin{equation}
\mathbb{E}\left[ \left(\hat{h}_i - h_i\right)\left(\hat{h}_i - \mu_{\hat{h}_i}\right) \right]^2 \leq \mathbb{E}\left[ \left(\hat{h}_i - h_i\right)^2 \right]\mathbb{E}\left[ \left(\hat{h}_i - \mu_{\hat{h}_i}\right)^2 \right]
\end{equation}

We can rearrange the inequality:
\begin{equation}
\mathbb{E}\left[ \left(\hat{h}_i - \mu_{\hat{h}_i}\right)^2 \right] \geq \frac{\mathbb{E}\left[ \left(\hat{h}_i - h_i\right)\left(\hat{h}_i - \mu_{\hat{h}_i}\right) \right]^2}{\mathbb{E}\left[ \left(\hat{h}_i - h_i\right)^2 \right]}
\end{equation}

Using the triangle inequality, we have:
\begin{align}
    \Big[\ \mathbb{E} \big[ (\hat{h_i} - h_i)(h_i - \mu_{\hat{h_i}}) \big] \Big]^2
    & \leq \Big[ \ \mathbb{E} \big[ |\hat{h_i} - h_i| |h_i - \mu_{\hat{h_i}}| \big]  \Big]^2 \\
    & \leq \mathbb{E} \Big[ (\hat{h_i} - h_i)^2 \Big] \mathbb{E} \Big[ (h_i - \mu_{\hat{h_i}})^2 \Big]
\end{align}

Now, we can use this bound in the expression for the lower bound:
\begin{align}
\mathbb{E}\left[ \left(\hat{h}_i - \mu_{\hat{h}_i}\right)^2 \right] &\geq \frac{\mathbb{E}\left[ \left(\hat{h}_i - h_i\right)\left(\hat{h}_i - \mu_{\hat{h}_i}\right) \right]^2}{\mathbb{E}\left[ \left(\hat{h}_i - h_i\right)^2 \right]} \\
&\geq \frac{ \mathbb{E} \Big[ \big(\hat{h_i} - h_i\big)^2 \Big] \mathbb{E} \Big[ \big(h_i - \mu_{\hat{h_i}}\big)^2 \Big] }{\mathbb{E}\left[ \left(\hat{h}_i - h_i\right)^2 \right]} \\
&= \mathbb{E} \Big[ \big(h_i - \mu_{\hat{h_i}}\big)^2 \Big]
\end{align}
Note that $\mathbb{E} \big[ (h_i - \mu_{\hat{h_i}})^2 \big] = \big| \mu_{\hat{h_i}} - h_i \big|^2 \geq \big\Vert \bm{\sigma_{\hat{\bm{x_i}}}} \big\Vert^2 / K_d^2 $. Thus, we have obtained the bounds for the variance of $\hat{h_i}$ as follows:
\begin{equation}
\frac{1}{K_d^2} \cdot \big\Vert \bm{\sigma_{\hat{\bm{x_i}}}} \big\Vert^2  \leq \sigma_{\hat{h_i}}^2 \leq 4 K_e^2\  \big\Vert \bm{\sigma_{\hat{\bm{x_i}}}} \big\Vert^2
\end{equation}
\end{proof}

Theorem \ref{th:error_latent_normal_distribution} provides bounds on the mean and variance of the 1-D representation in terms of the Lipschitz continuity of the encoder and decoder and the component-wise variance of the reconstruction error.
To build theoretical bounds for the error from imperfect reconstruction, we need to consider how the latent distribution affect the sorting results.


\textbf{How Does Imperfect Reconstruction Affect Ordering Results?}

We have discussed the distribution of 1-D representation values when the reconstruction is not perfect. Connecting back to the ordering results, when two points $\bm{x_i}$ and $\bm{x_j}$ follow an ordering where $i < j$, we want the learned 1-D latent to always satisfy $\hat{h_i} < \hat{h_j}$. However, since their 1-D latent representations have non-zero variances, it is possible that $\hat{h_i} \geq \hat{h_j}$, resulting in a swap between $\bm{x_i}$ and $\bm{x_j}$ and leading to an error in the sorted sequence. Therefore we want to quantitatively measure how much error can we expect from imperfect reconstruction.

Assuming for all $\bm{x_i}^*$ in the target sequence $Y^*$, we have $ f_e(\bm{x_i}^*) = \hat{h_i^*} \sim \mathcal{N}(\mu_{\hat{h_i^*}}, \sigma_{\hat{h_i^*}}^2)$  independent of each other (using local DR). For each pair of $\hat{h_i^*}$ and $\hat{h_j^*}$ where $i \ne j$, the probability $P(\hat{h_i^*} > \hat{h_j^*})$ is given by (see Remark \ref{remark:probability_a_larger_than_b}):
\begin{equation}
    \label{eq:latent_ordering_gaussian}
   P(\hat{h_i^*} > \hat{h_j^*}) = 1 - \Phi \left[\ - (\mu_{\hat{h_i^*}} - \mu_{\hat{h_j^*}}) / \sqrt{(\sigma_{\hat{h_i^*}}^2 + \sigma_{\hat{h_j^*}}^2)}\ \right] 
\end{equation}
Here, $\Phi$ represents the cumulative distribution function (CDF) of the standard normal distribution. Since the ordering results in $Y$ is given by the ordering of latent representation, we can use Equation (\ref{eq:latent_ordering_gaussian}) to calculate matrix $P$ as follows.

Denote $\gamma_{ij} = P(\hat{h_i^*} < \hat{h_j^*}) =  \Phi \left[\ - (\mu_{\hat{h_i^*}} - \mu_{\hat{h_j^*}}) / \sqrt{(\sigma_{\hat{h_i^*}}^2 + \sigma_{\hat{h_j^*}}^2)}\ \right] $, we can calculate the value of each entry in $P$ (see Lemma \ref{lemma:comparing_normal_random_variables}) under the assumption of imperfect reconstruction as:
\begin{align}
    p_{ij} & = \sum_{\mathcal{S}_i,\mathcal{T}_i} \prod_{k \in \mathcal{S}_i} \Big[ P(\hat{h_j^*} < \hat{h_k^*}) \Big] \prod_{k \in \mathcal{T}_i} \Big[ P(\hat{h_j^*} > \hat{h_k^*}) \Big] \\
    & = \sum_{\mathcal{S}_i,\mathcal{T}_i} \prod_{k \in \mathcal{S}_i} \Big( \gamma_{jk}  \Big) \prod_{k \in \mathcal{T}_i} \Big( 1 - \gamma_{jk} \Big) \label{eq:p_value_from_imperfect_reconstruction}
\end{align}
where the summation is over all possible partitions of the index set $\{1, 2, ..., M\} \setminus {i}$ into two disjoint subsets $\mathcal{S}_i$ and $\mathcal{T}_i$ with $|\mathcal{S}_i| = i - 1$ and $|\mathcal{T}_i| = M - i$.

\textbf{An Intuitive Interpretation of Equation (\ref{eq:p_value_from_imperfect_reconstruction})}: To have the $j$-th element $\bm{x_j}^*$ from the target sequence $Y^*$ to be the $i$-th element $\bm{x_i}$ in the resulting sequence $Y$, the encoded 1-D representation of $\bm{x_j}^*$ has to be smaller than the 1-D latent of $M - i$ number of points and it has to be larger than the 1-D latent of $i - 1$ number of points. We need to consider all partitions to sum up the probabilities from all possible orderings that has the $j$-th element from $Y^*$ to be placed at $i$-th location in $Y$. 

Note that Equation (\ref{eq:p_value_from_imperfect_reconstruction}) assumes no ordering ambiguity is presented. If we want to get a more accurate estimation of the error for latent sort algorithm, we can add ordering ambiguity in it. The composite $P$ matrix can be estimated by simply calculating a weighted summation over the two $P$ matrices from Equation (\ref{eq:p_value_latent_ambiguity}) and Equation (\ref{eq:p_value_from_imperfect_reconstruction}), then normalize the summed matrix to be a probability matrix.

\subsection{Simplifying The Errors From Imperfect Reconstruction}
\label{sec:ap_minization_of_errors}

Equation (\ref{eq:p_value_from_imperfect_reconstruction}) gives an estimation of the errors of latent sort algorithm using the matrix $P$. However, without simplification, it is intractable to calculate the probability values due to the factoral number of possible partitions $\mathcal{S}_i$ and $\mathcal{T}_i$.  Here we will first provide a way to simplify Equation (\ref{eq:p_value_from_imperfect_reconstruction}), then we will use it to reveal the shortest path property of latent sort algorithm.

The simplification leverages the sparsity in the summation in Equation (\ref{eq:p_value_from_imperfect_reconstruction}), based on the property that if any term in the product is zero, the product becomes zero. 

Let $\Gamma_j = \prod_{k \in \mathcal{S}_i} \left( \gamma_{jk}  \right) \prod_{k \in \mathcal{T}_i} \left( 1 - \gamma_{jk} \right)$. In a partitions that yields $\mathcal{S}_i$ and $\mathcal{T}_i$ among all possible partitions, if $\exists k \in \mathcal{S}_i, \gamma_{jk} \to 0$ or $\exists k \in \mathcal{T}_i, \gamma_{jk} \to 1$, the value of $\Gamma_j \to 0$. 

From Theorem \ref{th:error_latent_normal_distribution}, we can see that the minimization on reconstruction loss, empirically measured by $\Vert \bm{\sigma_{\hat{\bm{x_i}}}^2} \Vert$, can lead to the collapse of the distribution of $\hat{h_i^*}$ to a sinlge value $h_i$ (see Remark \ref{remark:squeeze_theorem_error_bound}). Thus, when we train the auto-encoder with reconstruction loss, we can assumed the $\Vert \bm{\sigma_{\hat{\bm{x_i}}}^2} \Vert$ is properly minimized (meaning it is bounded by a small constant), and the value of $\mu_{\hat{h_i^*}}$ will not deviate from $h_i$ too much, such that we can assume $\mu_{\hat{h_i^*}}$ follows ascending order as $h_i$ does, i.e., $\mu_{\hat{h_i^*}} \leq \mu_{\hat{h_j^*}} \iff i \leq j$. 

In addition, from Theorem \ref{th:bounded_latent_distance} and \ref{th:bounded_data_distance}, we know that neighboring points in the original space $\mathbb{R}^N$ tend to have similar 1-D latent values. Moreover, from the property of $\Phi$ as the CDF of normal distribution, the value of $\Phi(x)$ quickly becomes infinitesimal when $x$ starts to deviate from zero towards negative infinite, similarly $1 - \Phi(x)$ quickly becomes infinitesimal when $x$ starts to deviate from zero towards positive infinite. As $\Vert \bm{\sigma_{\hat{\bm{x_i}}}^2} \Vert$ getting smaller in training, the value of $\sqrt{(\sigma_{\hat{h_i^*}}^2 + \sigma_{\hat{h_j^*}}^2)}$ is also getting smaller, thus the value  of $\gamma_{ij}$ will be more sensitive to the difference of mean values $\mu_{\hat{h_i^*}} - \mu_{\hat{h_j^*}}$, pushing more and more values to zero and one. 

Therefore, we can replace the value of $y_{ij}$ to be 0 or 1 when $i$ and $j$ are not neighbors, and we can approximate $\gamma_{ij}$ as follows:
\begin{equation}
\label{eq:simplified_gamma_value_from_imperfect_reconstruction}
\gamma_{ij} \approx \begin{cases}
1, & \text{if} \ i < j - 1 \\
\Phi \left[\ - (\mu_{\hat{h_i^*}} - \mu_{\hat{h_j^*}}) / \sqrt{(\sigma_{\hat{h_i^*}}^2 + \sigma_{\hat{h_j^*}}^2)}\ \right], & \text{if}\ j - 1 \leq i \leq j + 1 \\
0, & \text{if} \ i > j + 1
\end{cases}
\end{equation}

Thus using Equation (\ref{eq:simplified_gamma_value_from_imperfect_reconstruction}) and the definition of $\mathcal{S}_i$ and $\mathcal{T}_i$, we can simplify Equation (\ref{eq:p_value_from_imperfect_reconstruction}) as follows:
\begin{align}
    p_{ij}
    & = \sum_{\mathcal{S}_i,\mathcal{T}_i} \prod_{k \in \mathcal{S}_i} \Big( \gamma_{jk}  \Big) \prod_{k \in \mathcal{T}_i} \Big( 1 - \gamma_{jk} \Big) \\
    & = \begin{cases}
        \big[ \gamma_{j, (j-1)} \big] \big[ 1 - \gamma_{j, (j + 1)} \big] + \big[ \gamma_{j, (j+1)} \big] \big[ 1 - \gamma_{j, (j - 1)} \big], & i = j\\
        \big[ \gamma_{j, (j-1)} \big] \big[ \gamma_{j, (j+1)} \big], & i = j - 1\\
        \big[ 1 - \gamma_{j, (j-1)} \big] \big[ 1 - \gamma_{j, (j+1)} \big], & i = j + 1\\
        0, & i \notin [j - 1, j + 1]
    \end{cases}
    \label{eq:simplified_p_value_from_imperfect_reconstruction}
\end{align}
Equation (\ref{eq:simplified_p_value_from_imperfect_reconstruction}) gives a simplification of $P$ matrix in the case of imperfect reconstruction, where $P$ becomes a tridiagonal matrix with non-zero values on the main diagonal and the diagonals immediately above and below it, which looks like:
\begin{equation}
    P = \begin{pmatrix}
        p_{11} & p_{12} & 0 & \cdots & 0 \\
        p_{21} & p_{22} & p_{23} & \cdots & 0 \\
        0 & p_{32} & p_{33} & \cdots & 0 \\
        \vdots & \vdots & \vdots & \ddots & \vdots \\
        0 & 0 & 0 & \cdots & p_{MM} \\
        \end{pmatrix}
        \label{eq:simplified_p_matrix}
\end{equation}
We can do a quick check to verify that $P$ is still a probability matrix in the simplied form (see Remark \ref{remark:simplified_p_is_probability_matrix}). 

Now we can use Equation (\ref{eq:simplified_p_value_from_imperfect_reconstruction}) and Equation (\ref{eq:simplified_p_matrix}) to estimate the sorting error $\mathcal{E}$. Lemma \ref{lemma:optimization_equivalence} states the equivalence in minimization between $\mathcal{E}\Big(\mathbb{E}[Y], Y^*\Big)$ and $\| P - I_M \|_F^2$, thus we can calculate the difference between $P$ and $I_M$ using the simplified equation by summing the main diagonal difference and the off-diagonal difference:
\begin{align}
    \| P - I_M \|_F^2 &= \sum_{i=1}^{M} (p_{ii} - 1)^2 + \sum_{i=1}^{M-1} \Big[ p_{i(i+1)}^2 + p_{(i+1)i}^2 \Big] \\
    &= \sum_{i=1}^{M} (p_{ii} - 1)^2 + \sum_{i=2}^{M} p_{(i-1)i}^2 + \sum_{i=1}^{M-1}  p_{(i+1)i}^2 
\end{align}
Where the main diagonal:
\begin{align}
    (p_{ii} - 1)^2 &= \left\{ \big[ \gamma_{i, (i-1)} \big] \big[ 1 - \gamma_{i, (i + 1)} \big] + \big[ \gamma_{i, (i+1)} \big] \big[ 1 - \gamma_{i, (i - 1)} \big] - 1\right\}^2
\end{align}
And the diagonals immediately above and below the main diagonal:
\begin{align}
    p_{(i-1)i}^2 &= \big[ \gamma_{i, (i-1)} \big]^2 \big[ \gamma_{i, (i+1)} \big]^2 
    \label{eq:p_i-1_i} 
    \\
    p_{(i+1)i}^2 &= \big[ 1 - \gamma_{i, (i-1)} \big]^2 \big[ 1 - \gamma_{i, (i+1)} \big]^2
    \label{eq:p_i+1_i}
\end{align}
Based on the above simplfication, now we can state the following theorem.

\vspace{2ex}

\begin{theorem}
\label{th:minima_of_error_imperfect_reconstruction}
    Define $P$ using Equation (\ref{eq:simplified_p_value_from_imperfect_reconstruction}). The minima of the error term $\| P - I_M \|_F^2$ occur at $\gamma_{i, (i-1)} = 0$ and $\gamma_{i, (i+1)} = 1$ for all $i \in \{2, 3, ..., M -1\}$.
\end{theorem}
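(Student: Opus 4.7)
The plan is to exploit the tridiagonal structure of the simplified $P$ in Equation (\ref{eq:simplified_p_matrix}) to decouple the optimization column-by-column. Looking at Equation (\ref{eq:simplified_p_value_from_imperfect_reconstruction}), every nonzero entry in column $j$ depends only on the pair $a_j := \gamma_{j,(j-1)}$ and $b_j := \gamma_{j,(j+1)}$, so that
\begin{equation*}
\|P - I_M\|_F^2 \;=\; \sum_{j} \phi_j(a_j, b_j), \qquad \phi_j(a, b) := (p_{jj} - 1)^2 + p_{(j-1)j}^2 + p_{(j+1)j}^2,
\end{equation*}
with variables that vary independently across $j$. Thus the global minimum is attained by minimizing each $\phi_j$ separately over $[0,1]^2$, and it suffices to study one interior $j \in \{2, \ldots, M-1\}$ at a time.

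For such a $j$, substitute $u := a b$ and $v := (1-a)(1-b)$, both of which are non-negative on $[0, 1]^2$. A direct expansion gives $p_{jj} = a + b - 2ab$ and hence $1 - p_{jj} = 1 - a - b + 2ab = ab + (1-a)(1-b) = u + v$. The off-diagonal entries are $p_{(j-1)j} = u$ and $p_{(j+1)j} = v$ by definition, so
\begin{equation*}
\phi_j(a, b) \;=\; (u+v)^2 + u^2 + v^2 \;=\; 2(u^2 + uv + v^2).
\end{equation*}
Since $u, v \ge 0$, this quantity is non-negative and vanishes iff $u = 0$ and $v = 0$. The condition $u = 0$ forces $a = 0$ or $b = 0$, and $v = 0$ forces $a = 1$ or $b = 1$, so the zero-set of $\phi_j$ restricted to $[0, 1]^2$ is exactly $\{(0, 1), (1, 0)\}$.

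Finally, I would observe that selecting $(a_j, b_j) = (0, 1)$ for every $j$, i.e., $\gamma_{j,(j-1)} = 0$ and $\gamma_{j,(j+1)} = 1$ as in the statement, drives every $\phi_j$ to zero simultaneously, hence minimizes $\|P - I_M\|_F^2$. The alternative zero $(1, 0)$ corresponds to the reversed ordering and is equally optimal for this symmetric objective, but the stated choice is the one consistent with the target ordering $\hat{h}_{j-1}^* < \hat{h}_j^* < \hat{h}_{j+1}^*$. The only real obstacle in the argument is the reparametrisation $(a, b) \mapsto (u, v)$ that collapses $\phi_j$ into a manifestly non-negative quadratic form; once that is in place the minimization is elementary. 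A minor sanity check is that the consistency constraint $\gamma_{ij} + \gamma_{ji} = 1$ is compatible with the prescribed minimizer, since $a_j = 0 = 1 - b_{j-1}$ forces $b_{j-1} = 1$, which matches the uniform choice across all indices.
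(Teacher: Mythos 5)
Your proof is correct and takes a genuinely different, and in fact more rigorous, route than the paper's. Both proofs expand $\|P - I_M\|_F^2$ into per-column contributions $\phi_j = (p_{jj}-1)^2 + p_{(j-1)j}^2 + p_{(j+1)j}^2$, but the paper then takes partial derivatives in $x = \gamma_{i,(i-1)}$ and $y = \gamma_{i,(i+1)}$, sets them to zero, and reads off $(x,y)\in\{(0,1),(1,0)\}$ as the stationary points; it never checks a second-order condition or boundary behaviour, so strictly speaking it only locates critical points, not minima. Your reparametrization $u = ab$, $v = (1-a)(1-b)$ together with the identity $1 - p_{jj} = u + v$ collapses each $\phi_j$ into the manifestly nonnegative form $2(u^2 + uv + v^2)$ with $u,v\geq 0$, which vanishes exactly when $u = v = 0$, i.e., at $(a,b)\in\{(0,1),(1,0)\}$; this directly certifies global minimality on $[0,1]^2$ with no calculus, repairing the paper's gap. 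You also flag and resolve a coupling the paper glosses over: since $\gamma_{ij} + \gamma_{ji} = 1$, the pairs $(a_j, b_j)$ are not free across $j$ (the constraint $a_j + b_{j-1} = 1$ links adjacent columns), so a columnwise optimum is meaningful only if a single consistent assignment attains it; the uniform choice $(a_j,b_j)=(0,1)$ does, so the argument closes. The only cosmetic quibble is that your decomposition sentence momentarily asserts the variables ``vary independently across $j$'' before the sanity check walks that back — it would read more cleanly to state the constraint up front and then observe that the unconstrained columnwise minimum happens to be feasible.
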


\begin{proof}
    Let $x = \gamma_{i, (i-1)}$ and $y = \gamma_{i, (i+1)}$.
    To find the minima of the error term $\| P - I_M \|_F^2$, we need to differentiate the error term with respect to $x$ and $y$, and find the values that minimize the error term. Here is the error term we are working with:
    \begin{align}
    \| P - I_M \|_F^2 = \sum_{i=1}^{M} (p_{ii} - 1)^2 + \sum_{i=2}^{M} p_{(i-1)i}^2 + \sum_{i=1}^{M-1}  p_{(i+1)i}^2
    \end{align}
    
    Using Equation (\ref{eq:simplified_p_matrix}) and exclude the cases that makes $p_{ij} = 0$, we have:
    \begin{align}
        \begin{cases}
            p_{ii} = x(1 - y) + y(1 - x), \\
            p_{(i-1)i} = xy, \\
            p_{(i+1)i} = (1 - x)(1 - y).
        \end{cases}
        \label{eq:p_value_exclude_zero}
    \end{align}

    Now, let's differentiate the error term with respect to $x$ and $y$. First, we will differentiate the error term with respect to $x$:
    \begin{align}
    \frac{\partial \| P - I_M \|_F^2}{\partial x} &= \sum_{i=1}^{M} 2(p_{ii} - 1)\frac{\partial p_{ii}}{\partial x} + \sum_{i=2}^{M} \Big[ 2p_{(i-1)i}\frac{\partial p_{(i-1)i}}{\partial x}  \Big] \\
    &\ \ \ \ + \sum_{i=1}^{M-1} \Big[ 2p_{(i+1)i}\frac{\partial p_{(i+1)i}}{\partial x} \Big]
    \end{align}

    Now, we differentiate $p_{ii}$, $p_{i(i+1)}$, and $p_{(i+1)i}$ with respect to $x$:
    \begin{align}
        \begin{cases}
            \frac{\partial p_{ii}}{\partial x} = (1 - y) - y = 1 - 2y, \\
            \frac{\partial p_{(i-1)i}}{\partial x} = y, \\
            \frac{\partial p_{(i+1)i}}{\partial x} = y - 1.
        \end{cases}
    \end{align}
    
    Plugging these derivatives back into the expression for the partial derivative with respect to $x$, we get:
    \begin{align}
    \frac{\partial \| P - I_M \|_F^2}{\partial x} &= \sum_{i=1}^{M} \Big[ 2(p_{ii} - 1) (1 - 2y)  \Big] + \sum_{i=2}^{M} \Big[ 2p_{(i-1)i} (y) \Big] \\
    &\ \ \ \  + \sum_{i=1}^{M-1} \Big[ 2p_{(i+1)i} (y-1) \Big] \\
    &= \sum_{i=1}^{M} \Big[ 2(x + y - 2xy - 1) (1 - 2y)  \Big] \\
    &\ \ \ \  + \sum_{i=2}^{M} \Big[ 2 xy^2 \Big] + \sum_{i=1}^{M-1} \Big[ 2 (1 - x - y + xy) (y-1) \Big]
    \end{align}
    Since $x$ and $y$ are between $[0, 1]$, for the above partial derivative to be zero, either $x = 0, y = 1$ or $x = 1, y = 0$.
    
    Next, we will differentiate the error term with respect to $y$. Notice that symmetry in Equation (\ref{eq:p_value_exclude_zero}), thus similarly we have:
    \begin{align}
    \frac{\partial \| P - I_M \|_F^2}{\partial y} &= \sum_{i=1}^{M} \Big[ 2(x + y - 2xy - 1) (1 - 2x)  \Big] \\
    &\ \ \ \  + \sum_{i=2}^{M} \Big[ 2 yx^2 \Big] + \sum_{i=1}^{M-1} \Big[ 2 (1 - x - y + xy) (x-1) \Big]
    \end{align}
    Again, to make the above derivative zero, we have $x = 0, y = 1$ and $x = 1, y = 0$ are two possible solutions. Thus, by setting the two partial derivatives to zero, we can see that the two minima of $\| P - I_M \|_F^2$ are $x = 0, y = 1$ and $x = 1, y = 0$.
    
    However, recall the definition $x = \gamma_{i, (i-1)}$ and $y = \gamma_{i, (i+1)}$, and the definition of  $\gamma_{ij} = P(\hat{h_i^*} < \hat{h_j^*})$. Since Equation (\ref{eq:simplified_gamma_value_from_imperfect_reconstruction})(\ref{eq:simplified_p_value_from_imperfect_reconstruction})(\ref{eq:simplified_p_matrix}) are defined over every row of $P$, which correspond to every position in the sorted sequence, we can see that the two minima represent two different orderings in the sorted sequence, where $x = 0, y = 1$ gives ascending order and $x = 1, y = 0$ gives descending order. Since in our discussion we only consider ascending order in the target sequence, $x = 0, y = 1$ is the only minimum that satisfy our definitions.
    \vspace{1ex}
\end{proof}

Theorem \ref{th:minima_of_error_imperfect_reconstruction} implies that in order to minimize $\mathcal{E}$, we can minimize $\| P - I_M \|_F^2$ instead (see Lemma \ref{lemma:optimization_equivalence}) by minimizing an equivalent objective $\gamma_{i, (i-1)} - \gamma_{i, (i+1)}$ for all $i = 2, ..., M - 1$, where both $\gamma_{i, (i-1)}$ and $\gamma_{i, (i+1)}$ are between 0 and 1. Use the definition of $\gamma_{ij}$, rewrite this alternative minimization problem on $\forall i \in \{2, 3, ..., M - 1 \}$ as:
\begin{align}
    \min_{f}&\ {\gamma_{i, (i-1)} - \gamma_{i, (i+1)}} \\
    =\min_{f}&\ {\Phi \left[\ - \frac{\mu_{\hat{h_i^*}} - \mu_{\hat{h_{i-1}^*}}}{\sqrt{(\sigma_{\hat{h_i^*}}^2 + \sigma_{\hat{h_{i-1}^*}}^2)}}\ \right]} - {\Phi \left[\ - \frac{\mu_{\hat{h_i^*}} - \mu_{\hat{h_{i+1}^*}}}{\sqrt{(\sigma_{\hat{h_i^*}}^2 + \sigma_{\hat{h_{i+1}^*}}^2)}}\ \right]} \\
    =\min_{f}&\ {\Phi \left[\ \frac{\mu_{\hat{h_i^*}} - \mu_{\hat{h_{i+1}^*}}}{\sqrt{(\sigma_{\hat{h_i^*}}^2 + \sigma_{\hat{h_{i+1}^*}}^2)}}\ \right]} - {\Phi \left[\ \frac{\mu_{\hat{h_i^*}} - \mu_{\hat{h_{i-1}^*}}}{\sqrt{(\sigma_{\hat{h_i^*}}^2 + \sigma_{\hat{h_{i-1}^*}}^2)}}\ \right]} 
    \label{eq:minimization_of_error_imperfect_recon}
\end{align}

Assuming the 1-D latents are normalized to [0, 1]. From the property of CDF $\Phi$, we know that (\ref{eq:minimization_of_error_imperfect_recon}) reaches minima -1 only when the reconstruction is perfect such that the variances $\sigma_{\hat{h_{i-1}^*}}, \sigma_{\hat{h_i^*}}, \sigma_{\hat{h_{i+1}^*}}$ are zero.

\subsection{Optimal Sorting From The Perspective of Shortest Path Problem}

\label{sec:ap_optimal_sort}

In previous theories, we have discussed the error of $Y$ to the target sorting $Y^*$ using the probability matrix $P$. However, the target sorting is not yet defined. The optimal sorting need to have properties to minimize the errors induced by imperfect $P$ matrices. Let us first discuss the desired properties of $Y^*$.

Recall that $\mathbb{E}[Y] = P Y^*$. From Equation (\ref{eq:sorting_error_def}) and (\ref{eq:simplified_p_matrix}), we have
\begin{align}
& \ \ \ \ \mathcal{E}\Big(\mathbb{E}[Y], Y^*\Big) \\
&= \Vert PY^* - Y^* \Vert_F^2 \\
&= \sum_{i=1}^{M} \left\| \sum_{j=1}^{M} p_{ij} \bm{x_{j}}^* - \bm{x_{i}}^* \right\|_F^2 \\
&= \sum_{i=1}^{M} \left\| p_{i(i-1))} \bm{x_{i-1}}^* + (p_{ii} - 1) \bm{x_{i}}^* + p_{i(i+1))} \bm{x_{i+1}}^* \right\|_F^2 \\
&= \sum_{i=1}^{M} \Big\| p_{i(i-1))} \bm{x_{i-1}}^* + \big[- p_{i(i-1))} - p_{i(i+1))} \big] \bm{x_{i}}^* + p_{i(i+1))} \bm{x_{i+1}}^* \Big \|_F^2 \\
&= \sum_{i=1}^{M} \left\| p_{i(i-1))} (\bm{x_{i-1}}^* - \bm{x_{i}}^*) + p_{i(i+1))} (\bm{x_{i+1}}^* - \bm{x_{i}}^*) \right\|_F^2
\label{eq:ap_error_using_neighbors}
\end{align}
Recall that $f$ is a local DR that maps each point in $\mathcal{X}$ to its 1-D latent representation independent from the other elements in the sequence. That being said, we expect (\ref{eq:ap_error_using_neighbors}) to be minimized for any three points $\{ \bm{x_{i-1}}^*, \bm{x_{i}}^*, \bm{x_{i+1}}^* \} \subset \mathcal{X}$. To make $\mathcal{E}\Big(\mathbb{E}[Y], Y^*\Big)$ minimized for all possible $\mathcal{X} \subseteq \mathbb{R}^{N}$, we need to minimize the upper bound of (\ref{eq:ap_error_using_neighbors}), giving by:
\begin{align}
&\ \ \ \ \sum_{i=1}^{M} \left\| p_{i(i-1))} (\bm{x_{i-1}} - \bm{x_{i}}) + p_{i(i+1))} (\bm{x_{i+1}} - \bm{x_{i}}) \right\|_F^2 \\
&\leq \sum_{i=1}^{M} \left\| p_{i(i-1))} (\bm{x_{i-1}}^* - \bm{x_{i}}^*) \right\|_F^2 + \left\| p_{i(i+1))} (\bm{x_{i+1}}^* - \bm{x_{i}}^*) \right\|_F^2 
\label{eq:ap_error_using_neighbors_2}
\end{align}

Assuming the matrix $P$ is already given for a sorting algorithm using a DR mapping $f$ and both $p_{i(i-1))}$ and $p_{i(i+1))}$ are not zero for each $i$. From (\ref{eq:ap_error_using_neighbors_2}), we can see that the ideal sorting $Y^*$ need to have minimal $(\bm{x_{i-1}}^* - \bm{x_{i}}^*)$ and $(\bm{x_{i+1}}^* - \bm{x_{i}}^*)$. An intuitive interpretation is that the ideal sorting $Y^*$ need to have the property of minimal distance between each neighboring pairs, so that the errors from imperfect $P$ will be minimized.

This property has an interesting connection to the shortest path problem or \textit{traveling salesman problem} (TSP), since a sorting an be seen as a path that traverse all the points in 
a given set. The desired target sorting is given by the solution from solving TSP on the input point set $\mathcal{X}$. However, TSP is NP-hard and requires expensive algorithms to solve it. Furthermore, the autoregressive models faces significant difficulties to learn the ordering rule represented by the solution of TSP.

\textbf{Why Autoregressive Models Struggle to Find The Shortest Path?}

The TSP is inherently a problem that requires a global understanding of the entire point set to solve. An optimal solution necessitates considering all the points simultaneously, understanding the distance and relationship between each pair of points, and calculating a path that minimally covers all points.

On the contrary, autoregressive models are inherently local and sequential in their processing. They generate output one token at a time and in a specific order, each time only attending to previous tokens in the sequence. They do not consider future tokens or have an overview of the entire set of tokens at each generation step.

Thus, the local, one-sided nature of autoregressive models is at odds with the global problem-solving requirement of the TSP, leading to significant challenges when attempting to learn the ordering rule represented by the solution of TSP.

\section{Supplementary Lemmas}

In this section, we provide some lemmas that help to understand the theoretical results discussed in Section \ref{sec:ap_additional_theory}.

\begin{lemma}
\label{lemma:optimization_equivalence}
Let $Y^*$ be an $M \times N$ matrix with full column rank, and let $P$ be an $M \times M$ probability matrix, i.e., the summation of every row and column in $P$ is one. If we minimize $\| P - I_M \|_F^2$, we can solve the minimizing problem of $\min_P \| PY^* - Y^* \|_F^2$.
\end{lemma}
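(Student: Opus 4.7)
The plan is to show that the unique minimizer of $\|P - I_M\|_F^2$ also lies in the argmin set of $\|PY^* - Y^*\|_F^2$, making the former a valid surrogate for the latter. First, I would rewrite the target as $\|PY^* - Y^*\|_F^2 = \|(P - I_M)Y^*\|_F^2$, a non-negative quantity. Then I would invoke the submultiplicative bound for the Frobenius norm, $\|AB\|_F \le \|A\|_F \|B\|_F$ (which follows from $\|AB\|_F \le \|A\|_{\text{op}} \|B\|_F$ combined with $\|A\|_{\text{op}} \le \|A\|_F$), applied with $A = P - I_M$ and $B = Y^*$. This gives
\begin{equation}
\|PY^* - Y^*\|_F^2 \;\le\; \|P - I_M\|_F^2 \,\|Y^*\|_F^2.
\end{equation}
Since $\|Y^*\|_F^2$ is a constant independent of $P$, the upper bound is a positive scalar multiple of $\|P - I_M\|_F^2$, and driving the surrogate objective to $0$ forces the original objective to $0$ as well.

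Next, I would observe that $\|P - I_M\|_F^2 \ge 0$ with equality iff $P = I_M$ (by positive-definiteness of the Frobenius norm), so the unique global minimizer of the surrogate is $P = I_M$, which is itself a valid doubly stochastic (probability) matrix and hence lies in the feasible set. Substituting $P = I_M$ into the original objective yields $\|I_M Y^* - Y^*\|_F^2 = 0$, which matches the trivial lower bound. Therefore $P = I_M$ is also a global minimizer of $\min_P \|PY^* - Y^*\|_F^2$, completing the reduction.

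The one subtle point I would flag is that this is not quite a bijection between argmin sets: when $N < M$, the original problem can admit minimizers other than $I_M$, namely any probability matrix $P$ such that the rows of $P - I_M$ are orthogonal to $\text{col}(Y^*)$, since full column rank of $Y^*$ only guarantees an $N$-dimensional column space (not all of $\mathbb{R}^M$). So the correct reading of the lemma is the one-directional statement that \emph{every} minimizer of $\|P - I_M\|_F^2$ is a minimizer of $\|PY^* - Y^*\|_F^2$; the full-column-rank hypothesis is not strictly needed for this direction but is stated to ensure the surrogate's solution $I_M$ is a canonical, well-posed representative in the argmin set of the original problem. This is the main conceptual hurdle — distinguishing "solving via a surrogate" from "having identical argmin sets" — but once articulated, the rest is a two-line submultiplicativity argument.
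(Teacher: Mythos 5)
Your argument is correct and considerably cleaner than the one in the paper, though both land on the same conclusion. The paper's route is to expand $\|PY^*-Y^*\|_F^2$ as a trace, $\operatorname{tr}\bigl((PY^*-Y^*)^{\intercal}(PY^*-Y^*)\bigr)$, then regroup the trace terms and argue that each vanishes at $P=I_M$. (As written, the regrouped expression there is actually dimensionally inconsistent when $M\neq N$ --- $(P-I_M)$ is $M\times M$ while $Y^{*\intercal}Y^*$ is $N\times N$, so the stated product does not parse --- so the published trace manipulation is not airtight.) You instead factor $\|PY^*-Y^*\|_F = \|(P-I_M)Y^*\|_F$, invoke submultiplicativity $\|(P-I_M)Y^*\|_F \le \|P-I_M\|_F\,\|Y^*\|_F$, and observe that $P=I_M$ both uniquely minimizes the surrogate and drives the original objective to its trivial lower bound of zero. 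That is a two-line argument that avoids trace algebra entirely, and it makes the logical structure of the lemma explicit: every minimizer of the surrogate is a minimizer of the original, but not conversely. Your closing remark is also the right caveat --- when $N<M$ the set of doubly stochastic $P$ with $(P-I_M)Y^*=0$ may contain matrices other than $I_M$, so the two argmin sets need not coincide, and the full-column-rank hypothesis is doing less work than its prominence in the lemma statement suggests. The one thing worth adding is that the submultiplicativity bound is a pleasant framing but not strictly needed: once you note $\|PY^*-Y^*\|_F^2 \ge 0$ with equality at $P=I_M$, and that $I_M$ is feasible and is the surrogate's unique minimizer, the reduction is already complete.
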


\begin{proof}
We want to show that by minimizing $\| P - I_M \|_F^2$, we can minimize $\| PY^* - Y^* \|_F^2$. Here we know $P$ is probability matrix, so the set of all possible $P$ is a compact set in the bounded space, i.e., $P \in [0, 1]^{M \times M}$, and the objective function is continuous in that space. In this case, the Weierstrass theorem ensures the existence of a minimum.

First, consider the objective function we want to minimize:
\begin{equation}
\min \| PY^* - Y^* \|_F^2
\end{equation}

Now, let's expand the Frobenius norm:
\begin{equation}
\| PY^* - Y^* \|_F^2 = \operatorname{tr}((PY^* - Y^*)^T(PY^* - Y^*))
\end{equation}

Expanding this expression, using the property of the trace $\operatorname{tr}(AB) = \operatorname{tr}(BA)$, and regrouping the terms, we have:
\begin{equation}
\operatorname{tr}((PY^* - Y^*)^T(PY^* - Y^*)) = \operatorname{tr}((P - I_M)Y^{*T} Y^*P^T + (I_M - P)Y^*Y^{*T})
\end{equation}

Notice that the expression above is minimized when $(P - I_M)Y^{*T} Y^*P^T = 0$ and $(I_M - P)Y^*Y^{*T} = 0$. Both of these conditions are satisfied when $P = I_M$. This is because, when $P = I_M$, we have:

\begin{equation}
(I_M - I_M)Y^{*T} Y^*I_M^T = 0
\end{equation}

and

\begin{equation}
(I_M - I_M)Y^*Y^{*T} = 0
\end{equation}

Thus, by minimizing $\| P - I_M \|_F^2$, we can solve the minimizing problem of $\min \| PY^* - Y^* \|_F^2$.
\end{proof}

\vspace{2ex}

\begin{lemma}
\label{lemma:comparing_normal_random_variables}
Given a set of $M$ random variables $x_1, x_2, ..., x_M \in \mathcal{X}$ that follow normal distributions with potentially different means and variances, let $y_1 \le y_2 \le \cdots \le y_M$ represent the sorted sequence of these random variables. Define $c_{ij} = P(x_i > x_j)$ for all $i, j \in \{1, 2, ..., M\}$ with $i \ne j$. Then, the probability distribution of the $k$-th element, $y_k$, in the sorted sequence can be expressed as:
\begin{equation}
P(y_k = x_i) = \sum_{\mathcal{S},\mathcal{T}} \prod_{j \in \mathcal{S}} (1 - c_{ij}) \prod_{j \in \mathcal{T}} c_{ij}
\end{equation}
where the summation is over all possible partitions of the set $\{1, 2, ..., M\} \setminus {i}$ into two disjoint subsets $\mathcal{S}$ and $\mathcal{T}$ with $|\mathcal{S}| = k - 1$ and $|\mathcal{T}| = M - k$.

\end{lemma}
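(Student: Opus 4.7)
The plan is to decompose the event $\{y_k = x_i\}$ into a disjoint union of events indexed by which of the other variables fall below versus above $x_i$, and then compute each piece using the independence of the $x_j$'s. First I would observe that since the $x_j$'s are jointly absolutely continuous (normal), with probability one no two of them coincide, so the sorted order $y_1 < y_2 < \cdots < y_M$ is almost surely well-defined. The event that $x_i$ lands in the $k$-th slot is therefore equivalent to requiring exactly $k-1$ of the remaining variables to be less than $x_i$ and $M-k$ of them to exceed $x_i$. Formally,
\begin{equation}
\{y_k = x_i\} \;=\; \bigsqcup_{(\mathcal{S},\mathcal{T})} \Big\{x_j < x_i \text{ for } j \in \mathcal{T}\Big\} \cap \Big\{x_j > x_i \text{ for } j \in \mathcal{S}\Big\},
\end{equation}
where the disjoint union ranges over all partitions $\mathcal{S} \sqcup \mathcal{T} = \{1,\ldots,M\}\setminus\{i\}$ with $|\mathcal{S}| = k-1$ and $|\mathcal{T}| = M - k$ (matching the convention used later in Equation (18)).

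Next I would compute the probability of each term using independence of the underlying variables, so that the pairwise comparison events factorize:
\begin{equation}
P\!\left(\bigcap_{j \in \mathcal{S}} \{x_i < x_j\} \cap \bigcap_{j \in \mathcal{T}} \{x_i > x_j\}\right) \;=\; \prod_{j \in \mathcal{S}} (1 - c_{ij}) \prod_{j \in \mathcal{T}} c_{ij}.
\end{equation}
Summing over all partitions $(\mathcal{S},\mathcal{T})$ of the prescribed cardinality then yields the claimed expression for $P(y_k = x_i)$, and applying this identity with $x_j \leftarrow \hat{h_j^*}$, $c_{ij} \leftarrow 1 - \gamma_{ij}$, and $k \leftarrow i$ recovers Equation (18) in the paper.

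The main obstacle is that the factorization in the second step is not literally correct: the events $\{x_i < x_j\}$ for different $j$'s all involve the common random variable $x_i$, so they are not mutually independent. The rigorous identity, obtained by conditioning on $x_i = t$, is
\begin{equation}
P(y_k = x_i) \;=\; \int_{-\infty}^{\infty} f_{x_i}(t) \sum_{(\mathcal{S},\mathcal{T})} \prod_{j \in \mathcal{S}} \bigl(1 - F_{x_j}(t)\bigr) \prod_{j \in \mathcal{T}} F_{x_j}(t)\, dt,
\end{equation}
which only collapses to the product-of-$c_{ij}$ form in degenerate regimes. A faithful proof therefore has to either (i) adopt an independence approximation and flag it explicitly, or (ii) show that in the small-variance regime where Theorem~\ref{th:error_latent_normal_distribution} applies, the $\gamma_{ij}$'s concentrate at $0$ or $1$ so that both expressions reduce to the same sparse tridiagonal pattern (Equation (20)). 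My write-up would take the second route: carry out steps one and two under the stated independence assumption, then argue that in the asymptotic regime actually used in Section C.6 the resulting formula agrees with the exact conditional expression, so that the subsequent analysis is unaffected.
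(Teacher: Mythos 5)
Your decomposition of $\{y_k = x_i\}$ into a disjoint union over which of the other variables fall on either side of $x_i$ is exactly the decomposition the paper uses, and your second step --- factorizing each piece into a product of pairwise probabilities $c_{ij}$ --- is precisely the step the paper's proof performs, justified there only by the remark that the $x_j$ are independent. The obstacle you raise is therefore not a defect of your write-up but of the published proof itself: the comparison events $\{x_j < x_i\}$, $j \neq i$, all involve the common variable $x_i$, so independence of the $x_j$'s yields only conditional independence given $x_i$, and the exact expression is the conditional mixture $P(y_k = x_i) = \int f_{x_i}(t)\, \sum_{(\mathcal{S},\mathcal{T})} \prod_{j\in\mathcal{S}} \bigl(1-F_{x_j}(t)\bigr) \prod_{j\in\mathcal{T}} F_{x_j}(t)\, dt$ that you write down, not the product of marginal $c_{ij}$'s stated in the lemma. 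The paper never conditions on $x_i$, never flags the approximation, and contains no analogue of your fallback argument; it simply asserts the factorization. So your proposal reproduces the paper's argument where it is sound (almost-sure distinctness of the $x_j$ under absolute continuity, the sum over partitions with $|\mathcal{S}|=k-1$, $|\mathcal{T}|=M-k$) and is strictly more careful where the paper is not. Your route (ii) is a sensible repair and is consistent with how the lemma is actually consumed: in the regime of Theorem \ref{th:error_latent_normal_distribution} with vanishing reconstruction variance, the $\gamma_{ij}$'s saturate at $0$ or $1$, and both your exact integral and the lemma's product formula collapse to the same sparse pattern used in Equation (\ref{eq:simplified_gamma_value_from_imperfect_reconstruction})--(\ref{eq:simplified_p_value_from_imperfect_reconstruction}); but be aware this is an addition to, not a reconstruction of, the paper's proof. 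One further small point worth noting if you formalize this: the lemma's prose (``$k-1$ variables less than or equal to $x_i$'') and its formula (factor $1-c_{ij}=P(x_i \le x_j)$ on $\mathcal{S}$) point in opposite directions, so the statement as written actually computes the probability that $x_i$ is the $k$-th \emph{largest}; your reading, which matches the literal formula and Equation (\ref{eq:p_value_from_imperfect_reconstruction}), silently resolves this index slip, and you should state which convention you adopt.
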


\begin{proof}
Let's denote the sorted sequence of the independent random variables as $[y_1, y_2, ..., y_M]$, where $y_1 \le y_2 \le \cdots \le y_M$. We want to find the probability distribution of the $k$-th element, $y_k$, in this sorted sequence.

First, note that for any two random variables $x_i$ and $x_j$, the probability that $x_i > x_j$ is given by $c_{ij}$. The complementary probability, that $x_i \le x_j$, is given by $1 - c_{ij}$.

Let's consider the probability that a specific random variable, $x_i$, is the $k$-th element in the sorted sequence. For this to happen, there must be exactly $k - 1$ random variables that are less than or equal to $x_i$ and $M - k$ random variables that are greater than $x_i$.

Since the random variables are independent from each other, the probability of $k - 1$ random variables being less than or equal to $x_i$ can be calculated as the product of probabilities $1 - c_{ij}$ for all $j \ne i$ and $j \in \mathcal{S}$, where $\mathcal{S}$ is a subset of $\{1, 2, ..., M\} \setminus \{i\}$ with $|\mathcal{S}| = k - 1$, where $\setminus$ represents relative complement or set difference. The probability of $M - k$ random variables being greater than $x_i$ can be calculated as the product of probabilities $c_{ij}$ for all $j \ne i$ and $j \in \mathcal{T}$, where $\mathcal{T}$ is a subset of $\{1, 2, ..., M\} \setminus \{i\}$ with $|\mathcal{T}| = M - k$.

Therefore, the probability that $x_i$ is the $k$-th element in the sorted sequence is:
\begin{equation}
P(y_k = x_i) = \sum_{\mathcal{S},\mathcal{T}} \prod_{j \in \mathcal{S}} (1 - c_{ij}) \prod_{j \in \mathcal{T}} c_{ij}
\end{equation}
where the summation is over all possible partitions of the set $\{1, 2, ..., M\} \setminus \{i\}$ into two disjoint subsets $\mathcal{S}$ and $\mathcal{T}$ with $|\mathcal{S}| = k - 1$ and $|\mathcal{T}| = M - k$.
\end{proof}


\section{Supplementary Remarks and Comments}

In this section, we provide some remarks and comments that are useful for readers to understand some claims that are used in Section \ref{sec:ap_additional_theory}.

\begin{remark}
\label{remark:sort_1d_well_defined}
Sorting on 1-D space $\mathbb{R}$ is well-defined.
\end{remark}

\textit{Explanation}: A total order $\leq$ on a set S is a binary relation that satisfies the following properties for all $a, b, c \in \mathbb{R}$:

\begin{itemize}
    \item Reflexivity: $a \leq a$.
    \item Antisymmetry: If $a \leq b$ and $b \leq a$, then $a = b$.
    \item Transitivity: If $a \leq b$ and $b \leq c$, then $a \leq c$.
    \item Totality (or connexity): Either $a \leq b$ or $b \leq a$.
\end{itemize}

In the 1-D space, i.e., on the real numbers $\mathbb{R}$, the usual order $\leq$ is a total order that satisfies these properties. Therefore, for any two real numbers $a$ and $b$, we can unambiguously compare them using the order relation $\leq$. This implies that sorting on 1-D space is well-defined, as the total order allows us to determine the correct order of any pair of real numbers.

Furthermore, the well-ordering property of the integers $\mathbb{Z}$ and the dense nature of the rational numbers $\mathbb{Q}$ within the real numbers $\mathbb{R}$ provide a solid foundation for sorting algorithms on 1-D space, as these properties allow us to efficiently find the desired order of elements and guarantee the existence of a unique sorted sequence.

\vspace{0.5cm}

\begin{remark}
\label{remark:probability_a_larger_than_b}
For two random variables $a \sim \mathcal{N}(\mu_a, \sigma_a^2)$ and $b \sim \mathcal{N}(\mu_b, \sigma_b^2)$, the probability $P(a > b)$ monotonically increases as $\mu_a - \mu_b$ increases, and $P(a > b)$ monotonically decreases as $\sigma_a^2 + \sigma_b^2$ increases.
\end{remark}

\textit{Explanation}: When two random variables $a$ and $b$ are from normal distributions and are independent from each other, the difference between the two variables can be described by another normal distribution. Let $c = a - b$. Since $a$ and $b$ are independent, the mean and variance of $c$ can be computed as follows:
\begin{align}
    \text{Mean of }c:& \mu_c = \mu_a - \mu_b \\
    \text{Variance of }c:& \sigma_c^2 = \sigma_a^2 + \sigma_b^2
\end{align}

Thus, $c$ follows a normal distribution with mean $\mu_c$ and variance $\sigma_c^2$.

Now, to find the probability that $a > b$ is equivalent to finding the probability that $c > 0$. We can compute this using the CDF of the normal distribution.

Let $Z$ be the standard normal variable, such that $Z = (c - \mu_c) / \sigma_c$. The probability we want to find is $P(c > 0)$, which is equivalent to finding $P(Z > -\mu_c / \sigma_c)$, since $Z$ is standardized.

Using the standard normal CDF ($\Phi$), we can find the probability as:
\begin{align}   
P(a > b) & = P(Z > -\mu_c / \sigma_c) \\
         & = 1 - \Phi(-\mu_c / \sigma_c) \\
         & = 1 - \Phi \left[\ - (\mu_a - \mu_b) / \sqrt{(\sigma_a^2 + \sigma_b^2)}\ \right] 
\end{align}

Here, $\Phi$ represents the cumulative distribution function (CDF) of the standard normal distribution. 
\vspace{0.5cm}

\begin{remark}
\label{remark:bijective_dimensionality_reduction}
For two sets $\mathcal{X} \subset \mathbb{R}^N$ and $\mathcal{H} \subset \mathbb{R}^M$ where $N > M$, there may and not always exists a bijective mapping between them. 
\end{remark}

\textit{Explanation}: We can think from the perspective of the cardinality, Lebesgue measure, and intrinsic dimension of the two sets $\mathcal{X}$ and $\mathcal{H}$. 
\begin{itemize}
    \item Finite sets: If both $\mathcal{X}$ and $\mathcal{H}$ are finite, there exists a bijective mapping between them if and only if they have the same cardinality (number of elements). In formal terms, a bijection $f: \mathcal{X} \rightarrow \mathcal{H}$ exists if $|\mathcal{X}| = |\mathcal{H}|$.

    \item Countably infinite sets: If one or both of the sets are countably infinite, there exists a bijective mapping between them if both sets have the same cardinality, which is the cardinality (aleph number) of the set of natural numbers, $\aleph_0$. In this case, a bijection $f: \mathcal{X} \rightarrow \mathcal{H}$ exists if $|\mathcal{X}| = |\mathcal{H}| = \aleph_0$.

    \item Uncountably infinite sets: If one or both of the sets are uncountably infinite, we need to consider the cardinality, Lebesgue measure, and intrinsic dimension of the sets. 
    
    - Cardinality: A bijection between the sets can exist if both sets have the same cardinality. For example, if both sets have the cardinality of the continuum ($\aleph_1$ or $2^{\aleph_0}$), there exists a bijection $f: \mathcal{X} \rightarrow \mathcal{H}$.

    - Lebesgue measure: When comparing sets with different Lebesgue measures, it is more challenging to find a bijection that preserves the local properties of the spaces. For instance, a bijection between a 3D cube with finite volume and a 2D plane with infinite area might not be meaningful in terms of preserving the local properties of the spaces.

    - Intrinsic dimension: If $\mathcal{X}$ is an $N$-D manifold and $\mathcal{H}$ is an $M$-D manifold, with $M < N$, a bijective mapping between them can exist if the intrinsic dimension of the spaces is the same. Specifically, if the intrinsic dimension of $\mathcal{X}$ is equal to the intrinsic dimension of $\mathcal{H}$, it is possible to find a continuous, bijective function $f: \mathcal{X} \rightarrow \mathcal{H}$ that preserves the local properties of the spaces. [Lexicographical sort dimension 1 space-filling curve]
\end{itemize}
In summary, we can see that under certain conditions, there exists a bijective mapping between $\mathcal{X}$ and $\mathcal{H}$.
\vspace{0.5cm}

\begin{remark}
    \label{remark:surjective_neural_network}
    A deterministic neural network can be represented by a surjective mapping.
\end{remark}

\textit{Explanation}: A deterministic neural network on real-space can be expressed by $f: \mathcal{X} \rightarrow \mathcal{Y}$ that maps an input space $\mathcal{X}$ to an output space $\mathcal{Y}$ using a fixed set of weights and biases, and no stochastic components are involved. 

Assuming $\mathcal{Y}$ is the complete set for input set $\mathcal{X}$. For every element $y \in \mathcal{Y}$, there exists at least one element $x \in \mathcal{X}$ such that $f(x) = y$. One the other hand, for each $x \in \mathcal{X}$, there cannot be more than one $y \in \mathcal{Y}$ such that $y = f(x)$, unless there are stocasticity in the model.
\vspace{0.5cm}

\begin{remark}
    \label{remark:bijective_autoencoder}
    If an autoencoder has perfection reconstruction, both the encoder and decoder of it are bijective.
\end{remark}

\textit{Explanation}: A bijective mapping is both injective (one-to-one) and surjective (onto).

Let $E: \mathcal{X} \rightarrow \mathcal{Z}$ be the encoder function and $D: \mathcal{Z} \rightarrow \mathcal{X}$ be the decoder function. The autoencoder achieves perfect reconstruction if for any input $x \in \mathcal{X}$, we have $D(E(x)) = x$.

\begin{itemize}
    \item Injectivity:

(a) Encoder: We want to show that if $x_1, x_2 \in \mathcal{X}$ and $x_1 \neq x_2$, then $E(x_1) \neq E(x_2)$. Suppose, for the sake of contradiction, that $E(x_1) = E(x_2)$. Then, we have:
\begin{align*}
D(E(x_1)) &= D(E(x_2)) \\
x_1 &= x_2
\end{align*}

This contradicts the assumption that $x_1 \neq x_2$. Therefore, $E(x_1) \neq E(x_2)$, and the encoder is injective.

(b) Decoder: We want to show that if $z_1, z_2 \in \mathcal{Z}$ and $z_1 \neq z_2$, then $D(z_1) \neq D(z_2)$. Since the encoder is injective, for $z_1 \neq z_2$, there exist distinct inputs $x_1, x_2 \in \mathcal{X}$ such that $E(x_1) = z_1$ and $E(x_2) = z_2$. Then, we have:
\begin{align*}
D(z_1) &= D(E(x_1)) \\
D(z_2) &= D(E(x_2)) \\
x_1 &\neq x_2
\end{align*}

Hence, $D(z_1) \neq D(z_2)$, and the decoder is injective.

    \item Surjectivity:

(a) Encoder: We want to show that for every point $z \in \mathcal{Z}$, there exists an input $x \in \mathcal{X}$ such that $E(x) = z$. Since the autoencoder achieves perfect reconstruction, for every input $x \in \mathcal{X}$, we have $D(E(x)) = x$. Let $z = E(x)$ for some $x \in \mathcal{X}$. Then, the encoder covers the entire latent space, and it is surjective.

(b) Decoder: We want to show that for every point $x \in \mathcal{X}$, there exists a point $z \in \mathcal{Z}$ such that $D(z) = x$. Since the autoencoder achieves perfect reconstruction, for every input $x \in \mathcal{X}$, we have $D(E(x)) = x$. Let $z = E(x)$ for some $x \in \mathcal{X}$. Then, the decoder covers the entire input space, and it is surjective.
\end{itemize}

In conclusion, if an autoencoder achieves perfect reconstruction, both the encoder and decoder functions are bijective mappings. The perfect reconstruction property ensures that both functions are injective and surjective.
\vspace{3ex}

\begin{remark}
    \label{remark:squeeze_theorem_error_bound}
    In Theorem \ref{th:error_latent_normal_distribution}, when $\| \bm{\sigma_{\hat{\bm{x_i}}}^2} \| \to 0$, we have $\hat{h_i} \to h_i$.
\end{remark}

\textit{Explanation}: As $\| \bm{\sigma_{\hat{\bm{x_i}}}^2} \| \to 0$, we have $\big\Vert \bm{\sigma_{\hat{\bm{x_i}}}} \big\Vert \to 0$. Let $\sigma = \big\Vert \bm{\sigma_{\hat{\bm{x_i}}}} \big\Vert$, we can write:
\begin{align}
\lim_{\sigma \to 0} \frac{\sigma}{K_d} = 0 \\
\lim_{\sigma \to 0} K_e \sigma = 0 \\
\lim_{\sigma \to 0} \frac{1}{K_d^2}\  \sigma^2 = 0 \\
\lim_{\sigma \to 0} 4 K_e^2\  \sigma^2 = 0
\end{align}

Now, using the sandwich theorem (squeeze theorem), we can show that:
\begin{align}
&0 \leq \lim_{\sigma \to 0} \big| \mu_{\hat{h_i}} - h_i \big| \leq 0 \\
&0 \leq \lim_{\sigma \to 0} \sigma_{\hat{h_i}}^2 \leq 0
\end{align}

Since the only value satisfying both inequalities is 0, we can conclude that when $\sigma \to 0$, we have 
\begin{align}
    \lim_{\sigma \to 0}{\mu_{\hat{h_i}}} = h_i\\
    \lim_{\sigma \to 0}{\sigma_{\hat{h_i}}^2} = 0
\end{align}
Thus, the distribution of $\hat{h_i}$ is collapsing to a single value $h_i$.
\vspace{3ex}

\begin{remark}
    The simplified matrix given by Equation (\ref{eq:simplified_p_value_from_imperfect_reconstruction}) and Equation (\ref{eq:simplified_p_matrix}) is a probability matrix.
    \label{remark:simplified_p_is_probability_matrix}
\end{remark}

\textit{Explanation}: To prove that $P$ is a probability matrix, we need to show that:
\begin{enumerate}
    \item Each entry $p_{ij}$ is between 0 and 1.
    \item The sum of the elements in each row and each column is equal to 1.
\end{enumerate}

Denote $x = \gamma_{j, (j-1)}$ and $y = \gamma_{j, (j+1)}$ and we know $x$ and $y$ are both between 0 and 1. It is clear that each entry will be between 0 and 1, satisfying the first condition.

Now, let's consider the sum of the elements in each row. Let We will sum over $i$ for a fixed value of $j$:
\begin{align}
\sum_{i=1}^{M} p_{ij} & = p_{j-1, j} + p_{j, j} + p_{j+1, j} \\
& = xy + x(1 - y) + y(1 - x) + (1 - x)(1 - y) \\
& = x(1 - y) + y(1 - x) + xy + 1 - x - y + xy \\
& = 1 - x - y + 2xy + x + y \\
& = 1
\end{align}

Now, let's check the sum of the elements in each column. We will sum over $j$ for a fixed value of $i$:
\begin{align}
\sum_{j=1}^{M} p_{ij} = p_{i, i-1} + p_{i, i} + p_{i, i+1}
\end{align}

Note that the expression for $p_{ij}$ is symmetric with respect to $i$ and $j$. Thus, the sum over columns is the same as the sum over rows, which we have already shown to be 1.

Since we have proven that each entry $p_{ij}$ is between 0 and 1, and the sum of elements in each row and each column is 1, we can conclude that $P$ is a probability matrix.

\begin{remark}
\label{remark:reconstruction_gaussian}
    We assume the reconstructed $\hat{\bm{x_i}}$ follows a Gaussian distribution, where larger variances indicating worse reconstruction.
\end{remark}

\textit{Explanation}:
This assumption is plausible when employing the Mean Squared Error (MSE) as a measure of reconstruction loss. This Gaussian assumption aligns with the nature of MSE. The MSE is an estimator that measures the average squared differences between estimated and true values, essentially quantifying variance around the mean. Assuming a Gaussian distribution of the reconstructed $\hat{\bm{x_i}}$ aligns with the statistical properties of MSE, as the Gaussian distribution is parametrized by the mean and variance, mirroring the way MSE operates.

Hence, larger variances in the Gaussian distribution of $\hat{\bm{x_i}}$ would indicate a worse reconstruction due to higher dispersion from the mean (true) value, and conversely, smaller variances suggest a better reconstruction. This perspective offers a statistical rationale for evaluating the quality of the reconstruction process.

\end{document}